\newtheorem{theorem}{Theorem}
\newtheorem{definition}{Definition}
\newtheorem{corollary}{Corollary}
\newtheorem{proposition}{Proposition}
\newtheorem{remark}{Remark}
\newtheorem{assumption}{Assumption}
\newtheorem{case}{Case}
\title{Curious Causality-Seeking Agents in \\ Open-ended Worlds}
\author{
\textbf{Zhiyu Zhao$^{1}$, Haoxuan Li$^{2}$, Haifeng Zhang$^{3, 4}$, Jun Wang$^5$, Francesco Faccio$^{6,7}$}, \\ \textbf{Jürgen Schmidhuber$^{6,7}$ Mengyue Yang$^1$\thanks{Corresponding author: mengyue.yang@bristol.ac.uk}} \\
$^1$University of Bristol \quad
$^2$Peking University \\\
$^3$Institute of Automation, Chinese Academy of Sciences \\
$^4$School of Artificial Intelligence, Chinese Academy of Sciences \\
$^5$ University College London \\
$^6$King Abdullah University of Science and Technology \\
$^7$The Swiss AI Lab, IDSIA-USI/SUPSI
}
\begin{document}

\maketitle

\begin{abstract}


    When building a world model, a common assumption is that the environment has a single, unchanging underlying causal rule, like applying Newton's laws to every situation. However, in truly open-ended environments, the apparent causal mechanism may drift over time because the agent continually encounters novel contexts and operates within a limited observational window. This brings about a problem that, when building a world model, even subtle shifts in policy or environment states can alter the very observed causal mechanisms.
    In this work, we introduce the \textbf{Meta-Causal Graph} as world models for open-ended environments, a minimal unified representation that efficiently encodes the transformation rules governing how causal structures shift across different latent world states. A single Meta-Causal Graph is composed of multiple causal subgraphs, each triggered by meta state, which is in the latent state space. Building on this representation, we introduce a \textbf{Causality-Seeking Agent} whose objectives are to (1) identify the meta states that trigger each subgraph, (2) discover the corresponding causal relationships by agent curiosity-driven intervention policy, and (3) iteratively refine the Meta-Causal Graph through ongoing curiosity-driven exploration and agent experiences. Experiments on both synthetic tasks and a challenging robot arm manipulation task demonstrate that our method robustly captures shifts in causal dynamics and generalizes effectively to previously unseen contexts.

    \end{abstract}
    
    \maketitle
    
    \section{Introduction}
    
    
    

    World models~\citep{ha2018world, schmidhuber1990making, sutton1991dyna} have emerged as a critical component in reinforcement learning, enabling agents to simulate and plan in complex environments. These models aim to capture the underlying dynamics of the environment, allowing agents to predict future states and evaluate potential actions~\citep{hafner2020mastering,ha2018world,schmidhuber1990making,pearce2024scaling}. However, simply learning which variables correlate can mislead an agent when the world's dynamics change. Recently, causality has been adopted to improve world models for interactive and complex environments, since causal rules—describing how one factor brings about another—capture the underlying data-generation process and yield more robust and generalizable decision-making \cite{janzing2010causal,richens2024robust}. While many existing approaches incorporate notions of causality, they typically fail to capture the open-world causal relationships. 

    \begin{figure*}
    \centering
    \includegraphics[width=1\linewidth]{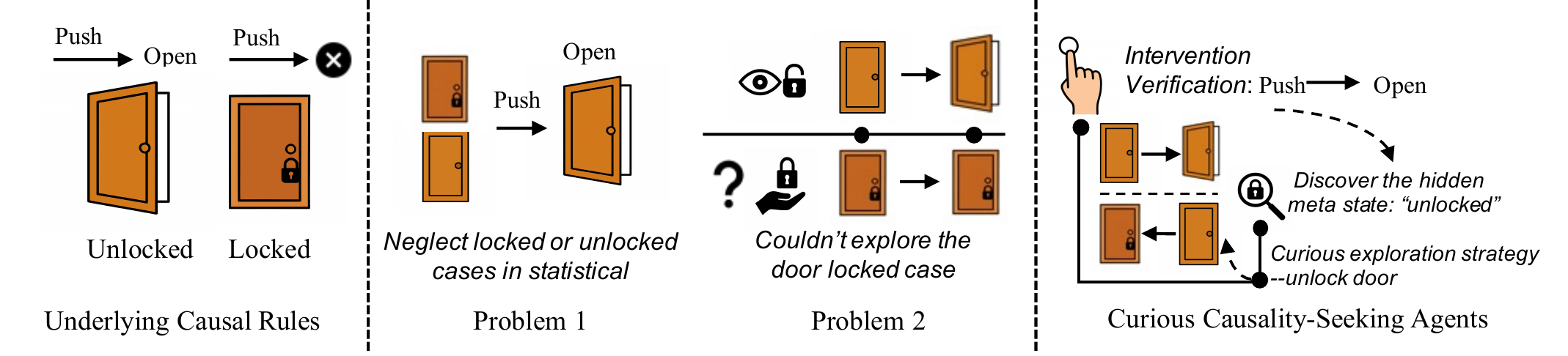}
    \caption{Illustration of the Meta-Causal Graph concept and the Curious Causality-Seeking Agent framework. \textbf{Ground Truth}: The causal relationship between pushing and opening depends on the latent state (locked vs. unlocked). \textbf{Limitations of Existing Approaches}: Problem 1: Single uniform causal graphs fail to capture context-dependent variations in causal relationships; Problem 2: Domain modeling requires a priori knowledge of state labels, limiting generalization to novel contexts. \textbf{Our Approach}: Curious Causality-Seeking Agents actively intervene to verify causal relationships and discover the critical meta states that determine when causal structures change, enabling the agent to build a comprehensive Meta-Causal Graph without requiring predefined domain labels.}
    \label{fig:inteoo}
    \end{figure*}
    
    To identify the underlying world rules, traditional causal models rely predominantly on observational data~\citep{pmlr-v216-lippe23a}, implicitly assuming a fixed causal structure. In open‐ended environments, although the ground‐truth causal laws remain unchanged, the observational data is collected through local, context‐limited exploration. Under conditions of partial observability, this invariance becomes obscured within a limited observational window, giving rise to the illusion of shifting causal relationships—a rule valid in one context may break down in another. For instance, the "push → open" relation holds for an unlocked door, but this relation vanishes when the door is locked. In this case, the same action produces no effect, leaving the true causal mechanism ambiguous. Traditional causal modeling in such scenarios faces two main challenges: (1) methods that learn a single uniform (time series-based) causal graph neglect these context‐dependent changes \citep{NEURIPS2023_65496a49}, and (2) lack of an active curiosity-driven exploration strategy, which makes it struggle to adapt to discover the global causal rules in rich, unseen variations of potential open-ended worlds \citep{huang2020causal, hwang2024fine}.

    To deal with these two challenges, we propose the \textbf{Meta-Causal Graph} to describe the world, which is a unified structure that captures how causal relationships evolve across different states. Rather than relying on a uniform graph, the Meta-Causal Graph is a minimal representation that contains multiple causal subgraphs, each corresponding to a particular meta state. Transitions between these meta states ``trigger'' the activation of the appropriate subgraph, allowing us to model shifts in causal influence, such as when a locked door severs the "push → open" link. We introduce a framework termed \textbf{Curious Causality-Seeking Agent}, which actively explores interventions to uncover the Meta-Causal Graph. The main contributions of this work are given below:
    
    Theoretically, we derive sufficient conditions ensuring that strategically chosen families of interventions uniquely identify meta-causal graphs, overcoming observational limitations. 
    
    Methodologically, we design a framework which leverages a curiosity-driven exploration strategy where agents selectively intervene in the environment to learn a causal world model. First, we employ interventional verification to directly test whether specific variables causally influence others under different state conditions. Unlike methods relying solely on observational data, our approach actively intervenes on variables to establish causal links. Second, we introduce a targeted exploration strategy designed to discover meta states, critical configurations where causal structure changes. 
    
    Empirically, our approach demonstrates substantial improvements over purely observational baselines in simulated benchmarks, showcasing its capability to accurately recover complex causal structures. By explicitly modeling and reasoning about these causal relationships through active interventions, our proposed method significantly enhances the robustness and adaptability of world models. 

    \section{Preliminaries}
    In this work, we employ causal modeling to construct world models. We begin by introducing fundamental definitions and terminology related to causal graphs and interventions.

    \paragraph{Causal Graph.}

    A causal subgraph is formalized as a directed acyclic graph (DAG) 
$\mathcal{G} = ([p], E)$,
where the vertex set \([p] = \{1,2,\dots,p\}\) indexes a collection of random variables \(X = \{X_i\}_{i=1}^p\) with joint density \(f(X)\). The edge set \(E \subseteq [p]\times [p]\) encodes direct causal relationships between variables. For each node \(i\in[p]\), we write \(\mathrm{Pa}_{\mathcal{G}}(i)\) for its set of parents. An edge \(i \to j\in E\) indicates that \(X_i\) exerts a direct causal influence on \(X_j\).

    \paragraph{Intervention.}
    Directly learning the causal structure from observational data is often insufficient, as it can lead to multiple possible causal graphs that share the same v-structures. To resolve this ambiguity, we can use interventions to break the observational ambiguity.
    Intervention is a powerful tool in causality. An intervention refers to the manipulation of a variable within a causal model, often represented by the do-operator. For example, consider intervening on state $X_k$, where the system originally takes the value $X_k = x_k$. An intervention is performed by setting the state variable to a new value, as in $\text{do}(X_i = x_i')$, such as setting ``door open'' to true in the environment.

    During world‐model learning, we perform a sequence of environment interventions to reveal causal structure. To better formalize the intervention influence, in this paper, we introduce the notion of an \textbf{intervention target}. At intervention step \(k\), we select a single variable \(X_{i}\) as the target, which we denote by the intervention target \(I_{k} = \{i\}\), where \(i \in [p]\). Applying an intervention \(\mathrm{do}(X_{I_k} = x')\) removes all incoming causal links into \(X_{i}\). Concretely, if \(\mathcal{G}\) is the original DAG, the \emph{intervention graph} \(\mathcal{G}^{(k)}\) is obtained by removing every edge of the form $j \to i, \quad \forall\, j \in \mathrm{Pa}_{\mathcal{G}}(i),$ from \(\mathcal{G}\). This modified graph accurately reflects the altered generative process under the do‐operation on \(X_{i}\).

    \begin{definition}[Intervention graph~\citep{hauser2012characterization}]
        Let $D=([p],E)$ be a DAG with vertex set $[p]$ and edge set $E$, and $I\subset[p]$ an intervention target. The intervention graph of $D$ is the DAG $D^{(I)}=([p],E^{(I)})$, where $E^{(I)}:=$ $\{a\to b\mid a\to b\in E,b\notin I\}.$
    \end{definition}

    Given $\mathcal{I}$, the \textbf{Interventional Markov Equivalence} is defined as follows:
    \begin{definition}[Interventional Markov Equivalence~\citep{hauser2012characterization}]
        Two DAGs $D_1$ and $D_2$ are interventional Markov equivalent given a set of intervention target $\mathcal{I}$ if $D_1$ and $D_2$ have the same skeleton and the same v-structures, and $D^{(I)}_1$ and $D^{(I)}_2$ have the same skeleton and the same v-structures for all $I\in\mathcal{I}$.
    \end{definition}
    Then the \textbf{Interventional Markov Equivalence Class (I-MEC)} is the set of DAGs that are interventional Markov equivalent to each other given a set of intervention targets $\mathcal{I}$.
    Given a set of intervention targets $\mathcal{I}$, we can determine the I-MEC of the system. However, the DAGs in the I-MEC are not unique. We denote the set of DAGs in the I-MEC as $\mathcal{D}_{\mathcal{I}}$. 
    Although DAGs in $\mathcal{D}_{\mathcal{I}}$ share the same skeleton and v-structures, they may contain edges with different directions.

    
    \section{Meta-Causal Graph: Definition and Identifiability}
    \label{sec:framework}


    In this section, we present our Meta‐Causal Graph for a world formulation. We begin by defining the Meta‐Causal Graph, which is a compact causal graph comprising multiple context‐specific causal subgraphs, varying from latent meta states in Section \ref{sec:metacausalgraph}. Then, we show the identifiability of meta states in Section \ref{sec:metastate} and their indicated causal subgraphs in Section \ref{sec:subgraph} by interventions. 
    
    \subsection{Meta-Causal Graph}
    \label{sec:metacausalgraph}
    We consider an environment whose states are represented by a set of variables $X=\{X_i\}_{i\in[p]} \in \mathcal{X}$, such that the ``door open'' is a state in the environment. $X$ in the environment can be intervened on by manipulating its values directly or influenced by taking actions indirectly. Then, the Meta-Causal Graph is defined on the environment state as follows:

    
    
    
    
    \begin{definition}[Meta-Causal Graph]
        The Meta‐Causal Graph \(\mathcal{MG}\) consists of a collection of causal subgraphs 
$\mathcal{MG} \;=\;\{\mathcal{G}_u\}_{u\in U},$
where each \(\mathcal{G}_u\) corresponds to a distinct meta state \(u\in U\). The causality skeleton matrix $M_u$ is a binary matrix where $M_u[i,j]=1$ indicates that variable $i$ is a parent of variable $j$ (i.e., $i \in Pa(j)$) in the causal graph $\mathcal{G}_u$.  A state‐to‐meta state mapping 
$C:\mathcal{X}\;\to\;U$
assigns each system observation \(\mathbf{x}\in\mathcal{X}\) to its active meta state \(u=C(\mathbf{x})\).  We call \(\mathcal{MG}=(\{\mathcal{G}_u\},C)\) a Meta‐Causal Graph if the following conditions are satisfied:

        \begin{enumerate}
            \item There exists a ground truth mapping \( C : \mathcal{X} \to U \) determining the real meta states.
            \item Causality skeleton 
matrices of causal subgraphs are sufficiently different. For all \( u \neq u' \), \( M_u \neq M_{u'} \), i.e., there exists an index \( (i, j) \) such that \( M_u[i,j] \neq M_{u'}[i,j] \).
        \end{enumerate}
    \end{definition}

    The underlying Meta-Causal Graph in the environment is assumed to be invariant; however, its complete identification requires two steps: (1) identification of latent meta states, and (2) identification of the causal subgraph corresponding to each meta state.

    \subsection{Identifiability of Meta States}
    \label{sec:metastate}
    In this subsection, we will discuss the identifiability of the meta states from the environment.
    We first introduce the concept of \textit{swap-label equivalence} between two mappings.

    \begin{definition}[Swap-Label Equivalence]
        Two mappings $C_1: \mathcal{X}\to U_1$ and $C_2: \mathcal{X}\to U_2$ are swap-label equivalent if there exists a permutation $g: U_1\to U_2$ such that $C_2(x)=g(C_1(x)), \forall x\in \mathcal{X}$.
    \end{definition}
    The definition indicates that if we can identify the meta states up to an permutation transformation, we say that the meta states are identifiable up to label swapping.
    We then show that the meta states are identifiable up to swap-label equivalence under the following assumption.

    
    \begin{assumption}[Mixed Data Structure Learning]\label{assumption:mixed_data_structure_learning}
        Let $\mathcal{MG} = \{\mathcal{G}_i\}_K$ be a Meta-Causal Graph with $K$ causal subgraphs corresponding to distinct meta states $u \in U$. Consider a dataset $\mathcal{D}$ where each sample is generated from one of these causal subgraphs. Let $S_{\mathcal{D}} \subseteq \{1,2,...,K\}$ denote the indices of causal subgraphs that actually contributed samples to $\mathcal{D}$. When learning a single causal graph $\hat{\mathcal{G}}$ from the mixed dataset $\mathcal{D}$ (treating all samples as if they were generated from a single causal subgraph), the estimated parent set for each variable $X_j$ in $\hat{\mathcal{G}}$ satisfies:
        \begin{align}
        Pa_{\hat{\mathcal{G}}}(X_j) = \bigcup_{i \in S_{\mathcal{D}}} Pa_{\mathcal{G}_i}(X_j) \quad \forall j \in [p].
        \end{align}
    \end{assumption}
    Under this assumption, the unified graph from pooled data contains the union of all direct parent–child edges present in each causal subgraph. Consequently, the latent meta state are identifiable as follows.
    \begin{theorem}[Identifiability of Meta States]\label{thm:identifiability_of_domain_variables}
        Under Assumption~\ref{assumption:mixed_data_structure_learning}, the learned mapping \( \hat{C}: \mathcal{X}\to\hat{U} \) is swap-label equivalent to the ground truth mapping \( C: \mathcal{X}\to U \).
    \end{theorem}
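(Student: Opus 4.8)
The plan is to reduce the statement to an equality of set partitions: I would show that the partition of $\mathcal{X}$ induced by the learned map $\hat{C}$ coincides with the partition induced by the ground-truth map $C$, after which the two labelings can differ only by a relabeling of the parts, which is precisely a permutation $g$ witnessing swap-label equivalence. Formally, I introduce the equivalence relations $x \sim_{C} x' \iff C(x)=C(x')$ and $x \sim_{\hat{C}} x' \iff \hat{C}(x)=\hat{C}(x')$, and the goal becomes proving $\sim_{C}\,=\,\sim_{\hat{C}}$.

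The key step is to specialize Assumption~\ref{assumption:mixed_data_structure_learning} to data drawn from a single meta state. Fixing $u \in U$ and restricting the data to observations $\mathbf{x}$ with $C(\mathbf{x})=u$, every sample is generated by the single subgraph $\mathcal{G}_u$, so the contributing index set is the singleton $S_{\mathcal{D}_u}=\{u\}$. Applying the Assumption with this singleton yields, for every $j\in[p]$,
\begin{align}
Pa_{\hat{\mathcal{G}}}(X_j) \;=\; \bigcup_{i \in \{u\}} Pa_{\mathcal{G}_i}(X_j) \;=\; Pa_{\mathcal{G}_u}(X_j),
\end{align}
so pure meta-state-$u$ data recovers exactly $\mathcal{G}_u$, and the recovered skeleton equals $M_u$. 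Hence the label that $\hat{C}$ assigns on the region $C^{-1}(u)$ is determined by, and in bijection with, the matrix $M_u$.

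It then remains to invoke the second condition in the definition of the Meta-Causal Graph: $M_u \neq M_{u'}$ for all $u \neq u'$. Combined with the previous step, this gives the chain
\begin{align}
\hat{C}(x)=\hat{C}(x') \;\iff\; M_{C(x)}=M_{C(x')} \;\iff\; C(x)=C(x'),
\end{align}
where the final equivalence is exactly the distinctness of skeletons. Thus $\sim_{C}\,=\,\sim_{\hat{C}}$, and defining $g(u)$ to be the unique label $\hat{C}$ assigns on $C^{-1}(u)$ produces a well-defined bijection $g:U\to\hat{U}$ with $\hat{C}(x)=g(C(x))$ for all $x$, which is the claimed swap-label equivalence.

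The main obstacle I anticipate is the converse direction of the partition equality, i.e.\ ruling out that the learner collapses two distinct ground-truth states into one label. The forward implication is immediate once pure regions recover $\mathcal{G}_u$, but if a learned region merged states indexed by a set $S$ with $|S|>1$, the Assumption returns the union skeleton $\bigvee_{i\in S} M_i$ (the entrywise logical OR), which by the distinctness condition is strictly denser than at least one member $M_i$; the delicate point is arguing that this over-connected skeleton cannot be mistaken for any genuine single-state skeleton in the family, so that a correct grouping must refine down to the ground-truth partition. Since the stated condition only guarantees \emph{pairwise} non-equality, making this "no accidental collision between a union of subgraphs and an individual subgraph" step fully rigorous is where the real work lies, and it is exactly the place where the sufficiently-different requirement on the skeleton matrices is needed.
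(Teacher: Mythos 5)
Your reduction to partition equality and your use of Assumption~\ref{assumption:mixed_data_structure_learning} on pure versus mixed regions match the paper's setup, but the step you yourself flag as ``where the real work lies'' --- ruling out that $\hat{C}$ merges two distinct ground-truth meta states --- is essentially the entire content of the paper's proof, and you leave it open. Moreover, the route you sketch for closing it (showing that a union skeleton $\bigvee_{i\in S} M_i$ cannot coincide with any genuine single-state skeleton $M_{u'}$) is not available from the stated hypotheses: the definition of the Meta-Causal Graph only guarantees \emph{pairwise} distinctness of the $M_u$, so nothing prevents $M_{u_1}\cup M_{u_2}=M_{u_3}$ for some third state, and the paper never needs to exclude such collisions.

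The paper closes the gap by an optimality argument on sparsity instead. If $\hat{C}$ merges $u_1\neq u_2$ into one learned label $\hat{u}$, Assumption~\ref{assumption:mixed_data_structure_learning} forces $\hat{M}_{\hat{u}}=M_{u_1}\cup M_{u_2}$, and since $M_{u_1}\neq M_{u_2}$ this union is strictly denser than at least one constituent; the proof then compares the expected L1 norms $\mathbb{E}_{x}[\|\hat{M}_{\hat{C}(x)}\|_1]$ and $\mathbb{E}_{x}[\|M_{C(x)}\|_1]$ and shows the former is strictly larger, contradicting the premise that $\hat{C}$ is obtained by minimizing a sparsity-regularized objective. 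So the resolution is not ``unions look different from genuine skeletons'' but ``merging is never the sparsest clustering,'' which sidesteps the collision problem entirely. Note also that your biconditional chain quietly assumes $\hat{C}$ is constant on each region $C^{-1}(u)$ (no splitting); the paper handles this implicitly by treating only the case $|\hat{U}|=|U|$ here and deferring splitting to Theorem~\ref{thm:identifiability_of_overparameterized_domain_variables}, so a complete argument along your lines would need to make that restriction and the pigeonhole consequence explicit.
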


    \textbf{Identifiability when number of meta states is unknown: }In practice, we often overparameterize the meta state space by choosing more clusters than actually exist. We prove that this overparameterization does not harm identifiability: even with excess clusters, each true causal subgraph can still be recovered. The following theorem shows that, provided the number of clusters is sufficiently large, the underlying subgraph structures are identifiable up to the observational equivalence.

    \begin{definition}\label{def:obseq}
        Two mappings $C_1: \mathcal{X}\to U_1$ and $C_2: \mathcal{X}\to U_2$ are observationally equivalent if there exists a mapping $\phi: U_1\to U_2$ such that $C_2(x)=\phi(C_1(x))$ for all $x\in \mathcal{X}$.
    \end{definition}
    Intuitively, observational equivalence refers to the condition where two mappings from states to meta states produce the same partitioning of observations, possibly up to relabeling.

    \begin{theorem}[Identifiability of Overparameterized Meta States]\label{thm:identifiability_of_overparameterized_domain_variables}
        Under Assumption~\ref{assumption:mixed_data_structure_learning}, 
    if the learned mapping \(\hat{C}: \mathcal{X}\to\hat{U}\) satisfies \(|\hat{U}|> |U|\), the learned mapping \(\hat{C}\) is observationally equivalent to the ground truth mapping \(C: \mathcal{X}\to U\).
    \end{theorem}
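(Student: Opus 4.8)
The plan is to recognize that observational equivalence is precisely the statement that the learned partition $\{\hat C^{-1}(\hat u)\}_{\hat u\in\hat U}$ refines the ground-truth partition $\{C^{-1}(u)\}_{u\in U}$: a map $\phi:\hat U\to U$ with $C=\phi\circ\hat C$ exists if and only if $C$ is constant on every fiber of $\hat C$, i.e. $\hat C(x)=\hat C(x')\Rightarrow C(x)=C(x')$. So the whole theorem reduces to establishing this refinement property. The reason we obtain only a map $\phi$ rather than the bijection of Theorem~\ref{thm:identifiability_of_domain_variables} is that with $|\hat U|>|U|$ a single true meta state may be split across several learned clusters, ruling out swap-label equivalence while still permitting the quotient map $\phi$ that sends each learned cluster to the true meta state containing it.

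First I would prove the key \emph{purity} lemma: every learned cluster $\hat C^{-1}(\hat u)$ consists of observations sharing a single ground-truth meta state. Suppose toward a contradiction that some cluster $\hat u$ mixes two distinct true meta states $u\neq u'$, so its data are generated by both $\mathcal{G}_u$ and $\mathcal{G}_{u'}$. Treating this cluster as a mixed dataset $\mathcal{D}$ with $\{u,u'\}\subseteq S_{\mathcal{D}}$, Assumption~\ref{assumption:mixed_data_structure_learning} forces the graph learned on it to satisfy $Pa_{\hat{\mathcal{G}}}(X_j)=\bigcup_{i\in S_{\mathcal{D}}}Pa_{\mathcal{G}_i}(X_j)\supseteq Pa_{\mathcal{G}_u}(X_j)\cup Pa_{\mathcal{G}_{u'}}(X_j)$ for every $j\in[p]$. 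Since condition~2 of the Meta-Causal Graph guarantees $M_u\neq M_{u'}$, there is an index $(i,j)$ on which the two skeletons disagree, so this union strictly contains at least one of the two parent sets and hence differs from every single subgraph. This contradicts the consistency criterion underlying the learning procedure (the same one driving Theorem~\ref{thm:identifiability_of_domain_variables}), namely that each learned cluster be explainable by exactly one causal subgraph. Therefore no cluster can be mixed, which is purity.

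With purity in hand the remaining steps are routine. Purity says exactly that $C$ is constant on each fiber of $\hat C$, so I would define $\phi:\hat U\to U$ by $\phi(\hat u)=C(x)$ for any representative $x\in\hat C^{-1}(\hat u)$; this is well defined precisely because of purity. Then $C(x)=\phi(\hat C(x))$ holds for all $x\in\mathcal{X}$, which is the definition of observational equivalence between $\hat C$ and $C$, completing the argument.

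The main obstacle is the purity lemma under overparameterization. When $|\hat U|=|U|$ the matching cardinalities already pin the partition down up to relabeling, but with extra clusters the learner has enough freedom that one must explicitly exclude a cluster straddling two true meta states. The crux is that condition~2 makes distinct subgraphs distinguishable at the level of skeletons, so Assumption~\ref{assumption:mixed_data_structure_learning} turns any such straddle into a strictly larger, hence non-matching, union graph, making mixed clusters detectable and thus forbidden by the consistency criterion. Making that consistency criterion fully precise, so that a pure cluster is the unique minimizer while any mixed cluster is strictly suboptimal, is the step I expect to require the most care.
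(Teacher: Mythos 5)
Your proposal is correct and follows essentially the same route as the paper: the paper's Step~1 is exactly your purity lemma (no learned cluster may straddle two true meta states, because Assumption~\ref{assumption:mixed_data_structure_learning} plus $M_{u_1}\neq M_{u_2}$ forces a strictly larger union skeleton), and its Steps~2--3 are your quotient map $\phi$. The ``consistency criterion'' you flag as the delicate point is made precise in the paper by the expected-L1-norm minimality argument carried over from Theorem~\ref{thm:identifiability_of_domain_variables}: merging strictly increases $\mathbb{E}[\|\hat{M}_{\hat{C}(x)}\|_1]$, contradicting optimal clustering.
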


    
    \subsection{Identification of Causal Subgraphs}
    \label{sec:subgraph}
    In this subsection, we demonstrate that causal subgraphs for each meta state become uniquely identifiable once edge directionality is determined through interventions.


    \subsubsection{Causal Subgraph Identification}
    We employ multiple interventions to determine the causal structure of environment, represented as \textbf{intervention target indicator set} $\mathcal{I}=\{I_k\}_{k=1}^K$, where $I_k\subset\{[p]\}$ comprises the indices of intervened variables selected from state variable set $X=\{X_i\}_{i\in[p]}$ at intervention step $k$. These intervention sets are collected from historical data or agent interactions. We establish sufficient conditions ensuring causal graph edge identifiability through appropriate interventional selection.

    
    

    
    \begin{proposition}[Identifiability of Causal Subgraph]\label{prop:identifiability_of_causal_subgraphs}
        The causal subgraph $D$ can be uniquely identified if there exists an intervention target indicator set $\mathcal{I}$ such that for all edges $a\to b\in D$, there exists an intervention target $I\in\mathcal{I}$ such that $|I\cap\{a,b\}|=1$.
    \end{proposition}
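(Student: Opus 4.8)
The plan is to show that the hypothesis forces the interventional Markov equivalence class of $D$ to be the singleton $\{D\}$, so that $\mathcal{D}_{\mathcal{I}} = \{D\}$ and $D$ is uniquely determined. I would argue by contradiction: suppose some DAG $D' \neq D$ is interventional Markov equivalent to $D$ given $\mathcal{I}$. By the definition of interventional Markov equivalence, $D$ and $D'$ share the same skeleton; since they are distinct DAGs on the same skeleton, at least one adjacent pair must be oriented oppositely. Fix such a pair $\{a,b\}$ with $a \to b \in D$ and $b \to a \in D'$.

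Next I would invoke the hypothesis on this specific edge: there is an intervention target $I \in \mathcal{I}$ with $|I \cap \{a,b\}| = 1$, i.e.\ exactly one endpoint is intervened. The key observation is that the intervention graph retains a directed edge precisely when its head is not intervened, since $E^{(I)} = \{a \to b \mid a \to b \in E,\ b \notin I\}$, so a separating intervention cuts the edge in exactly one of the two orientations. I would then split into two cases. If $b \in I$ and $a \notin I$, then in $D^{(I)}$ the edge $a \to b$ is deleted (its head $b$ is intervened), whereas in $D'^{(I)}$ the edge $b \to a$ survives (its head $a$ is not intervened); hence the pair $\{a,b\}$ is adjacent in the skeleton of $D'^{(I)}$ but not in that of $D^{(I)}$. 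The symmetric case $a \in I$, $b \notin I$ deletes $b \to a$ in $D'^{(I)}$ while preserving $a \to b$ in $D^{(I)}$, again producing a skeletal mismatch at $\{a,b\}$.

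In either case the skeletons of $D^{(I)}$ and $D'^{(I)}$ differ, contradicting the requirement that interventional Markov equivalent DAGs share the skeleton of their intervention graphs for every $I \in \mathcal{I}$. Therefore no such $D'$ exists and $D$ is the unique element of its I-MEC, which is the claimed identifiability.

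The step I expect to be most delicate is the reduction in the first paragraph: two Markov equivalent DAGs can disagree on the orientation of many edges simultaneously, so I would not attempt to orient edges one at a time, but instead extract a single oppositely oriented pair and apply the separating intervention to that pair alone. The remaining care is purely bookkeeping about which endpoint's incoming edges are severed—verifying that the condition $|I \cap \{a,b\}| = 1$ is exactly what guarantees the edge is cut in one orientation and retained in the other, which is the whole source of the asymmetry that breaks the equivalence. I would also note that the argument uses only the interventional skeletons and never the v-structures, so it is insensitive to the rest of the graph.
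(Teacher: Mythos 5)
Your proof is correct and follows essentially the same route as the paper's: pick an oppositely oriented adjacent pair, apply the separating intervention to it, and observe that the edge is cut in exactly one of the two orientations, so the intervention graphs disagree. If anything, you are slightly more precise than the paper in pinning the contradiction on the \emph{skeletons} of $D^{(I)}$ and $D'^{(I)}$ differing (which is what the definition of interventional Markov equivalence actually requires), rather than merely on $D^{(I)} \neq D'^{(I)}$ as directed graphs.
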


    This proposition establishes that causal subgraph identifiability requires edge-specific interventions targeting single connected variables, enabling directional determination.
    

    \section{Curious Causality-Seeking Agents for Open-Ended World Modeling}
    
    We propose a novel framework to learn the Meta-Causal Graph through a Causality-Seeking Agent. 
    Our approach comprises three core components:
    (1) curiosity-driven interventional exploration in an open-ended world, 
    (2) Meta-Causal Graph discovery from agent experience, and 
    (3) continual world model learning and updating.
    Together, these components enable the agent to actively discover and refine causal structures in open-ended environments.

    \subsection{Curiosity-Driven Interventional Exploration in Open-Ended Worlds}
    \label{subsec:intervention}
    Accurate identification of the Meta-Causal Graph cannot rely solely on passive experience.
    First, causal structures inferred from purely observational trajectories often suffer from edge misorientation.
    Second, an agent’s experience typically spans only a limited subset of world states, leaving parts of the causal graph unobserved.
    To reveal the full Meta-Causal Graph in an open-ended environment, the agent must complement passive observation with targeted, curiosity-driven interventions that actively probe uncertain causal relations.  
\paragraph{Curiosity Reward.}
    Our framework treats curiosity as a general intrinsic signal that can be instantiated in multiple forms. 
    Specifically, we experimented with the following formulations:

\textit{(1) Edge-Entropy.}
$
I^{\text{edge}}_t = \sum_{i,j\in[p]} H\big(\hat M_{C(X_t)}[i,j]\big),
$
where $\hat M_{C(X_t)}[i,j]\!\in[0,1]$ denotes the posterior probability of an edge from variable $i$ to $j$ in the current meta state $C(X_t)$, 
and $H(\cdot)$ is the Shannon entropy. 
This term prioritizes interventions on the most uncertain parts of the causal graph.

\textit{(2) Prediction Uncertainty.}
$
\mathcal{I}^{\text{unc}}_t = H\big[p_\theta(x_{t+1}\mid x_t,a_t)\big],
$
where $p_\theta(x_{t+1}\mid x_t,a_t)$ is the world model’s predictive distribution over the next state given current state–action pair $(x_t,a_t)$. 
Higher entropy indicates epistemic uncertainty about future transitions.

\textit{(3) Feature Discrepancy.}
$
\mathcal{I}^{\text{feat}}_t = 
\big\|E(x_{t+1}) - E_\theta(\hat{x}_{t+1})\big\|_2^2,
$
where $\phi(\cdot)$ is the learned feature encoder and 
$E_\theta(\hat{x}_{t+1})$ is the predicted feature of the next state $\hat{x}_{t+1}$. 
This term measures reconstruction error in the feature space, highlighting dynamics that the model fails to capture.

\textit{(4) Predictive-Distribution Discrepancy.}
$
\mathcal{I}^{\text{nll}}_t = -\log p_\theta(x_{t+1}\mid x_t,a_t),
$
which is the surprisal or negative log-likelihood of the observed transition. 

Any of these intrinsic terms can be used as the curiosity reward $R_t$
to select interventions,
and in our main results we adopt the edge-entropy variant as the default
while other forms yield comparable behaviors (see Table~\ref{tab:intrinsic_terms}).

\paragraph{Intervention Verification.}
After executing curiosity-guided interventions,
we validate and refine the learned causal structures by directly estimating causal effects.
For each variable $X_i$, we perform interventions $\mathrm{do}(X_i{=}x'_i)$ and observe the resulting changes in other variables.
We estimate the causal effect of $X_i$ on $X_j$ as
\begin{equation*}
  \Delta_{ij}
  = \log P(X_j^{t+1}\mid X^t,\mathrm{do}(X_i{=}x'_i))
    - \log P(X_j^{t+1}\mid X^t).
\end{equation*}
We integrate these effects into a mask-refinement loss
\begin{equation*}
  \mathcal{L}_{\text{mask}}
  = -\lambda_1\!\!\sum_{\{(i,j):|\Delta_{ij}|>\tau\}}\!\!\log\hat M_{ij}
    + \lambda_2\!\!\sum_{\{(i,j):|\Delta_{ij}|<\tau\}}\!\!\log\hat M_{ij},
\end{equation*}
where $\tau\ge 0$ controls sensitivity.

\paragraph{Interventional Reachability.}
In realistic environments, not all variables are directly intervenable.
To formalize which causal relations are identifiable under such constraints,
we introduce \emph{interventional reachability}.
We define an \emph{intervention operator} $F\in\{0,1\}^{N\times N}$ where $F[k,i]=1$ if and only if state $k$ can be obtained from state $i$ by a single allowed intervention, and a \emph{transition operator} $T\in\{0,1\}^{N\times N}$ for the environment’s natural dynamics after intervention.
The composite $TF$ models one intervention–transition cycle,
and nonzero entries of $(TF)^k z_0$ enumerate states reachable within $k$ cycles.
A state is reachable if and only if some finite $k$ satisfies $[(TF)^k z_0][i]>0$.
These constraints define the feasible set of causal interventions.
Details are given in Appendix~\ref{appendix:implementation}.

\subsection{Meta-Causal Graph Discovery from Agent Experience}
Given trajectories collected through curiosity-driven interventions,
the agent infers a structured representation of causal dependencies that vary across latent meta states.
The goal is to jointly learn (i) the discrete latent variable $C(X_t)$ indicating the current meta state, and (ii) the corresponding causal skeleton matrix $M_{C(X_t)}$ that governs the local dynamics.

\paragraph{Representation Learning.}
To model context-dependent causal dependencies from trajectories generated by curiosity-driven interventions,
we discretize the latent dynamics of the environment using a vector-quantized variational auto-encoder (VQ-VAE)~\citep{hwang2024fine}.
The encoder maps concatenated state–action pairs $(x_t, a_t)$ into a latent vector $z_e$, which is quantized to the nearest codebook entry $z_u$ associated with a meta state $u$.
Each codebook entry defines a distribution over causal structures, from which the causal skeleton matrix $M_{C(X_t)}$ is sampled.
Further details are provided in Appendix~\ref{appendix:implementation}.

\paragraph{Effect of Lossy Representation.}
The latent representation learned through vector quantization is inherently lossy due to imperfect mapping. 
Such lossy representations may distort the underlying causal factors and thus influence the identification of both meta states and their associated causal subgraphs. We provide an error bound of lossy representation in the following.
\begin{proposition}[Effect of Representation Accuracy on Misclassification]
\label{prop:misclassification}
Given a set of ground-truth meta states $U = \{u\}$ and corresponding codebook entry index $\hat{U} = \{\hat{u}_k\}$, let 
$p_k = \sum_{u \in U} \mu_u p_k^u$ denote the probability that a sample $x$ is mapped to codebook entry $z_{\hat{u}_k}$.
Then, under the lossy representation induced by encoder, the probability of misclassification is
\[
P(\text{misclassification}) 
= 1 - \sum_k \sum_{u \in U} \mu_u p^u_k (1 - p_k + p^u_k \mu_u)^n.
\]
When $n(p_k + p^u_k \mu_u) \ll 1$, this can be approximated as
\[
P(\text{misclassification}) \approx 
\sum_k \left[p_k^2 - \sum_u (\mu_u p^u_k)^2 \right].
\]
\end{proposition}


\paragraph{Adaptive Codebook Fusion.}
To prevent redundant meta states resulted from overparamterization and maintain a minimal representation,
we adopt an \emph{Adaptive Codebook Fusion} mechanism.
During training, codebook entries encoding similar latent embeddings
and producing comparable decoded causal skeleton matrices are automatically merged.
This fusion step is integrated into the quantization update,
requires no extra supervision, and reallocates capacity for novel meta states.

    \subsection{Transition Probability Learning}
    The previous subsections describe how to identify the parent set of each state variable, $Pa_{\mathcal{G}_u}(X_j)$, using the learned causal skeleton matrix $M_\theta$. In this section, we demonstrate how to leverage the confirmed causal skeleton to quantify inter-variable causal effects, thereby grounding and validating the transition probabilities of the world model for next-state prediction. We train our model by maximizing the log-likelihood of the observed sequences, thereby learning the transition dynamics.
    \begin{equation*}
        \mathcal{L}_{\text{MLE}}=-\log P(X_j^{t+1}|Pa_{\mathcal{G}_u}(X_j)).
    \end{equation*}
    \textbf{The complete world model learning objective.} To encourage the discovery of parsimonious causal structures that capture essential relationships while avoiding spurious connections, we incorporate a sparsity regularization term: $\mathcal{L}_{\text{sparse}}=\|\hat{M}_\theta\|_1$, where $\|\hat{M}_\theta\|_1$ denotes the L1 norm of the causal skeleton probability matrix, promoting sparsity in the learned causal graphs.
    The complete optimization objective integrates multiple components is as follows:
    \begin{equation*}
        \mathcal{L}=\mathcal{L}_{\text{MLE}}+\lambda_{\text{sparse}}\mathcal{L}_{\text{sparse}} + \lambda_{\text{mask}}\mathcal{L}_{\text{mask}}+\lambda_{\text{quantization}}\mathcal{L}_{\text{quantization}}.
    \end{equation*}
    
    Training proceeds by alternating among interventional exploration, causal subgraph updating, world model learning, and codebook refinement. Algorithm~\ref{alg:meta_causal} summarizes the complete learning procedure.

\begin{table}[t]
    \centering
    \caption{Prediction accuracy on OOD states in the Chemical environment. The number of noisy nodes in the downstream tasks is denoted as $n$. All values are reported in percentage (\%).}
    \label{tab:prediction_accuracy}
    \resizebox{\textwidth}{!}{\begin{tabular}{lcccccc}
    \toprule
    \multirow{2}{*}{Algorithm} & \multicolumn{3}{c}{Fork}                                                      & \multicolumn{3}{c}{Chain}                                                    \\  \cmidrule(lr){2-4} \cmidrule(lr){5-7}
                             & n=2                      & n=4                      & n=6                     & n=2                      & n=4                     & n=6                     \\ 
                             \midrule
                             GNN                      & 36.29$\pm$3.45           & 25.80$\pm$3.48           & 21.58$\pm$3.44          & 29.22$\pm$3.39           & 23.28$\pm$4.98          & 20.53$\pm$6.96          \\
                             MLP                      & 31.11$\pm$1.69           & 30.44$\pm$2.28           & 32.39$\pm$1.76          & 28.66$\pm$3.65           & 26.52$\pm$4.26          & 24.15$\pm$4.17          \\
                             NCD                      & 41.60$\pm$5.08           & 37.47$\pm$2.13           & 42.27$\pm$1.82          & 40.04$\pm$6.21           & 37.47$\pm$2.98          & 41.19$\pm$1.66          \\
                             FCDL              & 57.82$\pm$9.90           & 49.29$\pm$8.90           & 47.70$\pm$6.68 & 50.66$\pm$10.10          & 48.81$\pm$8.91          & 48.05$\pm$5.86          \\
                             Modular                  & 26.53$\pm$3.45           & 24.73$\pm$5.61           & 26.73$\pm$8.31          & 25.24$\pm$4.68           & 24.94$\pm$4.81          & 25.09$\pm$5.91          \\
                             NPS                      & 40.56$\pm$4.61           & 26.81$\pm$4.37           & 23.02$\pm$4.27          & 38.73$\pm$2.63           & 27.69$\pm$4.28          & 24.45$\pm$3.84          \\
                             CDL                      & 35.59$\pm$1.85           & 35.82$\pm$1.40           & 42.22$\pm$1.39          & 34.90$\pm$1.59           & 36.52$\pm$1.72          & 42.06$\pm$1.29          \\
                             GRADER                   & 37.93$\pm$1.06           & 38.94$\pm$1.63           & 45.74$\pm$2.25          & 36.82$\pm$3.12           & 37.41$\pm$2.84          & 43.48$\pm$4.14          \\
                             Oracle                   & 33.87$\pm$1.34           & 36.48$\pm$1.80           & 42.47$\pm$0.75          & 34.63$\pm$1.78           & 38.31$\pm$2.48          & 42.87$\pm$2.08          \\
                             Sandy-Mixure             & 31.93$\pm$0.16             & 32.47$\pm$0.0161         & 33.72$\pm$2.11          & 30.08$\pm$3.12           & 29.31$\pm$5.18           & 27.43$\pm$2.20          \\
                             Transformer              & 25.13$\pm$0.63             & 24.37$\pm$2.58           & 21.90$\pm$2.35           & 29.62$\pm$0.65           & 30.37$\pm$2.81           & 29.78$\pm$1.08          \\
                             \midrule
                             
                             MCG (wo mask loss) & 58.18$\pm$14.51 & 51.70$\pm$8.58 & 46.04$\pm$8.56          & 47.75$\pm$7.52           & 46.19$\pm$5.87          & 47.94$\pm$6.60          \\
                             MCG (wo intervention verification) & 48.28$\pm$8.15 & 43.48$\pm$5.05 & 46.54$\pm$3.72          & 50.36$\pm$8.05           & 
                             \textbf{50.55$\pm$6.83}          & 48.72$\pm$4.42          \\
                             \textbf{MCG (ours)}                      & \textbf{63.18$\pm$13.94} & \textbf{50.47$\pm$9.87} & \textbf{50.04$\pm$8.56}          & \textbf{51.99$\pm$6.58} & 49.78$\pm$4.11 & \textbf{49.69$\pm$5.14} \\
                             \bottomrule\bottomrule
    \end{tabular}}
    \end{table}
    \begin{table}[t]
    \centering
    \caption{Average episode rewards on downstream tasks for each environment. The number of noisy nodes introduced in the downstream tasks is denoted as $n$.}
    \label{tab:downstream_task}
    \resizebox{\textwidth}{!}{\begin{tabular}{lccccccc}
    \toprule
    \multirow{2}{*}{Algorithm} & \multicolumn{3}{c}{Chain}            & \multicolumn{3}{c}{Fork}             & \multirow{2}{*}{Magnetic} \\ \cmidrule(lr){2-4} \cmidrule(lr){5-7}
                               & n=2        & n=4        & n=6        & n=2        & n=4        & n=6        &                           \\ \midrule
    GNN                        & 6.89$\pm$0.28  & 6.38$\pm$0.28  & 6.56$\pm$0.53  & 6.61$\pm$0.92  & 6.15$\pm$0.74  & 6.95$\pm$0.78  & 2.23 $\pm$ 0.90               \\
    MLP                        & 7.39$\pm$0.65  & 6.63$\pm$0.58  & 6.78$\pm$0.93  & 6.49$\pm$0.48  & 5.93$\pm$0.71  & 6.84$\pm$1.17  & 2.10 $\pm$ 0.22               \\
    NCD                        & 9.60$\pm$1.52  & 8.86$\pm$0.23  & 10.32$\pm$0.37 & 10.95$\pm$1.63 & 9.11$\pm$0.63  & 9.11$\pm$0.63  & 2.85 $\pm$ 0.47               \\
    FCDL                       & 11.16$\pm$3.5  & 10.39$\pm$2.84 & 10.62$\pm$2.52 & 13.98$\pm$2.01 & 13.36$\pm$2.14 & 12.91$\pm$2.40 & 2.77 $\pm$ 0.45               \\
    Modular                    & 6.61$\pm$0.63  & 7.01$\pm$0.55  & 7.04$\pm$1.07  & 6.05$\pm$0.70  & 5.65$\pm$0.50  & 6.43$\pm$1.00  & 0.88 $\pm$ 0.52               \\
    NPS                        & 6.92$\pm$1.03  & 6.88$\pm$0.79  & 6.80$\pm$0.39  & 5.82$\pm$0.83  & 5.75$\pm$0.57  & 5.54$\pm$0.80  & 0.91 $\pm$ 0.69               \\
    CDL                        & 8.71$\pm$0.55  & 8.65$\pm$0.38  & 10.23$\pm$0.50 & 9.37$\pm$1.33  & 8.23$\pm$0.40  & 9.50$\pm$1.18  & 1.10 $\pm$ 0.67               \\
    Oracle                     & 8.47$\pm$0.69  & 8.85$\pm$0.78  & 10.29$\pm$0.37 & 7.83$\pm$0.87  & 8.04$\pm$0.62  & 9.66$\pm$0.21  & 0.95 $\pm$ 0.55 \\ 
    Sandy-Mixture              & 6.81$\pm$0.17  & 6.73$\pm$0.20  &  7.07$\pm$0.26 & 6.95$\pm$0.21 & 6.71$\pm$0.20 & 7.03$\pm$0.40 & 1.63 $\pm$ 0.02\\
    Transformer                & 6.45$\pm$0.28  & 6.73$\pm$0.18  & 7.31$\pm$0.33  & 6.54$\pm$0.18 & 6.68$\pm$0.27 & 7.00$\pm$0.39 & 2.13 $\pm$ 0.01\\
    \midrule
    \textbf{MCG(ours)}                        & \textbf{13.82$\pm$3.84} & \textbf{12.49$\pm$2.39} & \textbf{12.45$\pm$1.37} & \textbf{14.65$\pm$2.75} & \textbf{14.06$\pm$2.64} & \textbf{13.28$\pm$2.04} & \textbf{3.19 $\pm$ 0.14}               \\
    \bottomrule\bottomrule
    \end{tabular}}
\end{table}
\begin{figure}[t]
    \centering
    \begin{subfigure}[b]{0.48\linewidth}
        \centering
        \includegraphics[width=\linewidth]{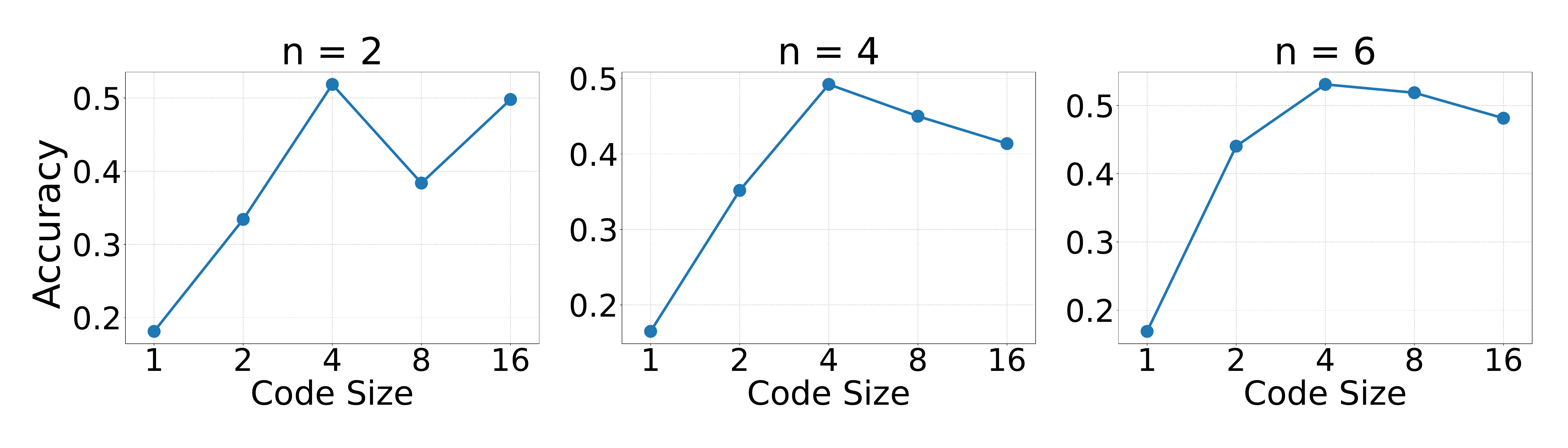}
        \caption{Prediction accuracy on Chain.}
        \label{fig:chemical_chain}
    \end{subfigure}
    \hfill
    \begin{subfigure}[b]{0.48\linewidth}
        \centering
        \includegraphics[width=\linewidth]{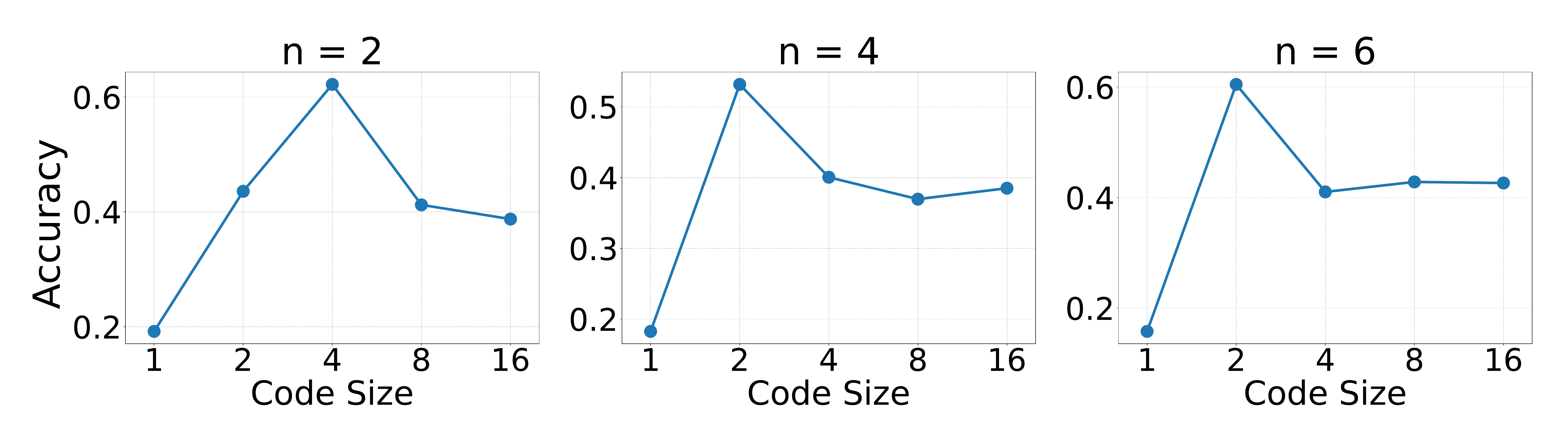}
        \caption{Prediction accuracy on Fork.}
        \label{fig:chemical_fork}
    \end{subfigure}
    
    \begin{subfigure}[b]{0.48\linewidth}
        \centering
        \includegraphics[width=\linewidth]{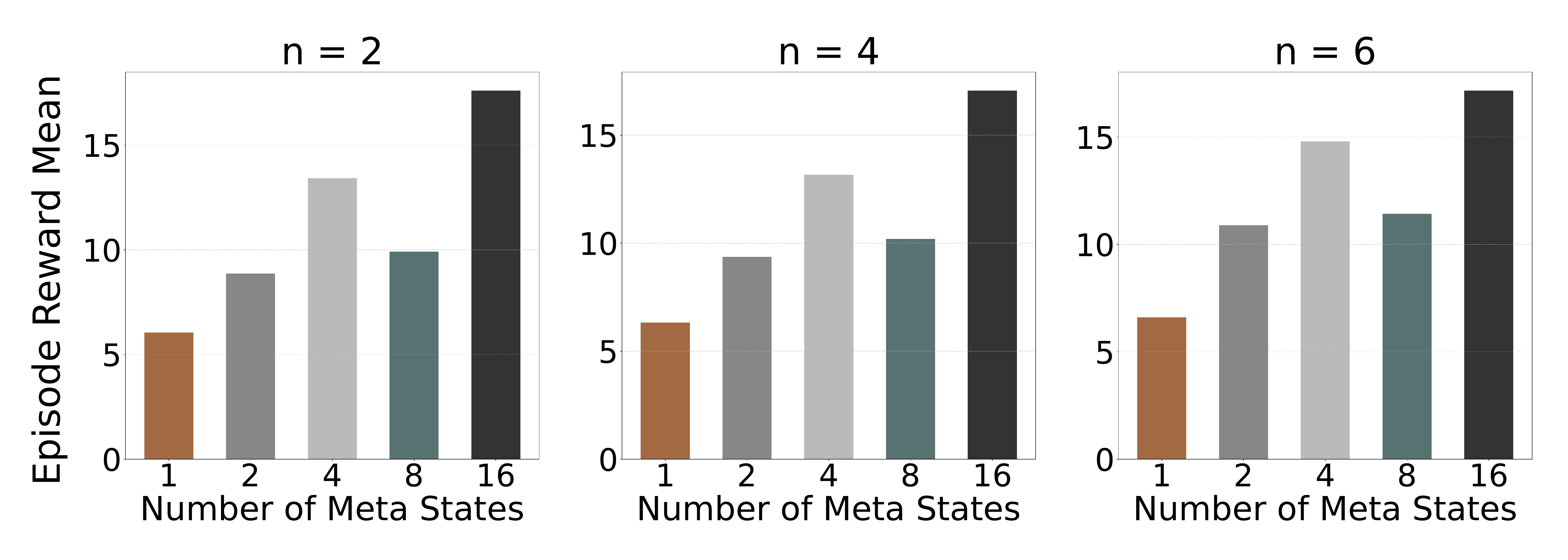}
        \caption{Downstream rewards on Chain.}
        \label{fig:dwm_rewards_chain}
    \end{subfigure}
    \hfill
    \begin{subfigure}[b]{0.48\linewidth}
        \centering
        \includegraphics[width=\linewidth]{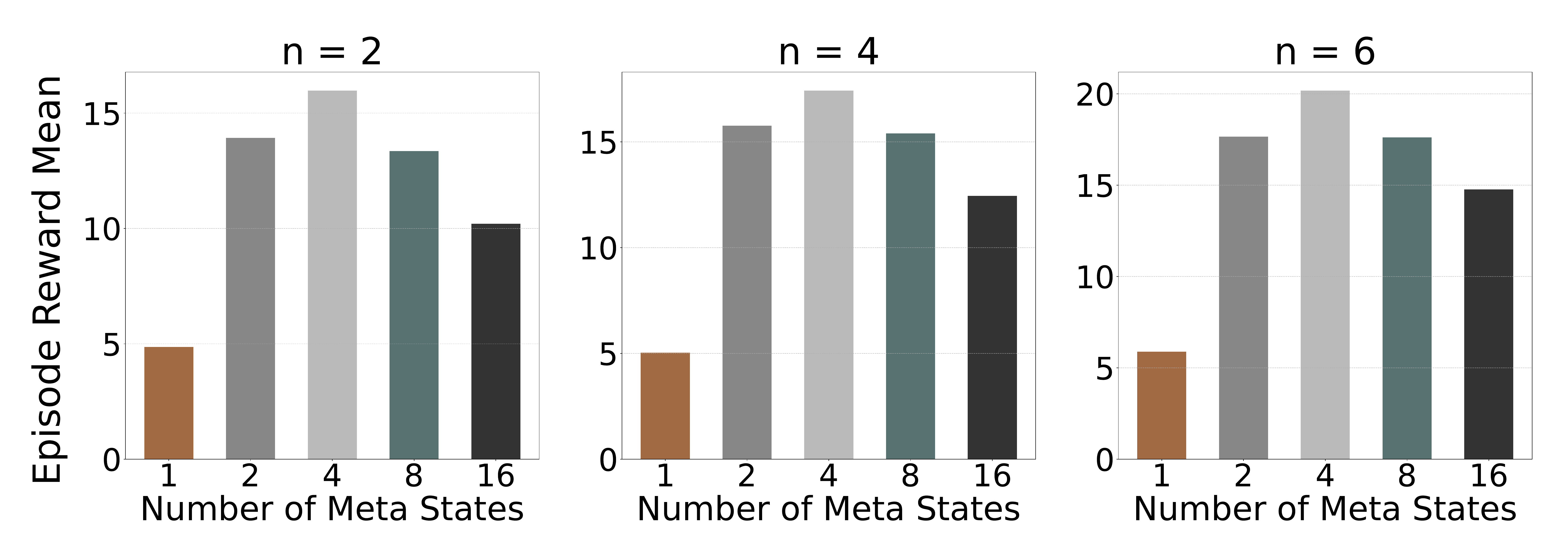}
        \caption{Downstream rewards on Fork.}
        \label{fig:dwm_rewards_fork}
    \end{subfigure}
    
    \caption{Performance with different numbers of meta states. (a-b) show prediction accuracy on Chain and Fork environments respectively, while (c-d) show corresponding downstream rewards.}
    \label{fig:codebook}
\end{figure}
\begin{figure}[t]
    \centering
    \begin{subfigure}[b]{0.115\textwidth}
        \centering
        \includegraphics[width=\linewidth]{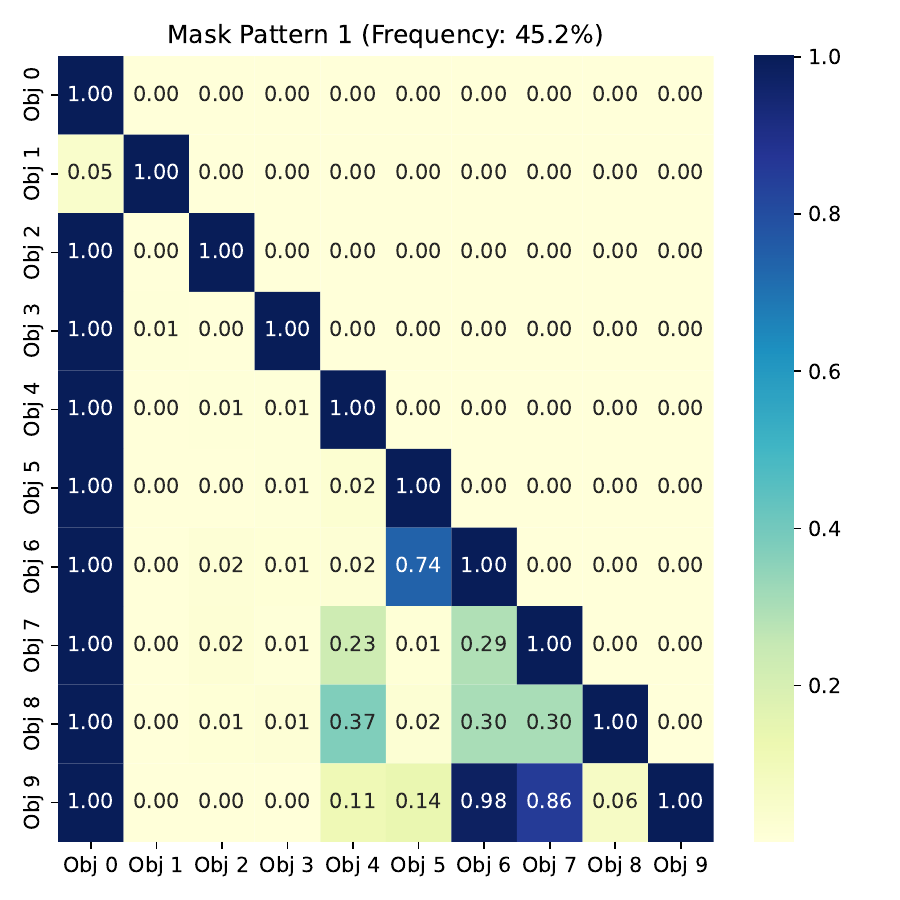}
        \caption{Pattern 1 (k=2)}
        \label{fig:pattern_1_2code}
    \end{subfigure}
    \hfill
    \begin{subfigure}[b]{0.115\textwidth}
        \centering
        \includegraphics[width=\linewidth]{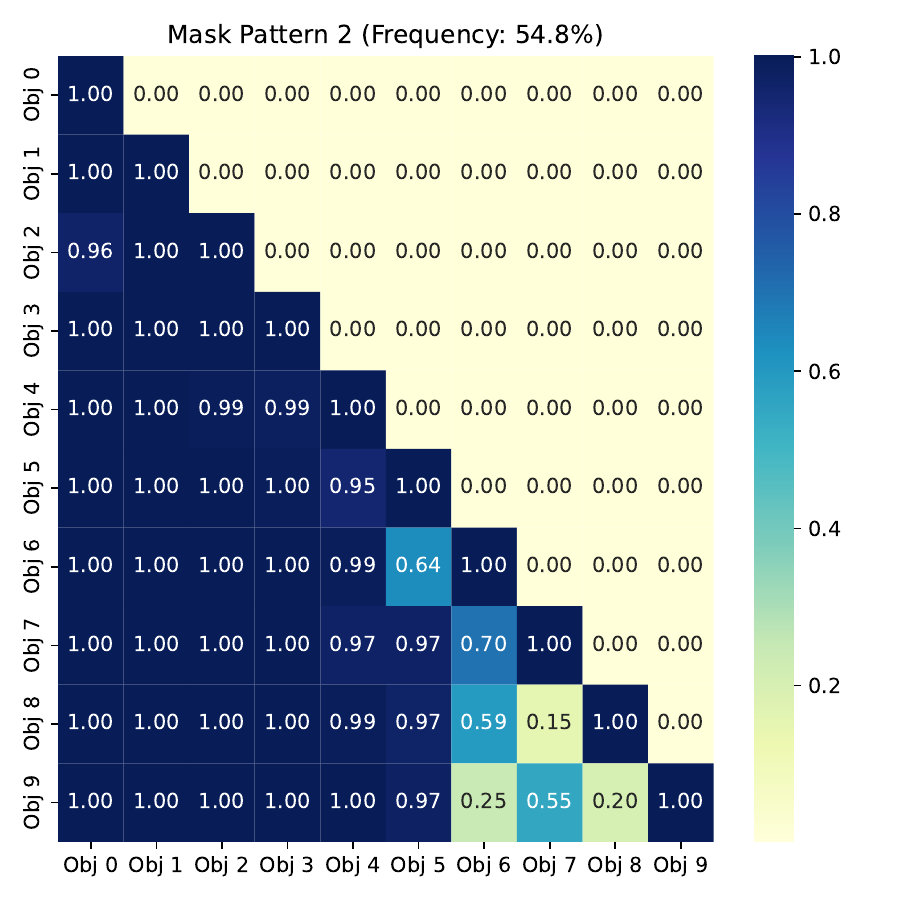}
        \caption{Pattern 2 (k=2)}
        \label{fig:pattern_2_2code}
    \end{subfigure}
    \hfill
    \begin{subfigure}[b]{0.115\textwidth}
        \centering
        \includegraphics[width=\linewidth]{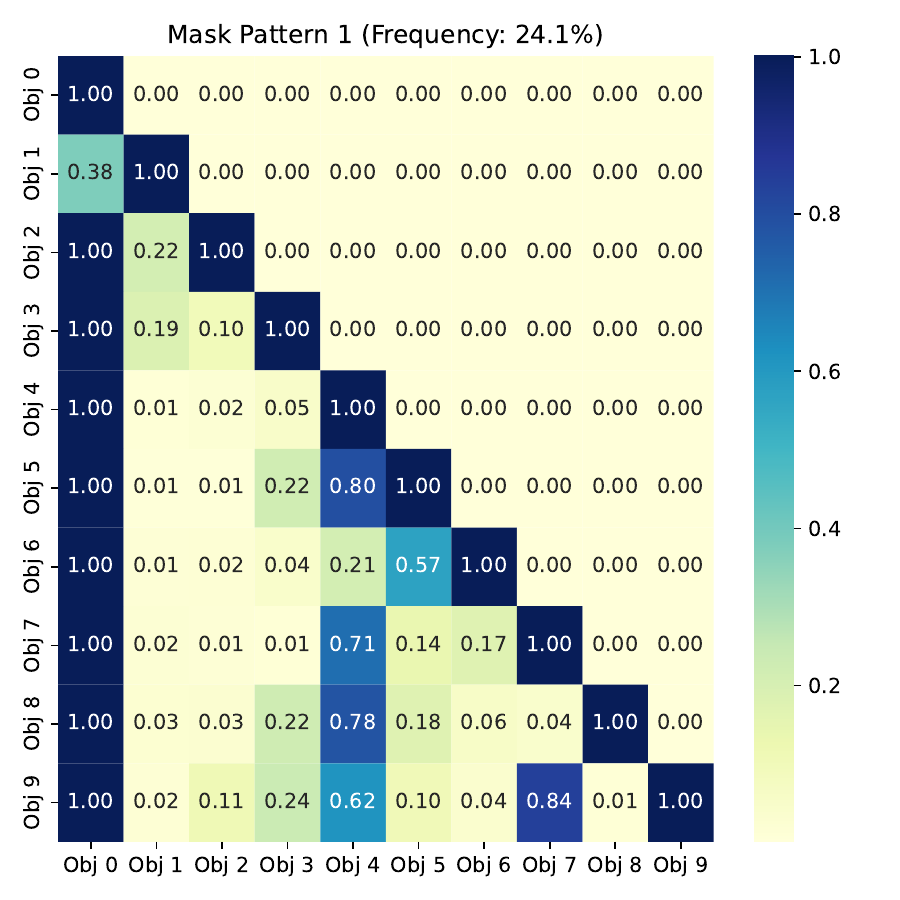}
        \caption{Pattern 1 (k=4)}
        \label{fig:pattern_1}
    \end{subfigure}
    \hfill
    \begin{subfigure}[b]{0.115\textwidth}
        \centering
        \includegraphics[width=\linewidth]{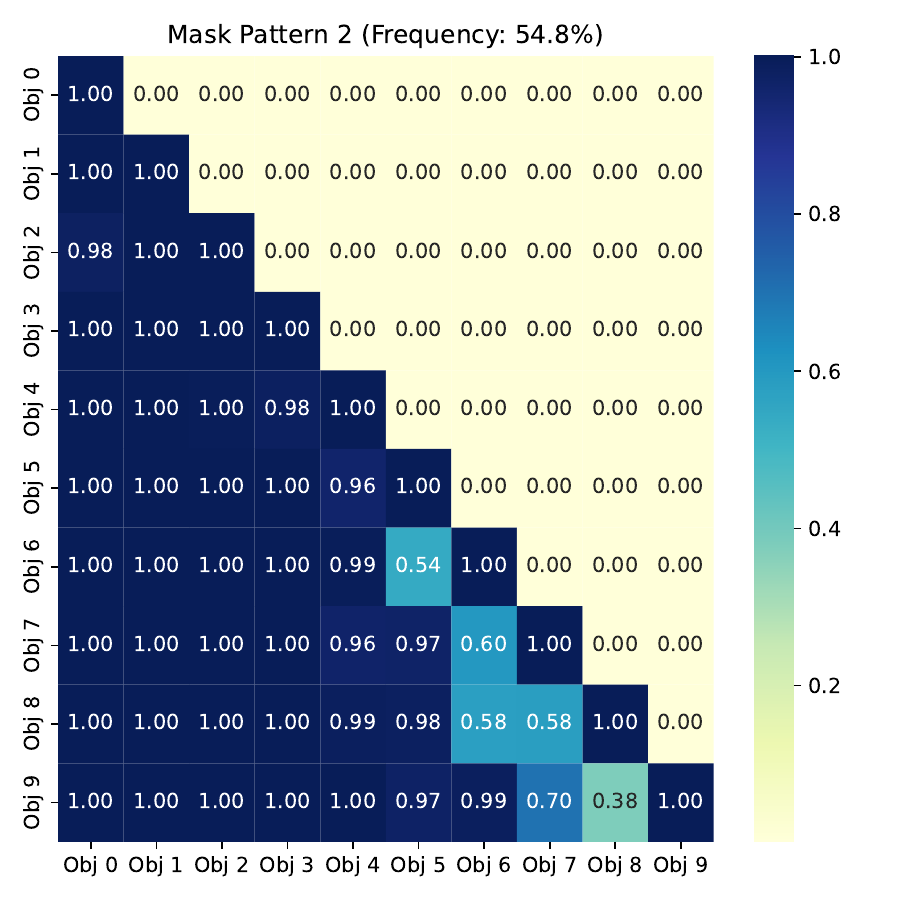}
        \caption{Pattern 2 (k=4)}
        \label{fig:pattern_2}
    \end{subfigure}
    \hfill
    \begin{subfigure}[b]{0.115\textwidth}
        \centering
        \includegraphics[width=\linewidth]{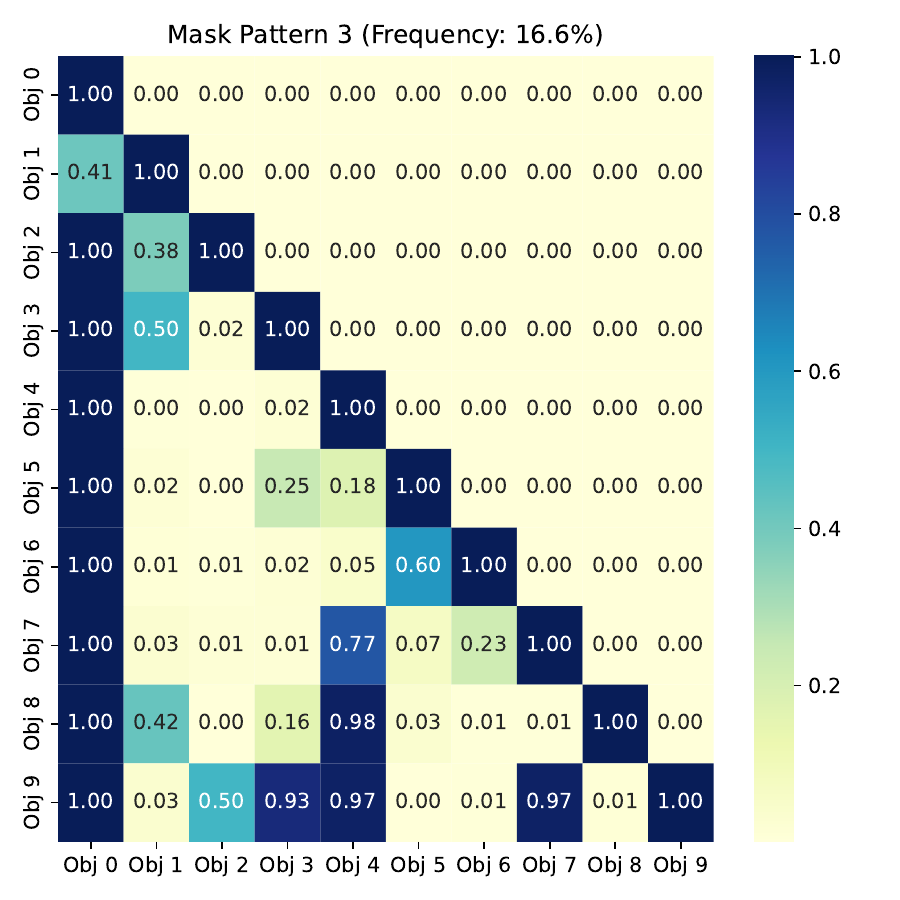}
        \caption{Pattern 3 (k=4)}
        \label{fig:pattern_3}
    \end{subfigure}
    \hfill
    \begin{subfigure}[b]{0.115\textwidth}
        \centering
        \includegraphics[width=\linewidth]{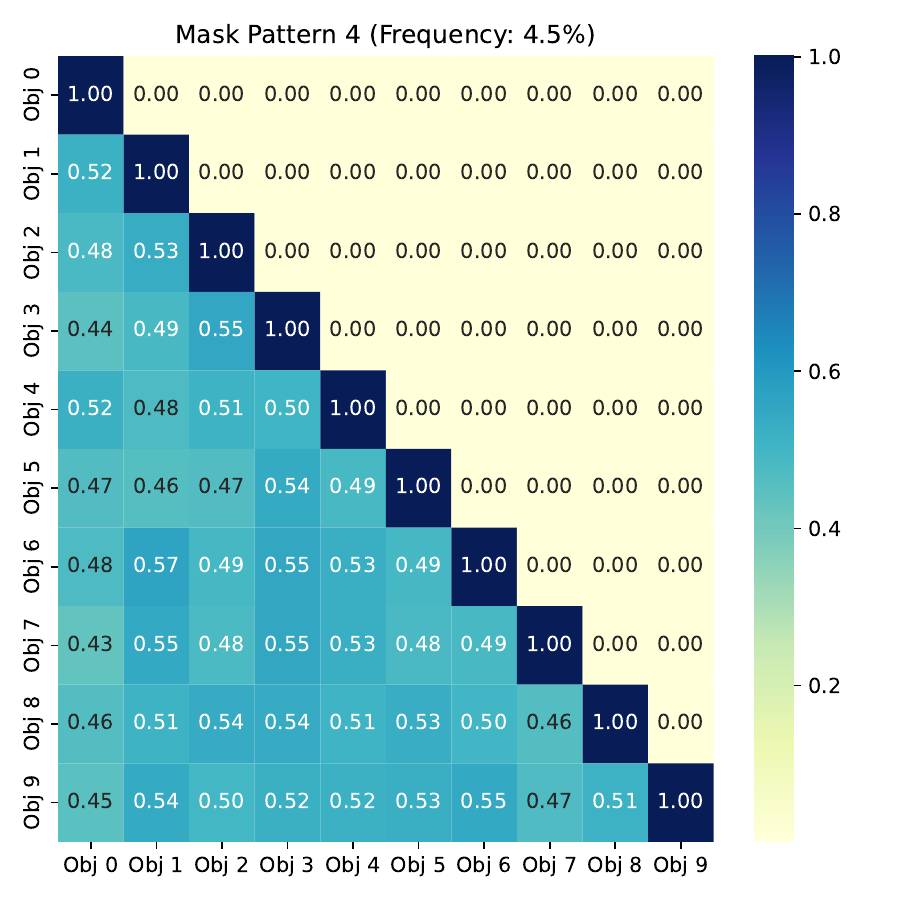}
        \caption{Pattern 4 (k=4)}
        \label{fig:pattern_4}
    \end{subfigure}
    \hfill
    \begin{subfigure}[b]{0.115\textwidth}
        \centering
        \includegraphics[width=\linewidth]{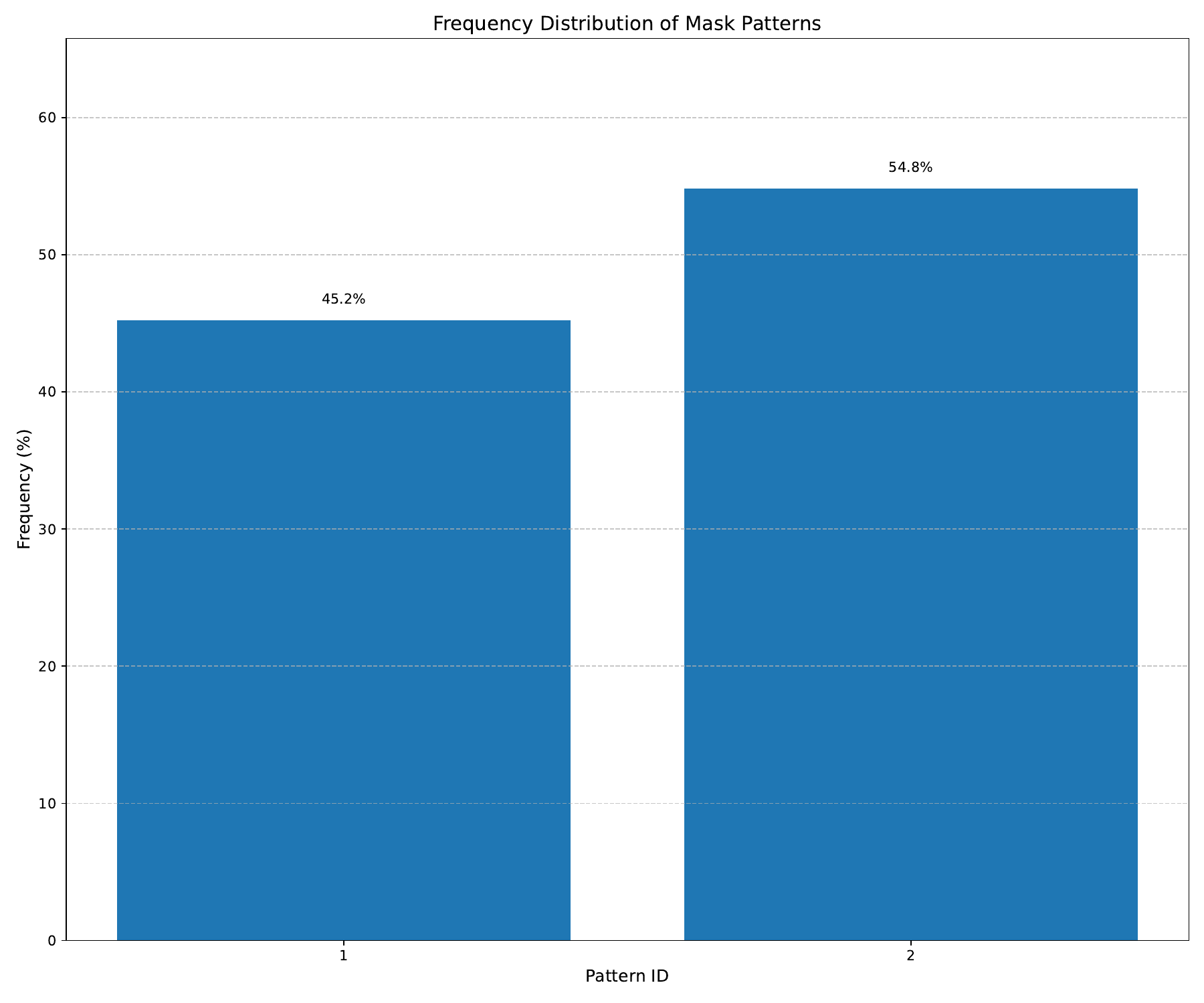}
        \caption{Freq. dist. (k=2)}
        \label{fig:pattern_frequency_distribution_2code}
    \end{subfigure}
    \hfill
    \begin{subfigure}[b]{0.115\textwidth}
        \centering
        \includegraphics[width=\linewidth]{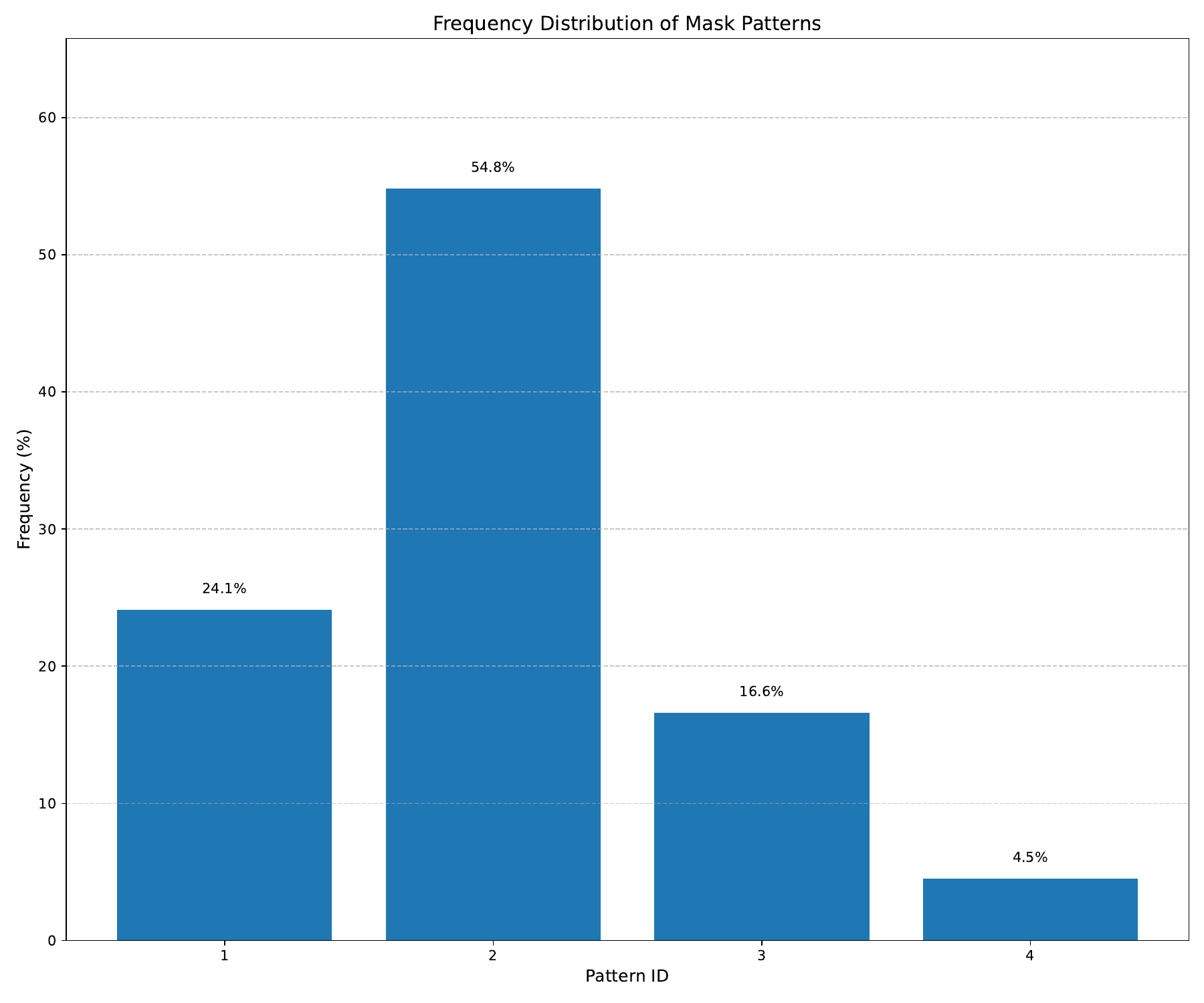}
        \caption{Freq. dist. (k=4)}
        \label{fig:pattern_frequency_distribution_4code}
    \end{subfigure}
    \caption{Comparison of causal patterns discovered with different numbers of meta states. When the number of meta states is set to 2, the model learns two distinct causal patterns (a,d) that correspond to the meta states. With 4 meta states, the model learns four patterns (b,c,e,f), where some patterns share similarities (Patterns 1 and 3), while others occur with lower frequency (Pattern 4) as shown in the frequency distributions (g,h).}
    \label{fig:causal_patterns_comparison}
\end{figure}
    
\begin{algorithm}[t]
\caption{Curious Causality-Seeking Meta Causal World Modeling}
\label{alg:meta_causal}
\begin{algorithmic}[1]
\REQUIRE Observational states $X$, meta state embedding space $U$, initial codebook embeddings $\{e_u\}$, encoder $E$, decoder $D$, hyperparameters $\lambda_{\text{sparse}}, \lambda_{\text{mask}}, \lambda_{\text{quantization}}$
\STATE Initialize encoder parameters $\phi$, decoder parameters $\theta$, and embeddings $\{z_u\}_{u \in U}$
\WHILE{not converged}
    \STATE Compute embedding assignment: $u \leftarrow \operatorname*{arg\,min}_{u \in U}\|E_\phi(X)-z_u\|_2^2$
    \STATE Compute causal skeleton probabilities matrix: $\hat{M}_u \leftarrow D(e_u)$ 
    \STATE Sample causal skeleton matrix $M_u[i,j] \sim \text{GumbelSoftmax}(\hat{M}_u[i,j])$
    
    \STATE Select intervention $\text{do}(X_i=x'_i)$ maximizing reward $R_t$ in Section~\ref{subsec:intervention}
    \STATE Perform intervention and record resulting state transitions
    
    \STATE Estimate causal effects $\Delta_{ij}$ from interventions and update causal mask parameters via $\mathcal{L}_{\text{mask}}$ 
    
    \STATE Predict next state $X_j^{t+1}$ from parent set $Pa_{\mathcal{G}_u}(X_j)$ and update world model via $\mathcal{L}_{\text{MLE}}$ 
    
    \STATE Encourage sparsity in causal graph via $\mathcal{L}_{\text{sparse}}$
    \STATE Update embedding codebook $\{e_u\}$ via $\mathcal{L}_{\text{quantization}}$
    
\ENDWHILE
\RETURN Learned causal subgraphs $\mathcal{G}_u$, world model parameters, and embeddings $\{e_u\}$
\end{algorithmic}
\end{algorithm}

    \section{Related Work}
    \paragraph{World Models.} 
    World models \citep{ha2018world, schmidhuber1990making, sutton1991dyna} enable agents to summarize past interactions, predict future states in the environment, and evaluate candidate actions \citep{hafner2020mastering, ha2018world, pearce2024scaling, hansen2022temporal, hansen2023td}. However, methods that rely solely on statistical correlations often break down when environmental conditions shift, undermining their ability to generalize robustly \citep{richens2024robust, wang2022causal, goyal2021factorizing, goyal2019recurrent}.

    \paragraph{Causal Discovery for Open-Ended World.}
    Causal discovery provides a rigorous framework for modeling the generative processes that govern complex systems, revealing how one variable brings about changes in another. By explicitly representing cause–effect relationships, causal methods yield more compact and invariant descriptions of reality than do purely statistical correlations \citep{janzing2010causal, li2020causal, yao2022learning, bongers2018causal, chen2024doubly, cheng2025empowering, NEURIPS2022_cf4356f9, lin2024because}.
    Causal representation learning provides a useful tool to learn causal structure, which aims to recover causally meaningful latent factors and structures, enabling invariance and transfer in downstream tasks \citep{scholkopf2021toward, locatello2020weakly, ke2021systematic,Yang_2021_CVPR,NEURIPS2023_fc657b7f,10.1145/3459637.3482305}. However, most of the work of causal representation learning focusses on learning a static causal structures passively from observational data. 
    
    However, identifying causal structure from observational data alone is a challenge since multiple directed acyclic graphs can entail the same conditional independencies and v-structures~\citep{wang2024building}. To resolve these ambiguities, researchers have turned to the collection of active causal interventional data by actively perturbing variables to distinguish among candidate graphs \citep{meinshausen2016methods, hyttinen2014constraint, triantafillou2015constraint, brouillard2020differentiable, seitzer2021causal}. More recent work has even addressed settings with unknown intervention targets \citep{hauser2012characterization, hauser2015jointly, faria2022differentiable, kumar2021disentangling}.

    Despite these advances, most causal discovery algorithms presume a single uniform, context-independent causal graph \citep{NEURIPS2023_65496a49, alias2021neural, lei2022causal, tomar2021model, zhang2019learning, sontakke2021causal, yu2024learning, cao2025towards} in the world. Some recent methods have sought to learn causal structures in changing world~\citep{pitis2020counterfactual, chitnis2021camps, wang2023elden, pitis2022mocoda, lowe2022amortized, hwang2023discovery, hwang2024fine, jamshidi2023causal}, but they typically require strong prior knowledge \citep{pitis2020counterfactual, huang2020causal} or complete access to a known dynamic model \citep{chitnis2021camps}, and they lack active, curiosity-driven exploration strategies for open-ended environments \citep{pearl2009causality}. 

    We introduce an intervention-driven framework that empowers an agent to actively probe its surroundings in order to uncover context-dependent causal graphs. By combining targeted interventions with a curiosity-driven exploration policy, our approach adaptively reveals the global causal rules governing rich, previously unseen environments. Extended related work could be found in Appendix~\ref{appendix:extend_related_work}.
    
\section{Experiments}
\label{sec:experiment}
We evaluate our method by examining the following research questions:
(1) Does our method learn a more accurate causal world model? (Table~\ref{tab:prediction_accuracy})
(2) Does it enhance performance on downstream tasks? (Table~\ref{tab:downstream_task})
(3) How does overparameterization affect our method's performance? (Figure~\ref{fig:codebook})
(4) Do the curiosity-driven reward and intervention verification improve the learned causal world model quality? (Table~\ref{tab:prediction_accuracy}, Table~\ref{tab:downstream_task})
(5) Does our method enable more accurate causal subgraph learning? (Figure~\ref{fig:causal_graphs_comparison})

\subsection{Experiment Setup}
We compare our method against several baselines: \textbf{MLP}, a dense model that predicts transition dynamics $p(x_{t+1}|x_t,a_t)$; \textbf{Modular}, a modular network with separate modules for each state variable; \textbf{GNN}~\citep{kipf2019contrastive}, a graph neural network for predicting transition dynamics; \textbf{NPS}~\citep{alias2021neural}, which learns sparse and modular dynamics; \textbf{CDL}~\citep{wang2022causal}, which learns a static causal model from data; \textbf{GRADER}~\citep{ding2022generalizing}, which learns a static causal model using conditional independence tests; \textbf{Sandy-Mixure}~\citep{pitis2020counterfactual}, which uses Jacobian matrices of MLP layers to learn the local causal graph; \textbf{Transformer}~\citep{urpi2024causal}, which infers causal graphs by attention; \textbf{NCD}~\citep{hwang2024fine}, a neural causal discovery algorithm for learning causal graphs for each sample; and \textbf{FCDL}~\citep{hwang2024fine} learns causal graph to facilitate the robustness for reinforcement learning. Please check Appendix~\ref{appendix:experiment} for details of the experimental setup.
\begin{wrapfigure}{r}{0.42\textwidth}  
    \centering
    \begin{subfigure}[b]{0.4\textwidth}
        \centering
        \includegraphics[width=\linewidth]{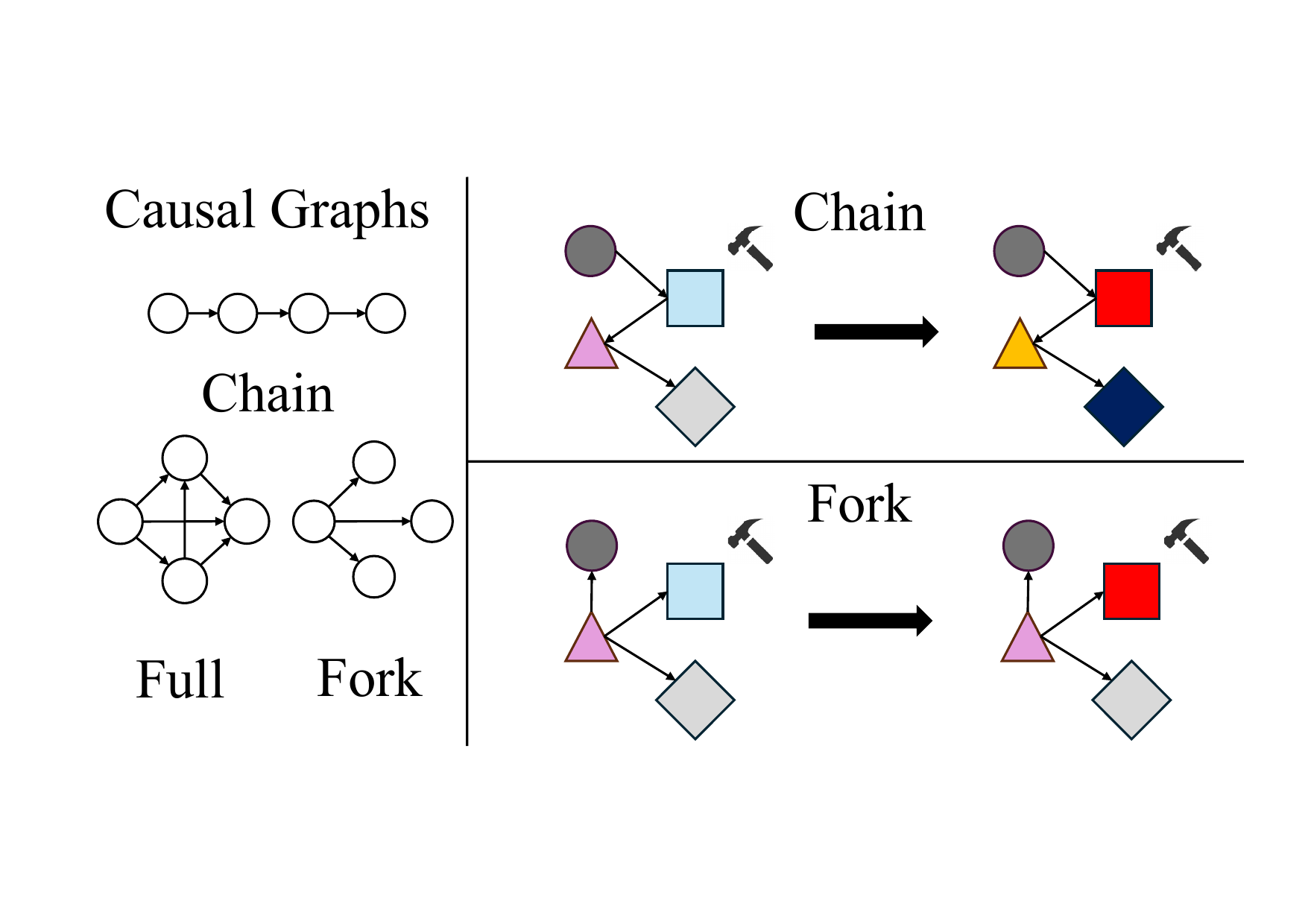}
    \end{subfigure}


    \begin{subfigure}[b]{0.19\textwidth}
        \centering
        \includegraphics[width=\linewidth]{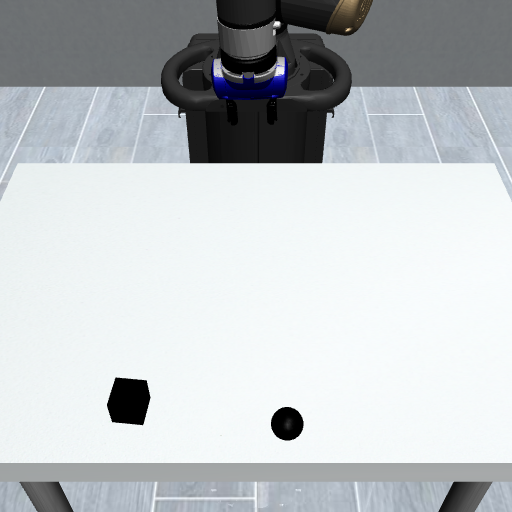}
    \end{subfigure}
    \hfill
    \begin{subfigure}[b]{0.19\textwidth}
        \centering
        \includegraphics[width=\linewidth]{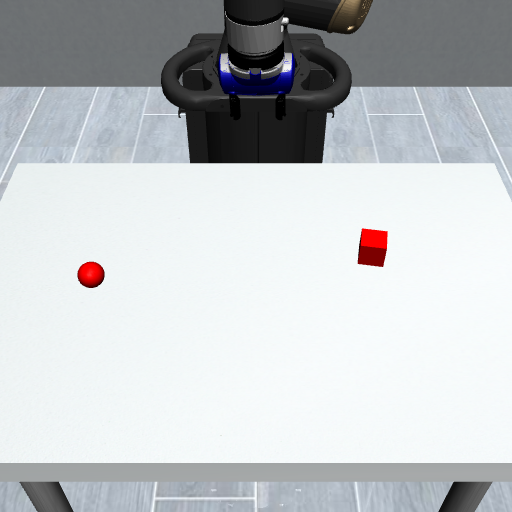}
    \end{subfigure}
    
    \caption{Visualization of environments: (top) \textbf{Chemical}; (bottom) \textbf{Magnetic}.}
    \vspace{-1.5cm}
    \label{fig:magnetic_env}
\end{wrapfigure}
\vspace{-10pt}
\subsection{Environments}
\textbf{Chemical}~\citep{ke2021systematic}
We use the Chemical environment to evaluate the performance of the proposed method on learning the causal graphs in a system with multiple causal structures.
There are several causal graphs (\textit{full}, \textit{fork}, \textit{chain}) in the Chemical environment and the causal graph depends on the state of the objects. 
We use two settings of the Chemical environment: 
(1) \textit{full-fork (Fork)} and (2) \textit{full-chain (Chain)}. 
\textbf{Magnetic}~\citep{hwang2024fine}
The \textit{Magnetic} environment is a robot arm manipulation task (Figure~\ref{fig:magnetic_env}). This environment is built based on the Robosuite suite. Please refer to Appendix~\ref{appendix:experiment} for more details.

\subsection{Experiment Results}
\paragraph{Prediction Accuracy.}
We evaluate the prediction accuracy of our proposed method in comparison to several baseline approaches. To systematically assess robustness, we introduce varying levels of noise to the state information. For the full-fork and full-chain tasks, we corrupt the values of 2, 4, and 6 nodes, respectively, to measure performance degradation under increasing noise. In the magnetic environment, we assess prediction accuracy when box position coordinates and ball/box color properties are corrupted, thereby testing the model's performance under partial observability. All experiments are conducted eight times, with results averaged.
As shown in Table~\ref{tab:prediction_accuracy}, our method consistently achieves the highest prediction accuracy across nearly all baselines, demonstrating its effectiveness in learning a more accurate causal world model.

\textbf{Downstream Task}
We evaluate our method against baselines on the Chemical downstream task, using the trained model to predict future states with the learned model for decision making. As shown in Table~\ref{tab:downstream_task}, our approach achieves the highest reward, demonstrating its effectiveness in learning a more accurate causal world model.

\textbf{Overparameterization}
We evaluate our method using 2, 4, 8, and 16 meta states, with results presented in Figure~\ref{fig:causal_patterns_comparison}. Performance declines markedly when using fewer meta states than actually exist, as the model fails to capture distinct causal structures. Conversely, overparameterized models consistently outperform underparameterized ones, empirically validating Theorem~\ref{thm:identifiability_of_overparameterized_domain_variables}. 
The learned causal graphs with codebook sizes of 2 and 4 reveal that with 2 meta states, the model learns two distinct causal graphs (Pattern 1 and Pattern 2) corresponding to the two meta states, consistent with Theorem~\ref{thm:identifiability_of_domain_variables}. With 4 meta states, some codes correspond to similar underlying causal graphs (as shown by the similarity between Pattern 1 and Pattern 3), while codes representing significantly different causal graphs (such as Pattern 4) occur less frequently, minimally affecting overall model performance. This demonstrates the model's capacity to consolidate redundant representations when overparameterized, aligning with our theoretical findings.

\textbf{Ablation Study}
We conduct ablation studies to evaluate the effectiveness of the proposed method. We remove the following components from the proposed method: (1) the causal loss $\mathcal{L}_{\text{mask}}$ and (2) the reward function. The first ablation study aims to test the effectiveness of the intervention verification, while the second ablation study aims to test the effectiveness of the active intervention exploration. The results are shown in Table~\ref{tab:downstream_task}. Overall, the proposed method outperforms the ablation methods. The performance drop of removing the causal loss $\mathcal{L}_{\text{mask}}$ indicates that the incorporation of the causal loss can help the model learn a more accurate causal world model. The performance drop of removing the reward function indicates that the curiosity-based reward function can help the agent learn a more accurate causal world model through active exploration.

\textbf{Partial Observation} We modify the Chemical environment to explicitly simulate challenges arising from limited observation windows. This experimental setup evaluates how our method performs as new variables gradually become observable. The results indicate that our approach remains robust under partial observability and shows strong potential for handling more realistic, dynamically evolving environments. The results are shown in Table~\ref{tab:partial}.

\textbf{Efficiency} We compare baselines with longer training rounds and more parameters to show efficiency of our method. Results are reported in Table~\ref{tab:data} and Table~\ref{tab:param}.

\textbf{Causal Subgraph}
We visualize the learned causal subgraphs in Figure~\ref{fig:causal_graphs_comparison}. It is evident that the proposed method successfully captures the underlying causal structures of the environment. 

Additional results are in Appendix~\ref{appendix:experiment} and the source code is included in the supplementary materials.





\begin{figure}[t]
    \centering
    \begin{subfigure}[b]{0.49\linewidth}
        \centering
        \includegraphics[width=\linewidth]{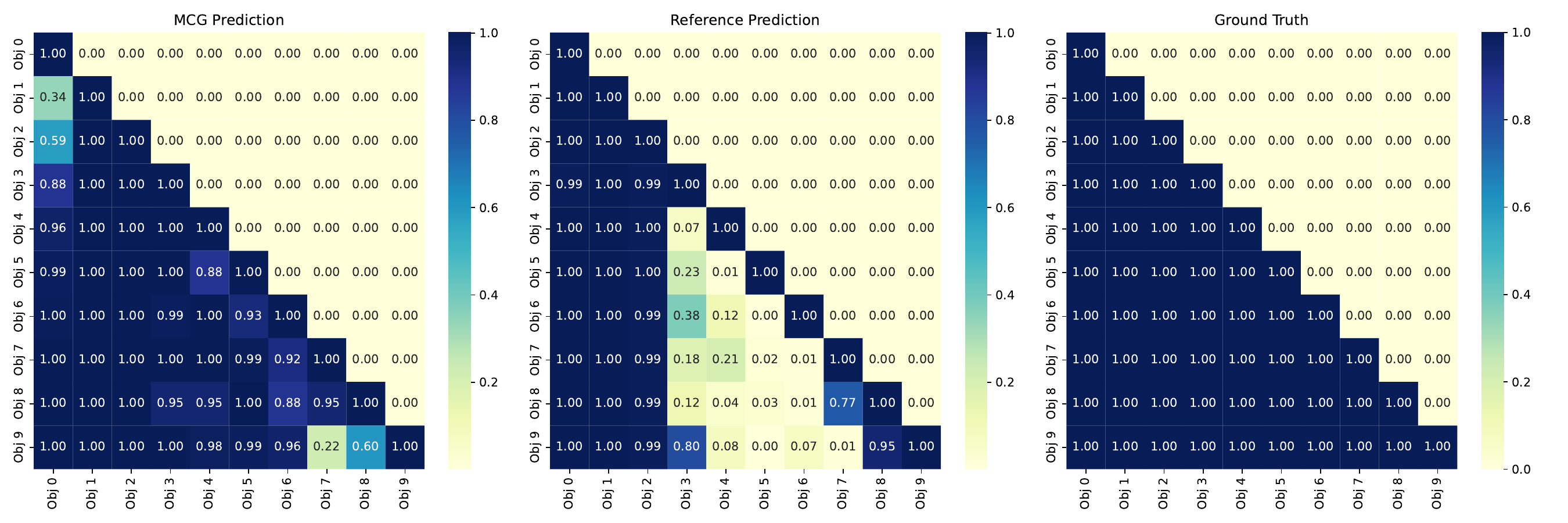}
        \caption{Causal graph of \textit{Full}.}
        \label{fig:local_best_high_mean_comparison}
    \end{subfigure}
    \begin{subfigure}[b]{0.49\linewidth}
        \centering
        \includegraphics[width=\linewidth]{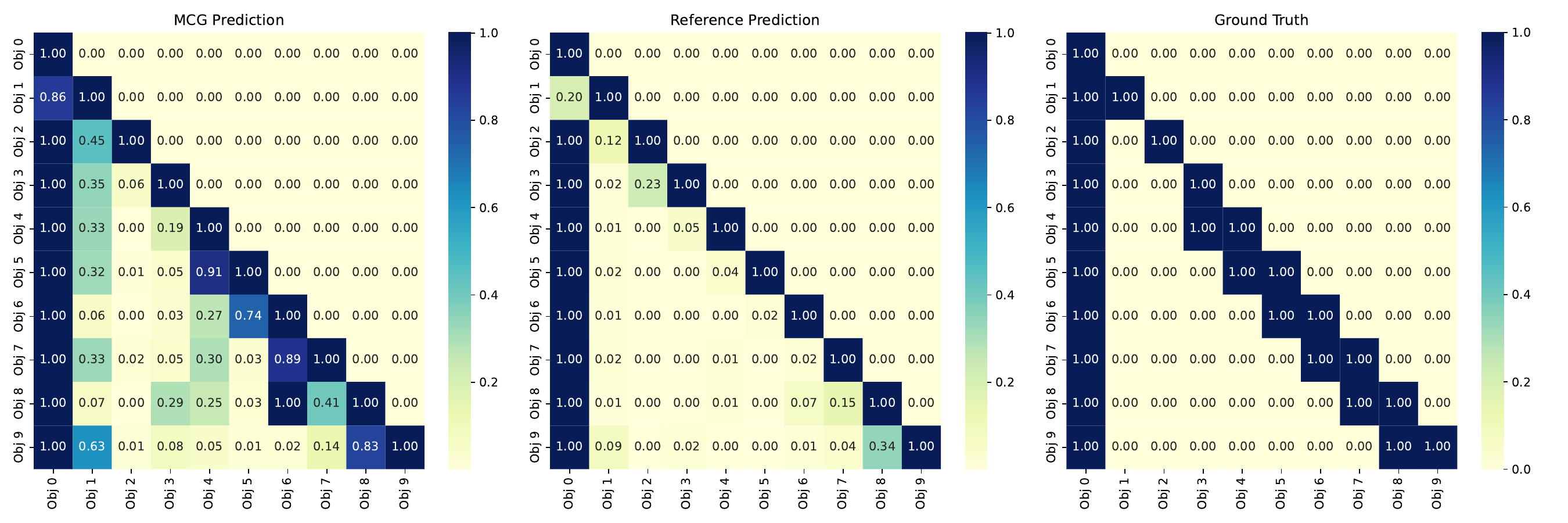}
        \caption{Causal graph of \textit{Chain}.}
        \label{fig:local_best_low_mean_comparison_2}
    \end{subfigure}
    \caption{Comparison of learned causal graphs in the Chemical task: left—MCG (ours); middle—reference (FCDL); right—ground truth.}
    \label{fig:causal_graphs_comparison}
\end{figure}
\section{Conclusion, Limitation and Future Work}
\label{sec:conclusion}
In this work, we introduced a method to model the environment using the Meta-Causal Graph that explicitly captures how causal relationships evolve across different environmental contexts-meta state. Our approach addresses two critical limitations in existing causal world models: the inability of uniform causal graphs to capture context-dependent changes and the lack of active exploration making it hard to learn the open-world environment. We theoretically established the identifiability of meta states and their corresponding causal subgraphs, and developed the Curious Causality-Seeking Agent framework that actively explores environments through interventions guided by a curiosity-driven reward function. Our empirical evaluations demonstrated that our method outperforms existing approaches across both synthetic tasks and a challenging robot arm manipulation task. 

\textbf{Limitation and Future Work} Despite these advances, real-world exploration remains subject to inherent constraints: certain states are fundamentally unreachable via any sequence of interventions, and the implications of such unobservability for causal inference have yet to be analyzed (see Appendix~\ref{sec:reachability}). Moreover, interventions incur costs, and practical agents must operate within limited budgets. Future research will investigate the impact of unreachable states on the identifiability and accuracy of learned causal structures, and will develop budget-aware exploration strategies to optimally allocate limited intervention resources.

\bibliographystyle{plain}
\bibliography{reference}



\newpage
\appendix
\section{Theoretical Analysis and Discussion}\label{appendix: theory}
This section establishes the theoretical foundations for identifying meta states and their corresponding causal subgraphs within our proposed framework. We present a comprehensive analysis of identifiability conditions under two distinct scenarios: swap-label equivalence and observational equivalence. The former addresses cases where meta states can be permuted without altering the underlying causal structure, while the latter examines situations involving overparameterization where the learned model contains more meta states than the ground truth. Additionally, we introduce intervention reachability to formalize which causal relationships are identifiable via feasible interventions.

\subsection{Summary of Key Results}

Our theoretical analysis yields key results on the identifiability of meta states and causal structures:

\textbf{Swap-Label Equivalence:} Under Assumption~\ref{assumption:mixed_data_structure_learning}, we demonstrate that the learned mapping function is swap-label equivalent to the ground truth mapping (Theorem~\ref{thm:identifiability_of_domain_variables}). This equivalence ensures that while the specific labels assigned to meta states may differ between learned and ground truth mappings, the underlying causal relationships remain preserved. The proof demonstrates that any departure from this equivalence leads to higher L1 norm of causal skeleton matrices and thus contradicts the optimality principle underlying our framework.

\textbf{Observational Equivalence:} For overparameterized scenarios where the learned model contains more meta states than the ground truth, we prove that observational equivalence is maintained (Theorem~\ref{thm:identifiability_of_overparameterized_domain_variables}). This result demonstrates that introducing additional meta states does not compromise the recovery of true causal relationships, since the learned mapping consistently preserves the same causal skeleton matrices across all states.

\textbf{Intervention Reachability:} We formalize the mathematical framework for determining which interventions are feasible given environmental constraints. Through matrix representations of state transitions and intervention capabilities, we establish conditions under which specific causal relationships can be identified through sequential interventions (Theorem~\ref{theorem:reachable}). This analysis provides practical guidance for experimental design in causal discovery.

\textbf{Structural Assumptions:} Our results rely on Assumption~\ref{assumption:mixed_data_structure_learning}, which posits that learning from mixed datasets yields causal graphs encompassing the union of all contributing causal relationships. This assumption is both theoretically justified and practically reasonable, as it ensures comprehensive coverage of causal dependencies present in heterogeneous data sources.

The theoretical framework presented in this section provides rigorous guarantees for the identifiability of causal structures while accommodating practical constraints inherent in real-world applications, thereby establishing the soundness of our proposed methodology.
\subsection{Swap-Label Equivalence}
In the context of Meta-Causal Graphs, swap-label equivalence captures the invariance of causal structure under permutation of meta state labels. This phenomenon arises when different labelings of meta states yield identical causal relationships across the state space.

Consider a Meta-Causal Graph $G$ with meta states $\{u_1, u_2\}$ and corresponding causal subgraphs $\{M_{u_1}, M_{u_2}\}$, as illustrated in Figure~\ref{fig:label_swap_left}. An alternative Meta-Causal Graph $\hat{G}$ with meta states $\{\hat{u}_1, \hat{u}_2\}$ and subgraphs $\{\hat{M}_{\hat{u}_1}, \hat{M}_{\hat{u}_2}\}$ is shown in Figure~\ref{fig:label_swap_right}. 

Two mappings $C: \mathcal{X} \to U$ and $\hat{C}: \mathcal{X} \to \hat{U}$ are swap-label equivalent if there exists a bijection $g: U \to \hat{U}$ such that for any state $x \in \mathcal{X}$, the causal skeleton matrices satisfy:
$$M_{C(x)} = \hat{M}_{g(C(x))} = \hat{M}_{\hat{C}(x)}$$

This equivalence ensures that the learned mapping $\hat{C}$ preserves the same causal structure as the ground truth mapping $C$, despite potentially different meta state assignments. As illustrated in Figure~\ref{fig:label_swap}, such permutations preserve the fundamental causal relationships within the Meta-Causal Graph.


\begin{figure}[htbp]
  \centering
  \captionsetup{skip=2pt}
  \captionsetup[subfigure]{justification=centering}

  \begin{subfigure}[t]{0.49\linewidth}
    \centering
    \includegraphics[width=0.65\linewidth,trim=2pt 2pt 2pt 2pt,clip]{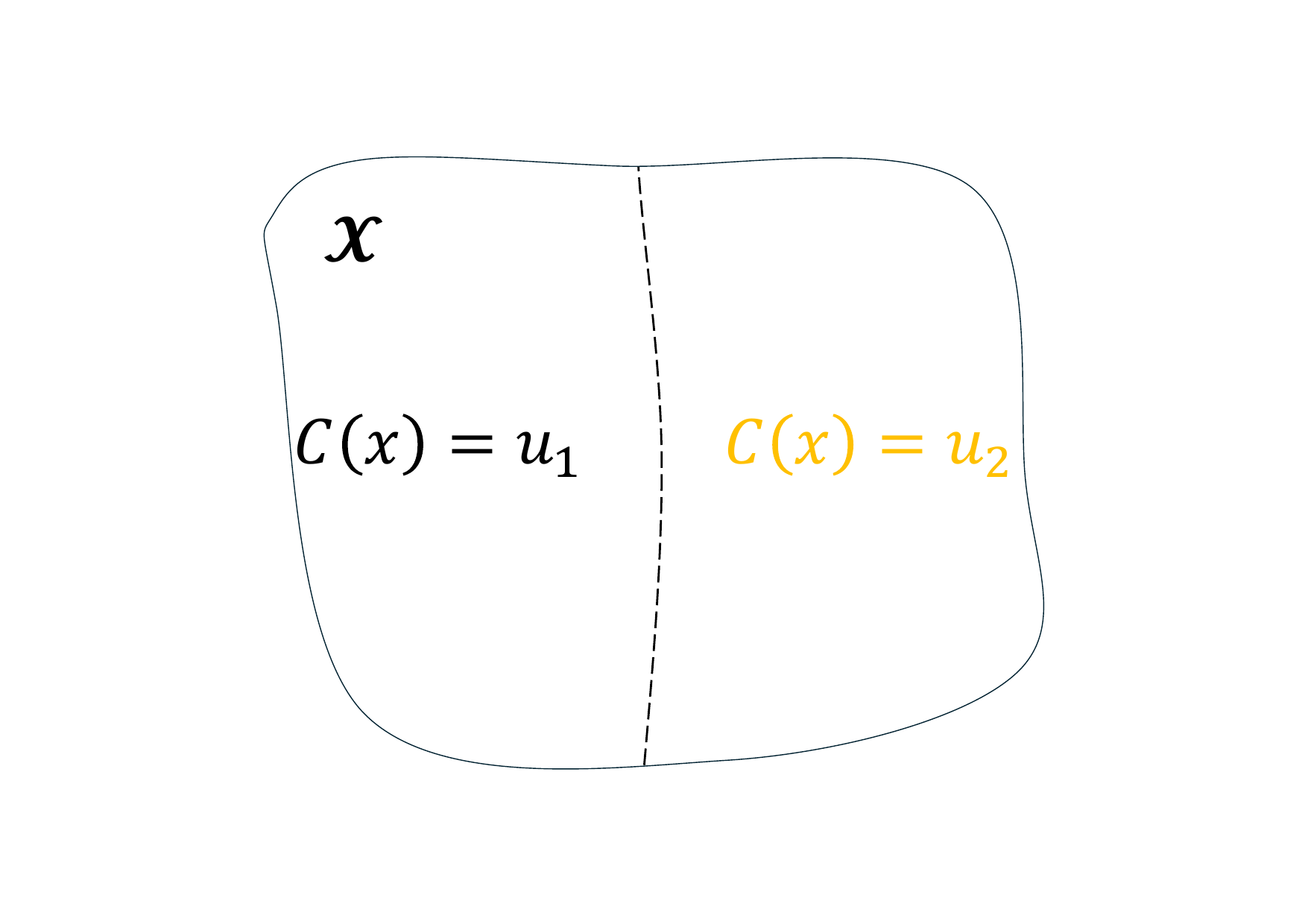}
    \includegraphics[width=0.3\linewidth,trim=30pt 30pt 30pt 30pt,clip]{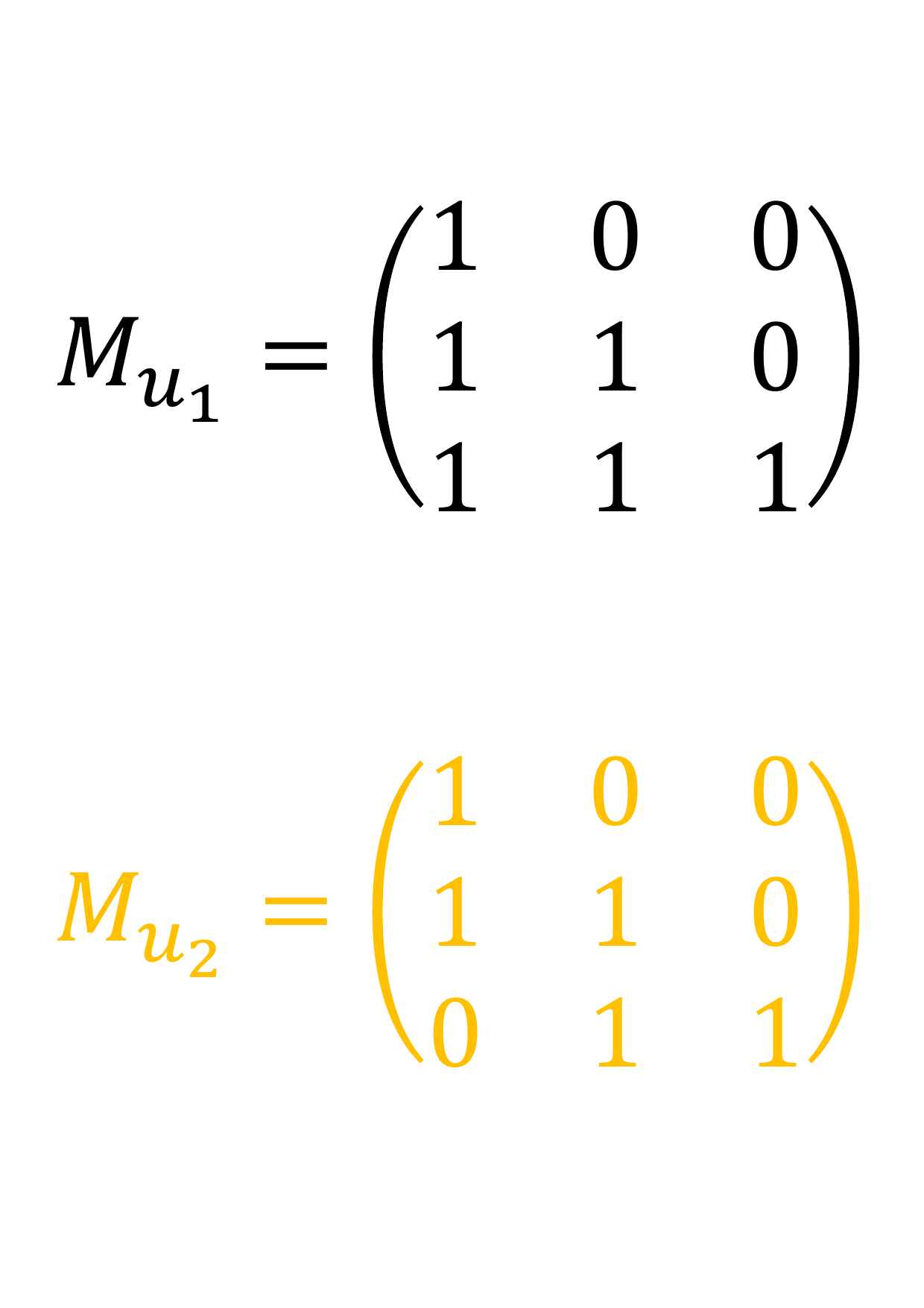}
    \caption{Two meta states $(u_1,u_2)$ with corresponding subgraphs $M_{u_1}$ and $M_{u_2}$.}
    \label{fig:label_swap_left}
  \end{subfigure}
  \hfill
  \begin{subfigure}[t]{0.49\linewidth}
    \centering
    \includegraphics[width=0.65\linewidth,trim=2pt 2pt 2pt 2pt,clip]{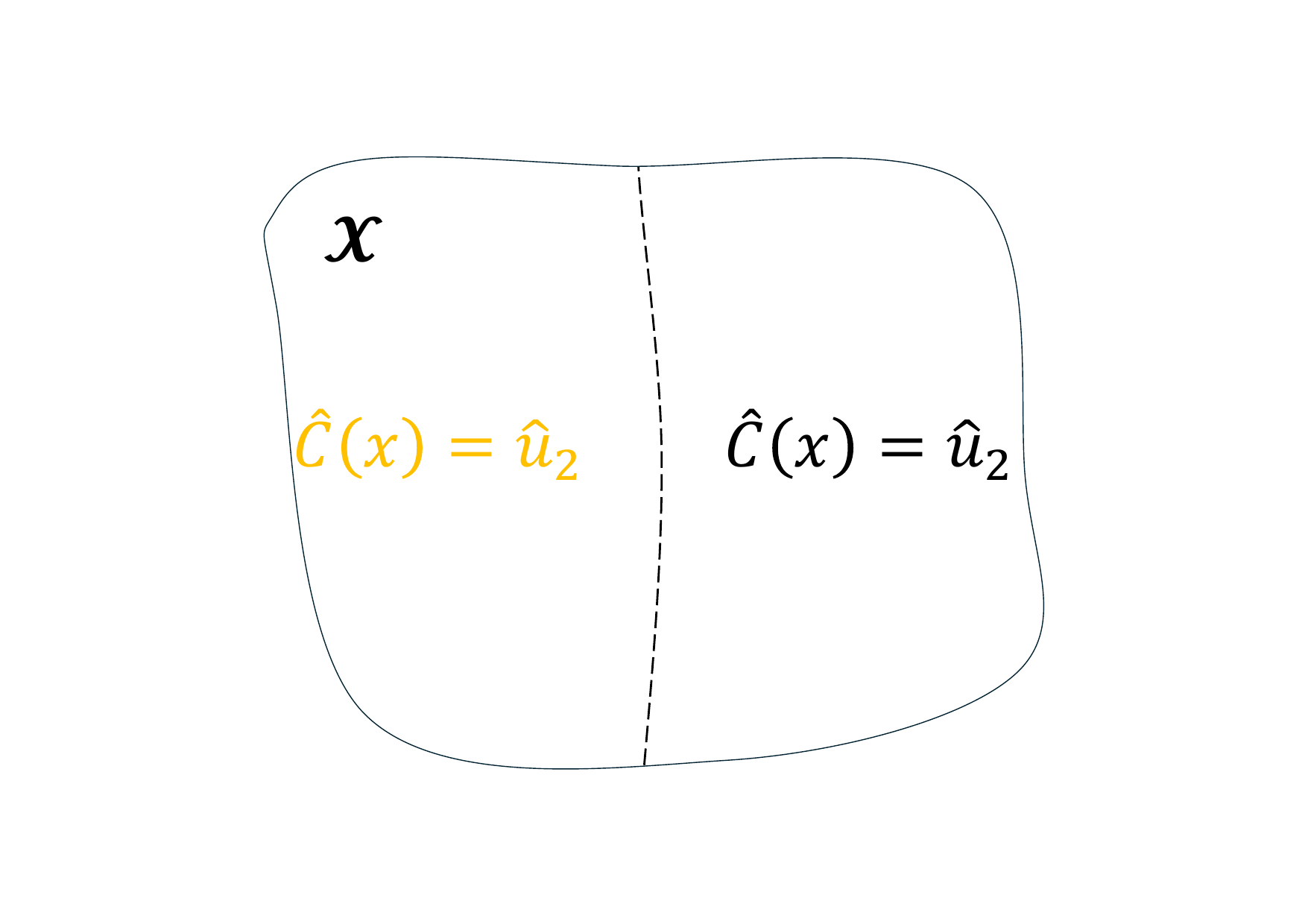}
    \includegraphics[width=0.3\linewidth,trim=30pt 30pt 30pt 30pt,clip]{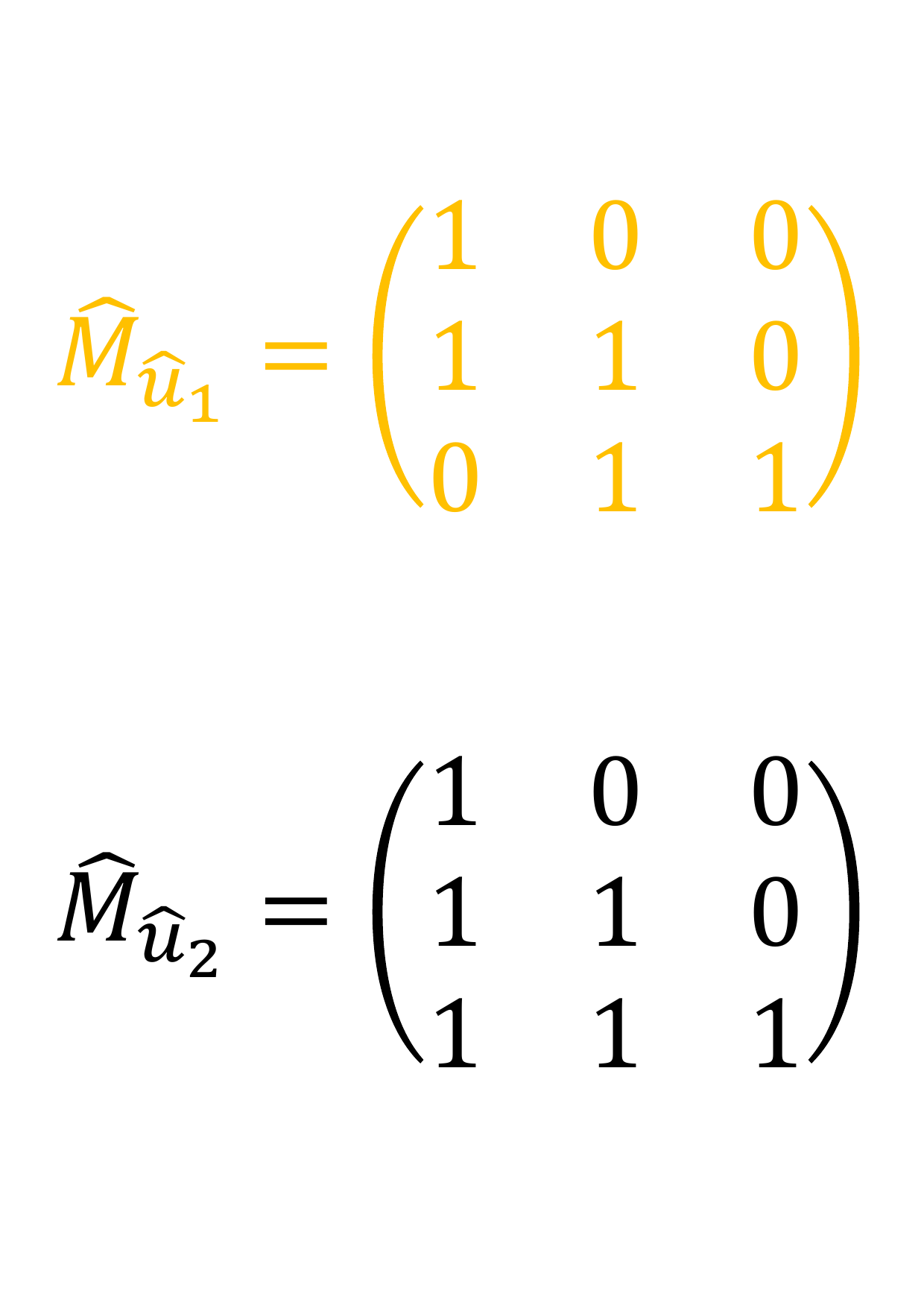}
    \caption{Label-swapped meta states $(\hat{u}_1,\hat{u}_2)$ with corresponding subgraphs $\hat{M}_{\hat{u}_1}$ and $\hat{M}_{\hat{u}_2}$.}
    \label{fig:label_swap_right}
  \end{subfigure}
    \caption{\textbf{Swap-label equivalence in Meta-Causal Graphs.} Left: Ground truth Meta-Causal Graph with meta states $\{u_1, u_2\}$ and corresponding causal subgraphs $\{M_{u_1}, M_{u_2}\}$. Right: Learned Meta-Causal Graph with meta states $\{\hat{u}_1, \hat{u}_2\}$ and subgraphs $\{\hat{M}_{\hat{u}_1}, \hat{M}_{\hat{u}_2}\}$. Despite different meta state labels, both mappings produce identical causal skeleton matrices for any state $x \in \mathcal{X}$, demonstrating structural equivalence under label permutation.}
    \label{fig:label_swap}
\end{figure}
\subsection{Proof of Theorem \ref{thm:identifiability_of_domain_variables}}
\begin{proof}
    Consider the reconstruction objective for learning causal skeleton matrices from mixed data. For any clustering $\hat{C}: \mathcal{X} \to \hat{U}$, let $\hat{M}_{\hat{u}}$ denote the learned skeleton matrix for meta state $\hat{u}$. Under Assumption~\ref{assumption:mixed_data_structure_learning}, this matrix satisfies:
    \begin{align*}
        \hat{M}_{\hat{u}}[i,j] = \mathbb{I}\left[\bigcup_{x: \hat{C}(x)=\hat{u}} M_{C(x)}[i,j] = 1\right]
    \end{align*}
    
    Now consider two cases:
    
    \textbf{Case 1:} $\hat{C}$ perfectly aligns with $C$ up to label permutation. Then there exists a bijection $g: U \to \hat{U}$ such that $\hat{C}(x) = g(C(x))$ for all $x \in \mathcal{X}$. In this case, $\hat{M}_{g(u)} = M_u$ for each $u \in U$, achieving perfect reconstruction of individual causal structures.
    
    \textbf{Case 2:} $\hat{C}$ merges states from different true meta states. Then there exist $x_1, x_2 \in \mathcal{X}$ with $C(x_1) = u_1 \neq u_2 = C(x_2)$ but $\hat{C}(x_1) = \hat{C}(x_2) = \hat{u}$. By Assumption~\ref{assumption:mixed_data_structure_learning}:
    \begin{align*}
        \hat{M}_{\hat{u}} = M_{u_1} \cup M_{u_2}
    \end{align*}
    where $\cup$ denotes element-wise logical OR. Since $M_{u_1} \neq M_{u_2}$, we have $\hat{M}_{\hat{u}} \neq M_{u_1}$ and $\hat{M}_{\hat{u}} \neq M_{u_2}$.
    
    Let $\|\cdot\|_1$ denote the L1 norm. For binary matrices, $\|M\|_1$ equals the number of edges in the corresponding causal graph. Since the union operation can only add edges (never remove them), we have:
    \begin{align*}
        \|\hat{M}_{\hat{u}}\|_1 = \|M_{u_1} \cup M_{u_2}\|_1 \geq \max\{\|M_{u_1}\|_1, \|M_{u_2}\|_1\}
    \end{align*}
    
    Moreover, since $M_{u_1} \neq M_{u_2}$, there exists at least one position $(i,j)$ where $M_{u_1}[i,j] \neq M_{u_2}[i,j]$. 
    \begin{align*}
        \|\hat{M}_{\hat{u}}\|_1 > \|M_{u_1}\|_1 \quad \text{and} \quad \|\hat{M}_{\hat{u}}\|_1 > \|M_{u_2}\|_1
    \end{align*}
    
    Consider the expected L1 norm under each partition. For the true partition $C$:
    \begin{align*}
        \mathbb{E}_{x \sim \mathcal{X}}[\|M_{C(x)}\|_1] = \sum_{u \in U} P(C(x) = u) \cdot \|M_u\|_1
    \end{align*}
    
    For the suboptimal partition $\hat{C}$ that merges distinct meta states:
    \begin{align*}
        \mathbb{E}_{x \sim \mathcal{X}}[\|\hat{M}_{\hat{C}(x)}\|_1] = \sum_{\hat{u} \in \hat{U}} P(\hat{C}(x) = \hat{u}) \cdot \|\hat{M}_{\hat{u}}\|_1
    \end{align*}
    
    Since merging increases the L1 norm (as shown above), and each merged cluster $\hat{u}$ has $\|\hat{M}_{\hat{u}}\|_1 > \|M_u\|_1$ for the constituent true meta states $u$, we have:
    \begin{align*}
        \mathbb{E}_{x \sim \mathcal{X}}[\|\hat{M}_{\hat{C}(x)}\|_1] > \mathbb{E}_{x \sim \mathcal{X}}[\|M_{C(x)}\|_1]
    \end{align*}
\end{proof}



\subsection{Observational Equivalence}
Observational equivalence captures scenarios where learned meta state mappings preserve causal structure despite overparameterization. Formally, two mappings $C: \mathcal{X} \to U$ and $\hat{C}: \mathcal{X} \to \hat{U}$ are observationally equivalent if for every state $x \in \mathcal{X}$:
$M_{C(x)} = \hat{M}_{\hat{C}(x)}$,
where $M_{C(x)}$ and $\hat{M}_{\hat{C}(x)}$ are the causal skeleton matrices corresponding to the assigned meta states.
This equivalence ensures that the learned mapping $\hat{C}$ captures identical causal relationships as the ground truth mapping $C$, even when the learned meta state space $\hat{U}$ is larger than the true space $U$. Overparameterization may result in redundant meta states, but the essential causal structure remains preserved.

Figure~\ref{fig:observational_equivalence} illustrates this concept. Left: Ground truth Meta-Causal Graph with meta states $\{u_1, u_2\}$ and subgraphs $\{M_{u_1}, M_{u_2}\}$. Right: Learned overparameterized Meta-Causal Graph with three meta states $\{\hat{u}_1, \hat{u}_2, \hat{u}_3\}$ and corresponding subgraphs $\{\hat{M}_{\hat{u}_1}, \hat{M}_{\hat{u}_2}, \hat{M}_{\hat{u}_3}\}$. Despite having an additional meta state, both mappings yield identical causal skeleton matrices for any given state $x \in \mathcal{X}$, demonstrating observational equivalence under overparameterization.

\begin{figure}[t]
  \centering
  \captionsetup{skip=2pt}
  \captionsetup[subfigure]{justification=centering}

  \begin{subfigure}[t]{0.49\linewidth}
    \centering
    \includegraphics[width=0.6\linewidth,trim=10pt 50pt 10pt 50pt,clip]{figures/label_swap_partition_1.pdf}
    \includegraphics[width=0.25\linewidth,trim=30pt 50pt 30pt 50pt,clip]{figures/label_swap_1.pdf}
    \caption{Two meta states: partition (left) and Meta-Causal Graph (right).}
    \label{fig:two_meta_states}
  \end{subfigure}
  \hfill
  \begin{subfigure}[t]{0.49\linewidth}
    \centering
    \includegraphics[width=0.45\linewidth,trim=100pt 100pt 100pt 100pt,clip]{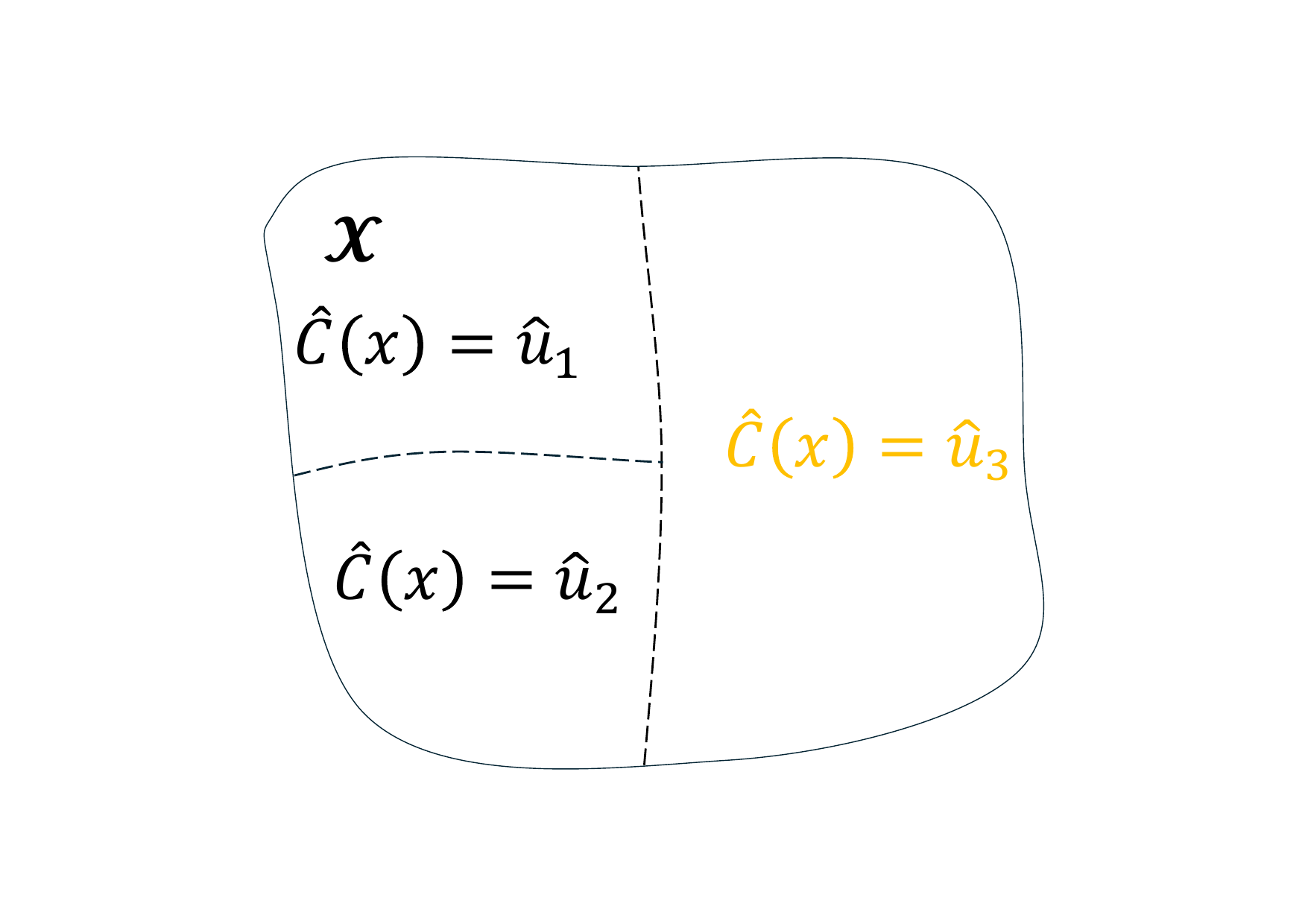}
    \includegraphics[width=0.53\linewidth,trim=50pt 100pt 50pt 100pt,clip]{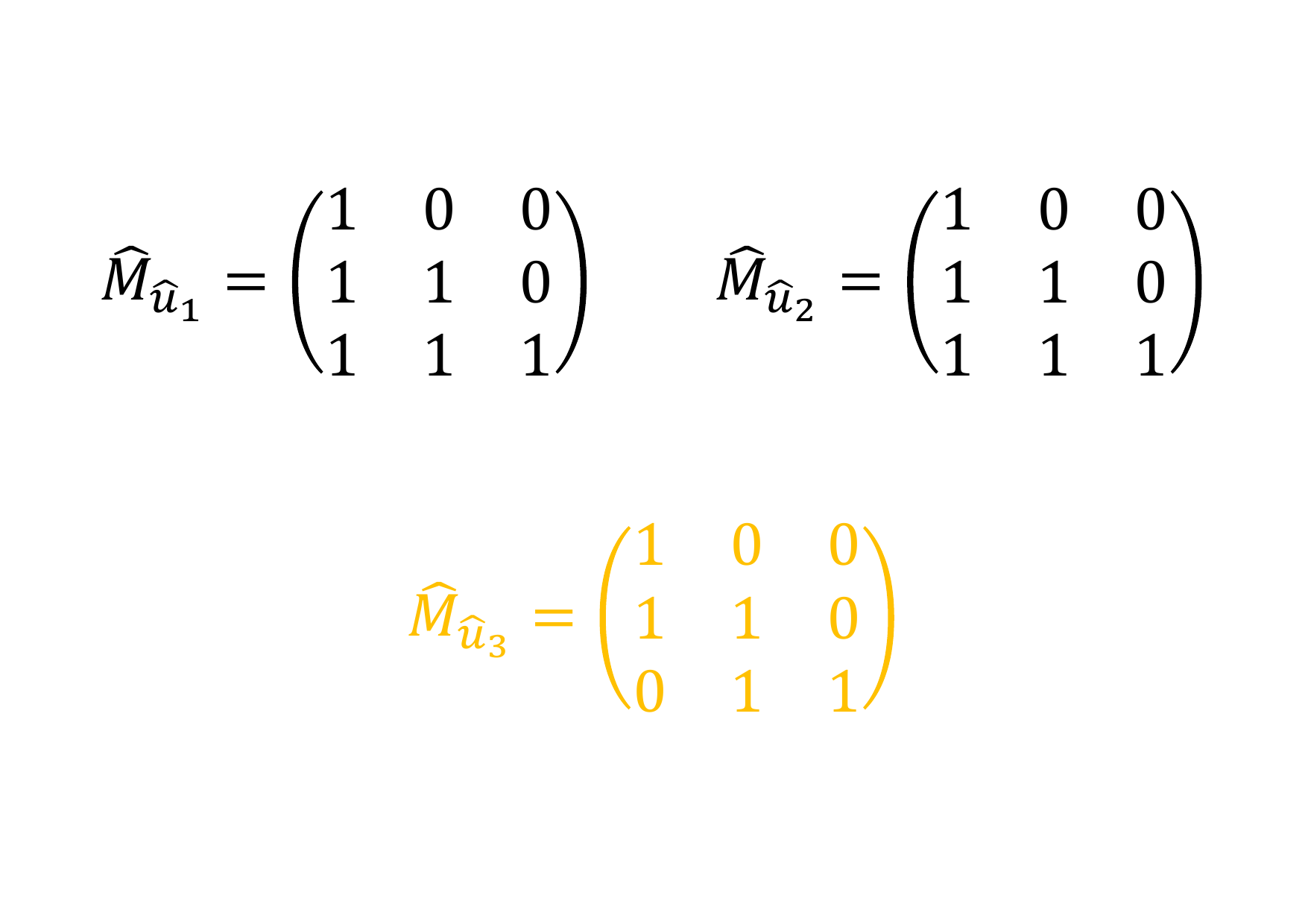}
    \caption{Three meta states: partition (left) and Meta-Causal Graph (right).}
    \label{fig:three_meta_states}
  \end{subfigure}
    \caption{\textbf{Observational equivalence under overparameterization.} Left: Ground truth Meta-Causal Graph with meta states $\{u_1, u_2\}$ and causal subgraphs $\{M_{u_1}, M_{u_2}\}$. Right: Overparameterized Meta-Causal Graph with three meta states $\{\hat{u}_1, \hat{u}_2, \hat{u}_3\}$ and subgraphs $\{\hat{M}_{\hat{u}_1}, \hat{M}_{\hat{u}_2}, \hat{M}_{\hat{u}_3}\}$. Despite the additional meta state, both mappings produce identical causal skeleton matrices for any state $x \in \mathcal{X}$, demonstrating that overparameterization preserves causal structure.}
    \label{fig:observational_equivalence}
\end{figure}
\subsection{Proof of Theorem \ref{thm:identifiability_of_overparameterized_domain_variables}}
\begin{proof}
    We prove that despite overparameterization, the learned mapping preserves the causal structure of the ground truth mapping.
    
    Let $\hat{U}_{\text{active}} = \{\hat{u} \in \hat{U} : \exists x \in \mathcal{X} \text{ s.t. } \hat{C}(x) = \hat{u}\}$ denote the set of meta states that are actually assigned to some state in $\mathcal{X}$.
    
    \textbf{Step 1:} We first establish that the learned mapping cannot merge distinct true meta states. Suppose for contradiction that there exist $x_1, x_2 \in \mathcal{X}$ such that:
    \begin{align*}
        C(x_1) = u_1 \neq u_2 = C(x_2) \quad \text{but} \quad \hat{C}(x_1) = \hat{C}(x_2) = \hat{u}
    \end{align*}
    
    By Assumption~\ref{assumption:mixed_data_structure_learning}, the learned causal skeleton matrix would be:
    \begin{align*}
        \hat{M}_{\hat{u}} = M_{u_1} \cup M_{u_2}
    \end{align*}
    
    Since $M_{u_1} \neq M_{u_2}$, we have $\|\hat{M}_{\hat{u}}\|_1 > \max\{\|M_{u_1}\|_1, \|M_{u_2}\|_1\}$, increasing the expected structural complexity as shown in Theorem~\ref{thm:identifiability_of_domain_variables}. This contradicts optimal clustering.
    
    \textbf{Step 2:} Since distinct true meta states cannot be merged, each active learned meta state $\hat{u} \in \hat{U}_{\text{active}}$ corresponds to exactly one true meta state. That is, for each $\hat{u} \in \hat{U}_{\text{active}}$, there exists a unique $u \in U$ such that:
    $
        \{x : \hat{C}(x) = \hat{u}\} \subseteq \{x : C(x) = u\}
    $
    
    \textbf{Step 3:} For any $x \in \mathcal{X}$, if $\hat{C}(x) = \hat{u}$ and the corresponding true meta state is $u$, then by Assumption~\ref{assumption:mixed_data_structure_learning}:
    \begin{align*}
        \hat{M}_{\hat{u}} = M_u
    \end{align*}
    
    Therefore, $M_{C(x)} = \hat{M}_{\hat{C}(x)}$ for all $x \in \mathcal{X}$, establishing observational equivalence.
    
\end{proof}
\begin{figure}[t]
  \centering
  \captionsetup{skip=2pt}
  \captionsetup[subfigure]{justification=centering}

  \begin{subfigure}[t]{0.49\linewidth}
    \centering
    \includegraphics[width=0.4\linewidth,trim=150pt 100pt 150pt 100pt,clip]{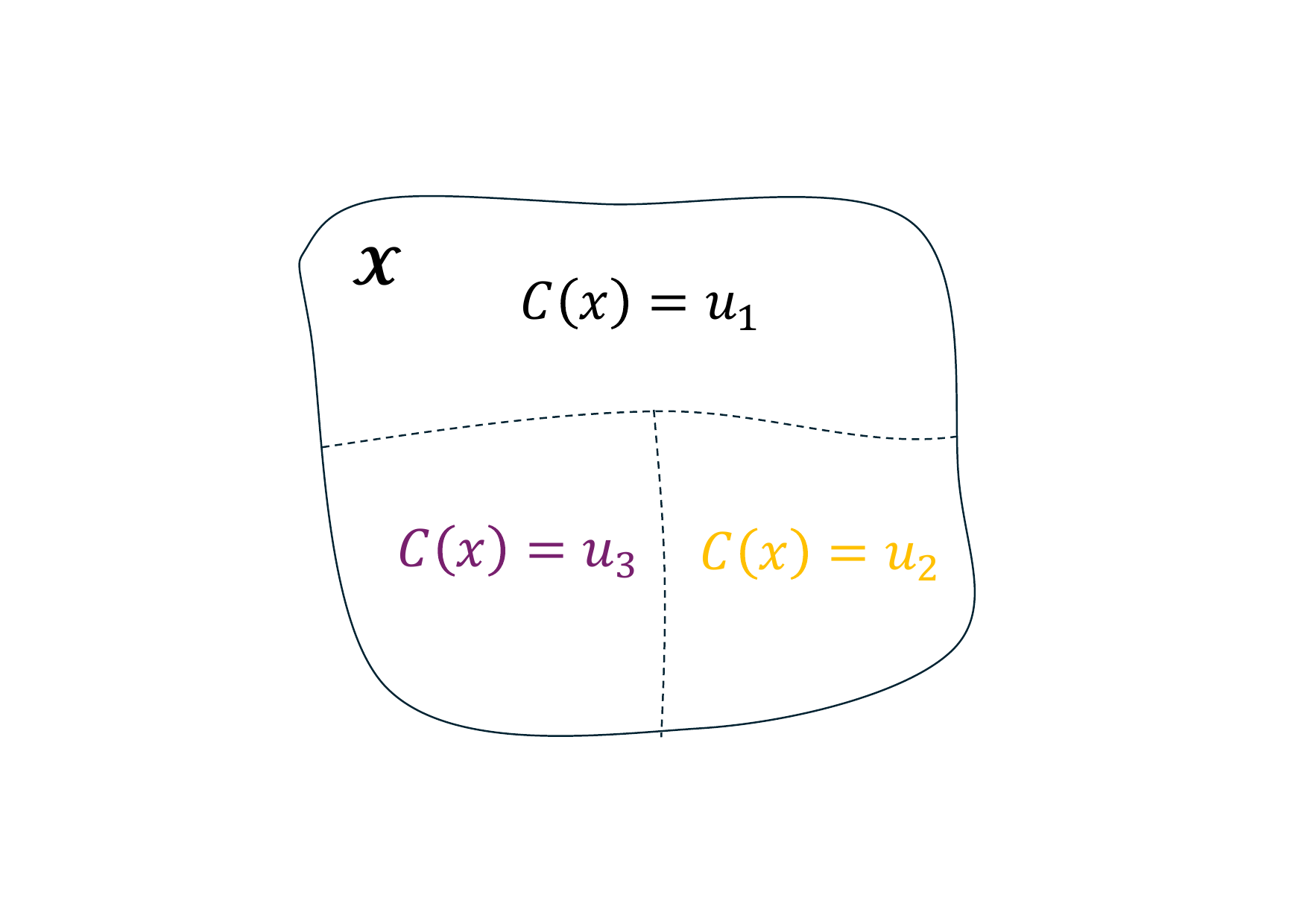}\hfill
    \includegraphics[width=0.6\linewidth,trim=100pt 150pt 100pt 150pt,clip]{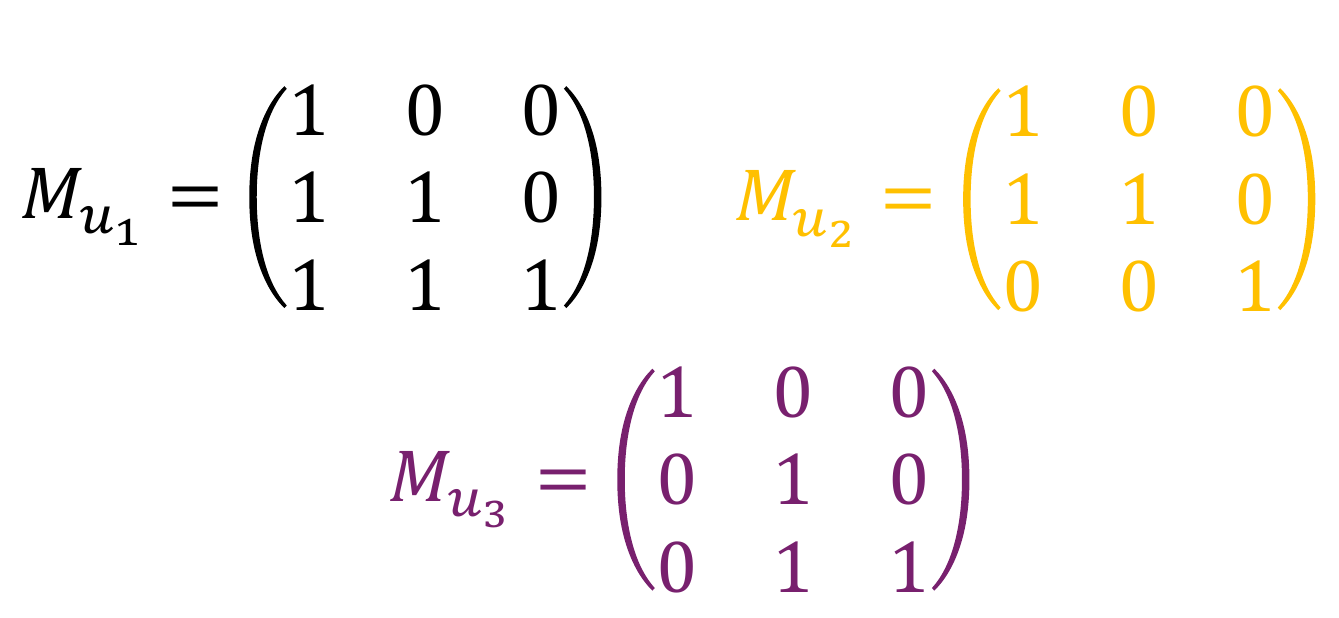}
    \caption{Ground truth: graph (left) and skeleton matrix (right).}
    \label{fig:gt_pair}
  \end{subfigure}
  \begin{subfigure}[t]{0.49\linewidth}
    \centering
    \includegraphics[width=0.4\linewidth,trim=150pt 100pt 150pt 100pt,clip]{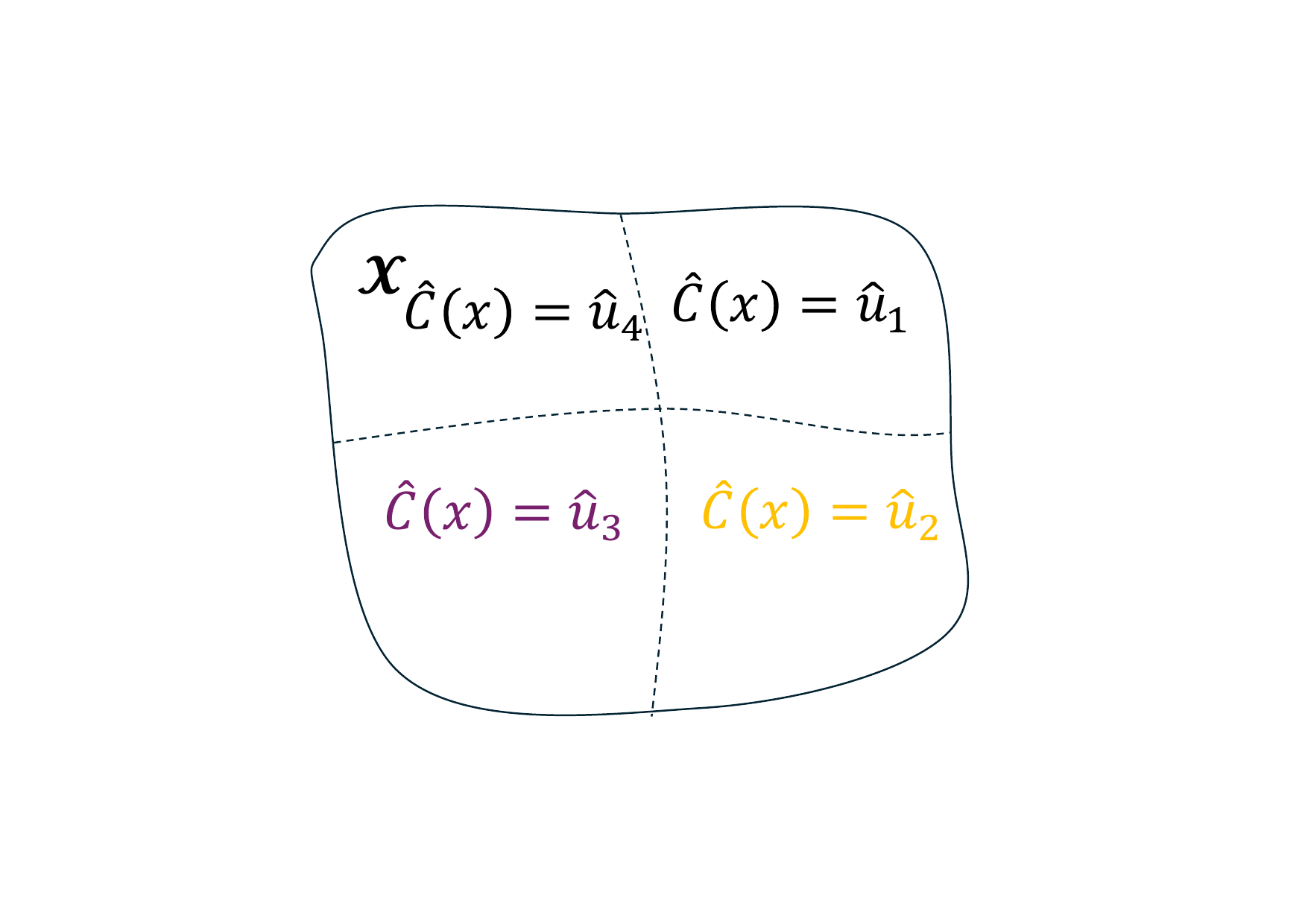}\hfill
    \includegraphics[width=0.6\linewidth,trim=100pt 150pt 100pt 150pt,clip]{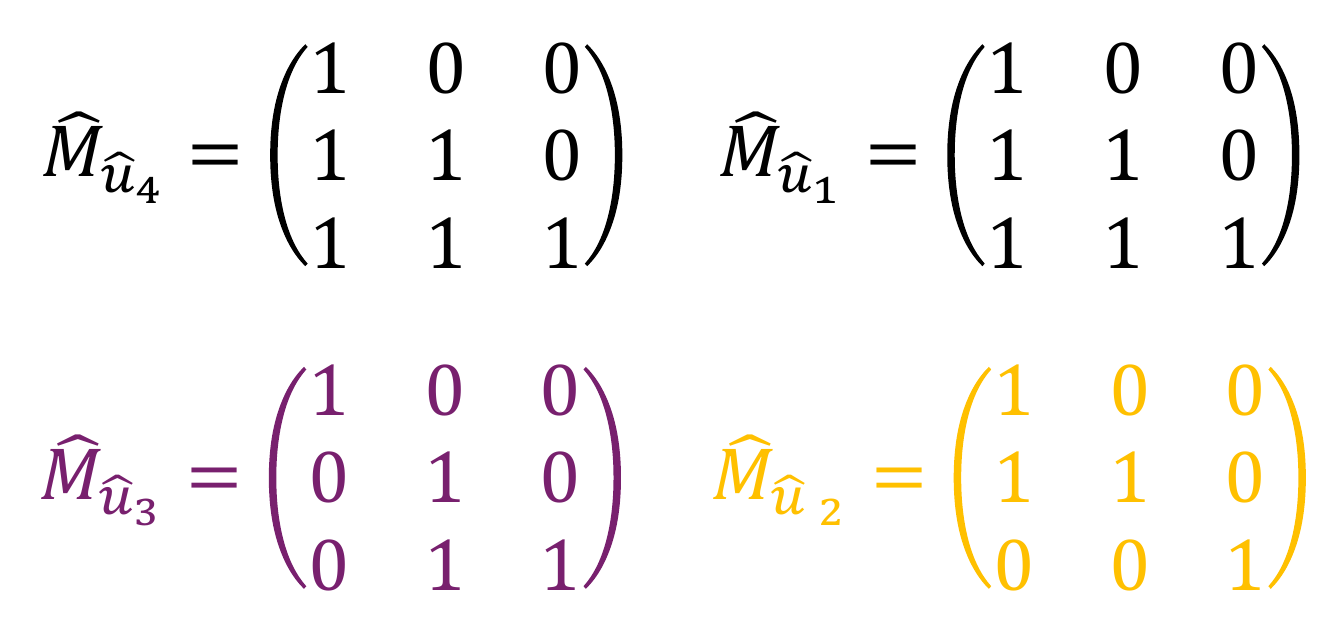}
    \caption{Over-parameterized: learned graph (left) and skeleton matrix (right).}
    \label{fig:over_pair}
  \end{subfigure}

  \caption{Illustration of Theorem~\ref{thm:identifiability_of_overparameterized_domain_variables}.
  Each pair places the graph (left) alongside its causal skeleton matrix (right):
  \emph{(a)} ground truth; \emph{(b)} overparameterized solution.}
  \label{fig:overparamterization}
\end{figure}

\subsection{Disscussion on Assumption~\ref{assumption:mixed_data_structure_learning}}
\begin{assumption}
    Let $\mathcal{MG} = \{\mathcal{G}_i\}_K$ be a Meta-Causal Graph with $K$ causal subgraphs corresponding to distinct meta states $u \in U$. Consider a dataset $\mathcal{D}$ where each sample is generated from one of these causal subgraphs. Let $S_{\mathcal{D}} \subseteq \{1,2,...,K\}$ denote the indices of causal subgraphs that actually contributed samples to $\mathcal{D}$. When learning a single causal graph $\hat{\mathcal{G}}$ from the mixed dataset $\mathcal{D}$ (treating all samples as if they were generated from a single causal subgraph), the estimated parent set for each variable $X_j$ in $\hat{\mathcal{G}}$ satisfies:
    \begin{align}
    Pa_{\hat{\mathcal{G}}}(X_j) = \bigcup_{i \in S_{\mathcal{D}}} Pa_{\mathcal{G}_i}(X_j) \quad \forall j \in [p].
    \end{align}
\end{assumption}
Assumption~\ref{assumption:mixed_data_structure_learning} states that learning from mixed data generated by multiple causal subgraphs yields a unified graph capturing the union of all parent-child relationships across contributing subgraphs.

Figure~\ref{fig:assumption_mixed_data_structure_learning} illustrates this assumption. The left panel shows the causal skeleton matrices $M_u$ of individual subgraphs that contribute data to $\mathcal{D}$. The right panel shows the learned causal skeleton matrix $\hat{M}$ from the pooled dataset. Red elements highlight the union property where edges present in any contributing subgraph appear in the learned graph.

Mathematically, this relationship can be expressed as:
\begin{align}\label{eq:assumption_mixed_data_structure_learning}
    \hat{M}[i,j] = \mathbb{I}\left[\bigcup_{u \in S_{\mathcal{D}}} M_u[i,j] = 1\right] \quad \forall i,j \in [p],
\end{align}
where $\mathbb{I}[\cdot]$ is the indicator function, ensuring that an edge $(i,j)$ exists in the learned graph if and only if it exists in at least one contributing subgraph.

This assumption is well-motivated: when data from multiple causal mechanisms are pooled without knowledge of their source, standard causal discovery algorithms tend to include all statistically supported edges to avoid missing true causal relationships. Violating this assumption would imply that the learning algorithm systematically ignores genuine causal relationships present in the data, leading to incomplete and potentially misleading causal models.

\begin{figure}[t]
  \centering
  \captionsetup{skip=2pt}
  \captionsetup[subfigure]{justification=centering}

  \begin{subfigure}[b]{0.49\linewidth}
    \centering
    \includegraphics[width=0.33\linewidth,trim=180pt 100pt 200pt 100pt,clip]{figures/ground_truth_partition.pdf}
    \hfill
    \includegraphics[width=0.65\linewidth,trim=90pt 100pt 90pt 100pt,clip]{figures/ground_truth_matrix.pdf}
    \caption{Three meta states: causal graph (left) and learned graph (right).}
    \label{fig:meta3_pair}
  \end{subfigure}
  \hfill
  \begin{subfigure}[b]{0.49\linewidth}
    \centering
    \includegraphics[width=0.33\linewidth,trim=200pt 100pt 200pt 100pt,clip]{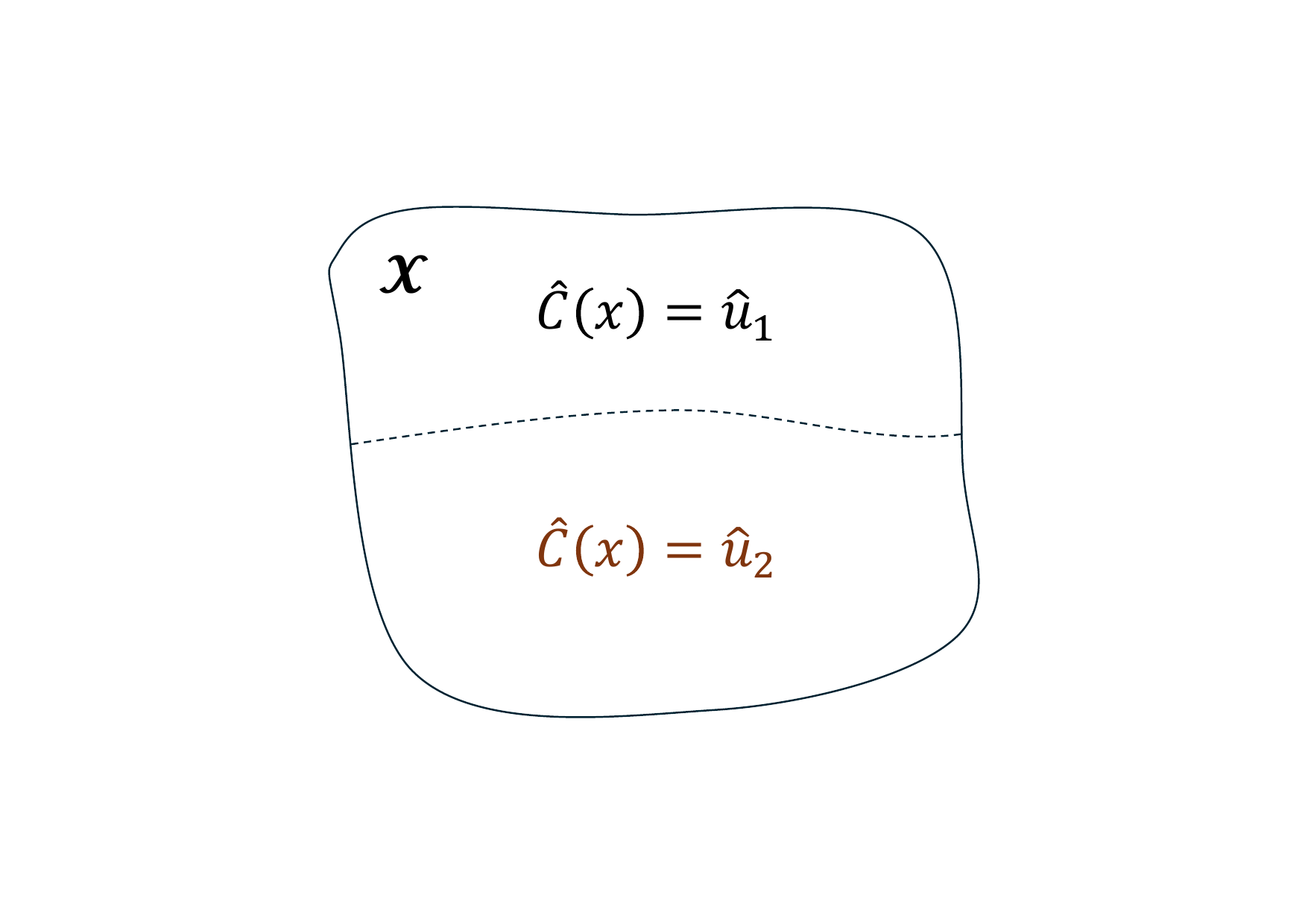}
    \hfill
    \includegraphics[width=0.65\linewidth,trim=90pt 200pt 90pt 200pt,clip]{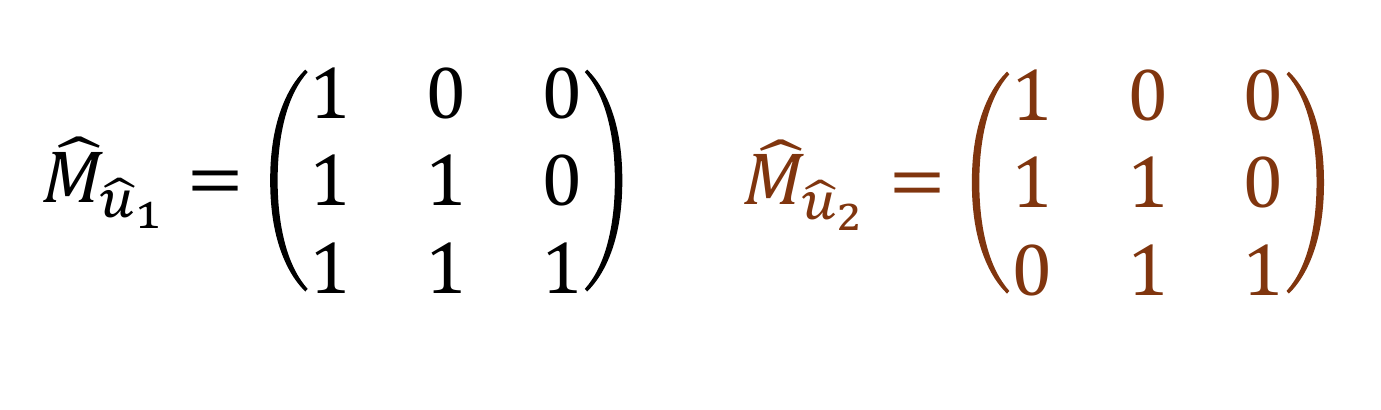}
    \caption{Two meta states: causal graph (left) and learned graph (right).}
    \label{fig:meta2_pair}
  \end{subfigure}
    \caption{Illustration of the Assumption~\ref{assumption:mixed_data_structure_learning}. The left figure shows the ground truth causal graph with three meta states and the corresponding causal skeleton matrix. The right figure shows the learned graph with two meta states and the corresponding causal skeleton matrix. }
    \label{fig:underparameterization}
\end{figure}

Assumption~\ref{assumption:mixed_data_structure_learning} shows that the L1 norm of the learned causal skeleton matrix is larger than the L1 norm of the causal skeleton matrix of the contributing graphs. This also provides a way to show that the underparameterization of meta states will lead to the failure of learning the causal graph.

Figure~\ref{fig:underparameterization} shows an example of underparameterization. The upper figure shows the ground truth causal graph with three meta states and the corresponding causal skeleton matrix. The lower figure shows the learned graph with two meta states and the corresponding causal skeleton matrix. The underparameterization of the learned graph leads to the learned causal skeleton matrix different from the ground truth causal skeleton matrix (Equation~\ref{eq:assumption_mixed_data_structure_learning}). 

However, overparameterization does not lead to the failure of learning the causal graph. Figure~\ref{fig:overparamterization} shows an example of overparameterization. Although we may assign different meta states to states which are generated from the same causal graph, the data to learn each causal graph is still generated from the same causal graph.
\vspace{-10pt}
\begin{wrapfigure}{r}{0.5\textwidth}  
    \centering
    \includegraphics[width=\linewidth]{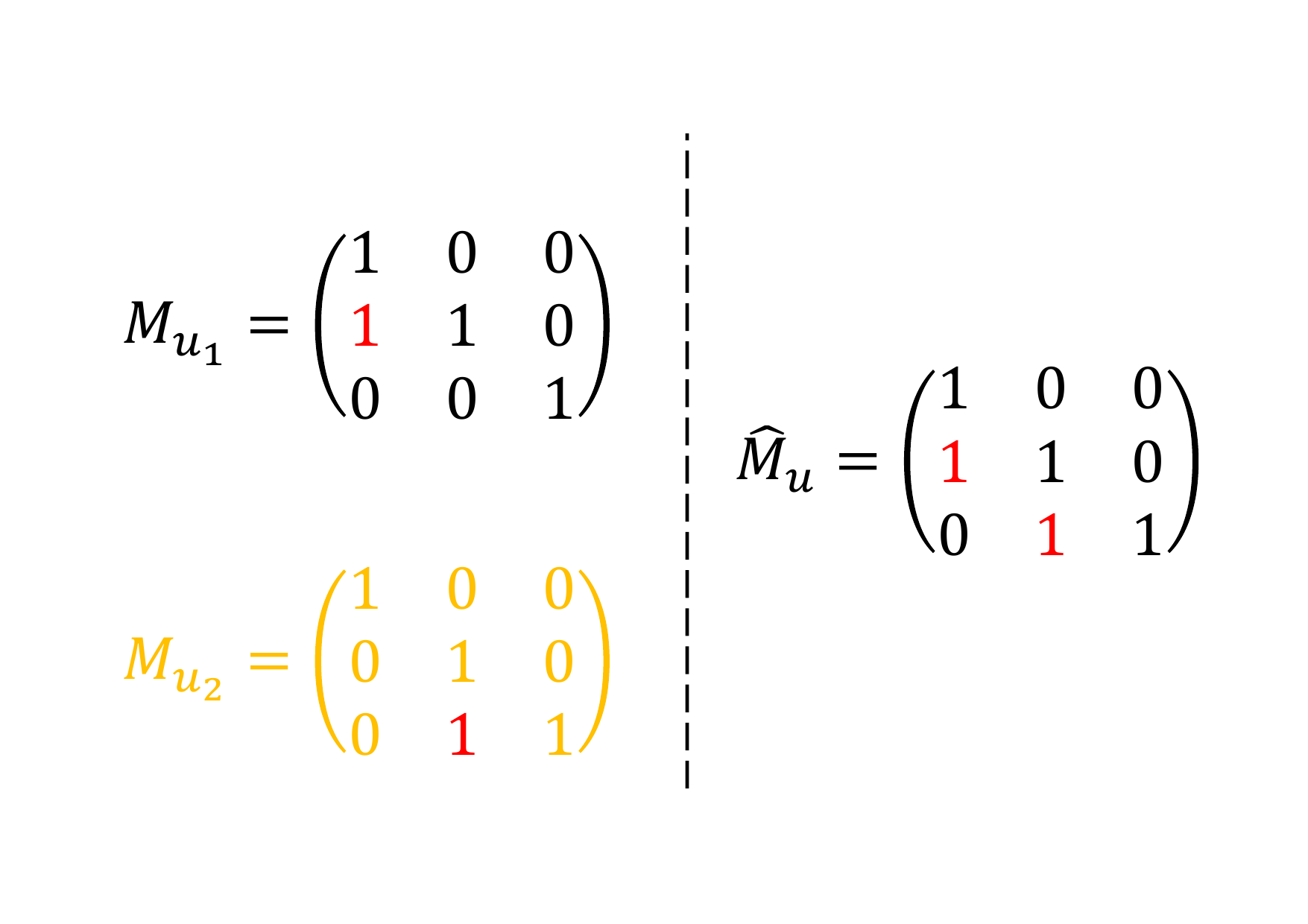}
    \caption{Illustration of the Assumption~\ref{assumption:mixed_data_structure_learning}. The left side shows the causal skeleton matrices of causal graphs which contribute to the dataset for generating the data. The right side shows the learned causal skeleton matrices of the causal graph from the dataset. The elements in red highlight the differences between them.}
    \vspace{-1cm}
    \label{fig:assumption_mixed_data_structure_learning}
\end{wrapfigure}

    

\subsection{Proof of Proposition \ref{prop:identifiability_of_causal_subgraphs}}
\begin{proof}
    We prove that the given intervention condition ensures unique identifiability by showing that no two distinct DAGs can be interventionally Markov equivalent under $\mathcal{I}$.
    
    Let $\mathcal{D}_\mathcal{I}$ denote the interventional Markov equivalence class (I-MEC) of DAGs that are interventionally Markov equivalent to $D$ given $\mathcal{I}$.
    
    \textbf{Step 1:} Consider any edge $i \to j$ in the true causal graph $D$. By assumption, there exists $I \in \mathcal{I}$ such that exactly one of $\{i,j\}$ is in $I$.
    
    \textbf{Case 1:} If $i \in I$ and $j \notin I$, then in the intervention graph $D^{(I)}$, all edges into node $i$ are removed, but the edge $i \to j$ (if it exists) remains. Consider any DAG $D'$ with the reversed edge $j \to i$. In $D'^{(I)}$, this edge would be removed since $i \in I$, creating different intervention graphs: $D^{(I)} \neq D'^{(I)}$.
    
    \textbf{Case 2:} If $j \in I$ and $i \notin I$, then in $D^{(I)}$, all edges into $j$ (including $i \to j$) are removed. A DAG $D'$ with edge $j \to i$ would retain this edge in $D'^{(I)}$ since $i \notin I$, again yielding $D^{(I)} \neq D'^{(I)}$.
    
    \textbf{Step 2:} Since every edge $i \to j \in D$ satisfies the intervention condition, every edge direction is uniquely determined by the interventional data, ensuring that for any DAG $D' \in \mathcal{D}_\mathcal{I}$, we have $D^{(I)} = D'^{(I)}$ for all $I \in \mathcal{I}$.
    
    \textbf{Step 3:} The intervention condition ensures that any edge orientation different from that in $D$ would create distinguishable intervention graphs, contradicting interventional Markov equivalence. Therefore, $|\mathcal{D}_\mathcal{I}| = 1$, meaning $D$ is uniquely identifiable.
\end{proof}
\begin{remark}
Score-based causal discovery methods identify structures only up to a Markov equivalence class, since different directed graphs can encode the same set of conditional independencies.
Interventions, however, can break this equivalence: by actively intervening on variables, one can distinguish among graphs within the same class.

\end{remark}




    \subsection{Reachability of Interventions}
    \label{sec:reachability}
    Environmental constraints often prevent direct manipulation of every state variable in practice. We consider a subset $\mathcal{S}_c \subseteq [p]$ of variables permitting direct intervention, while others require multi-step sequences or remain unreachable. We define \emph{intervention reachability}: a state is reachable if attainable from the current state via finite allowable intervention sequences.

    Each state variable $X_i$ takes $n_i$ discrete values, represented using one-hot encoding as $\mathbf{z}_i \in \{0,1\}^{n_i}$ where exactly one element equals 1. The complete system state is $\mathbf{z} = \mathbf{z}_1 \otimes \mathbf{z}_2 \otimes \cdots \otimes \mathbf{z}_p \in \{0,1\}^{N}$, where $\otimes$ denotes the Kronecker product and $N = \prod_{i=1}^p n_i$.

    \begin{remark}
    The Kronecker product of one-hot vectors remains one-hot. If $\mathbf{z}_1 \in \{0,1\}^m$ and $\mathbf{z}_2 \in \{0,1\}^n$ are one-hot vectors, then $\mathbf{z} = \mathbf{z}_1 \otimes \mathbf{z}_2 \in \{0,1\}^{mn}$ is also one-hot, with $\mathbf{z}[nr + v] = \mathbf{z}_1[r] \mathbf{z}_2[v]$ for $r \in [m], v \in [n]$.
            Since $\pmb{z}_1$ and $\pmb{z}_2$ are one-hot encoded vectors, we have $\pmb{z}_1[r]\in\{0,1\}$ and $\pmb{z}_2[v]\in\{0,1\}$.

        Therefore, $\pmb{z}[nr+v]\in\{0,1\}$ and there exists only one $nr+v$ such that $\pmb{z}[nr+v]=1$.

        Thus, $\pmb{z}$ is a one-hot encoded vector.
        Therefore, the Kronecker product of one-hot encoded vectors is still a one-hot encoded vector. 
        
        The Kronecker product $\pmb{z}$ is a one-hot encoded vector, which can represent the state of the system.
    \end{remark}

    We define two key matrices:
    \begin{itemize}
        \item \textbf{Intervention Matrix} $F \in \{0,1\}^{N \times N}$: $F[i,j] = 1$ if state $j$ can be directly reached from state $i$ through intervention on variables in $\mathcal{S}_c$.
        \item \textbf{Transition Matrix} $T \in \{0,1\}^{N \times N}$: $T[i,j] = 1$ if the system naturally transitions from intervened state $i$ to state $j$.
    \end{itemize}

    Given a state vector $\mathbf{z} \in \{0,1\}^N$, the intervention operation $\mathbf{z}' = F\mathbf{z}$ produces:
    $$\mathbf{z}'[k] = \sum_{i=1}^{N} F[k,i] \mathbf{z}[i]$$
    
    Since $\mathbf{z}$ is one-hot, $\mathbf{z}'[k] > 0$ if and only if there exists an index $i$ such that $\mathbf{z}[i] = 1$ and $F[k,i] = 1$. This means state $k$ is directly reachable from the current state $i$ through intervention. The non-zero elements of $F\mathbf{z}$ thus indicate all states achievable by a single intervention step.
    
    Similarly, the transition operation $\mathbf{z}'' = T\mathbf{z}'$ yields:
    $$\mathbf{z}''[k] = \sum_{i=1}^{N} T[k,i] \mathbf{z}'[i]$$
    
    Here, $\mathbf{z}''[k] > 0$ indicates that state $k$ can be reached from some intervened state $i$ where $\mathbf{z}'[i] > 0$ and $T[k,i] = 1$. This captures the natural system dynamics following intervention.
    
    Combining these operations, the complete system evolution under intervention is:
    $$\mathbf{z}_{t+1} = TF\mathbf{z}_t$$
    
    This composition first applies interventions (via $F$) to determine immediately accessible states, then applies system dynamics (via $T$) to find the resulting states after natural transitions. The non-zero elements in $(TF)^k\mathbf{z}_0$ represent all states reachable within $k$ intervention-transition cycles from $\mathbf{z}_0$.

    \begin{theorem}[Reachability Analysis]\label{theorem:reachable}
        A state corresponding to index $i$ is reachable from initial state $\mathbf{z}_0$ if and only if there exists $k \geq 0$ such that $[(TF)^k \mathbf{z}_0][i] > 0$.
    \end{theorem}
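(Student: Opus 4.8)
The plan is to prove both directions of the ``if and only if'' by induction on the number of intervention-transition cycles $k$, establishing that the support (set of nonzero indices) of $(TF)^k \mathbf{z}_0$ is exactly the set of states reachable in at most $k$ cycles. The central observation, which I would state as a lemma, is that for any nonnegative vector $\mathbf{v} \in \{0,1\}^N$ (or more generally a vector with nonnegative entries whose support encodes a set of states), the support of $F\mathbf{v}$ is precisely the set of states directly reachable by a single intervention from some state in the support of $\mathbf{v}$, and similarly the support of $T\mathbf{v}$ is the set of states reachable by one natural transition. This follows directly from the matrix-action formulas already given in the excerpt, $\mathbf{z}'[k] = \sum_i F[k,i]\mathbf{z}[i]$ and $\mathbf{z}''[k] = \sum_i T[k,i]\mathbf{z}'[i]$: since all entries are nonnegative, $\mathbf{z}'[k] > 0$ if and only if there is some index $i$ in the support with $F[k,i] = 1$, and no cancellation can occur.

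First I would set up the inductive claim formally: let $\mathrm{supp}(\mathbf{v}) = \{i : \mathbf{v}[i] > 0\}$, and let $\mathcal{R}_k$ denote the set of states reachable from $\mathbf{z}_0$ using exactly $k$ intervention-transition cycles, with $\mathcal{R}_0 = \mathrm{supp}(\mathbf{z}_0)$ (the initial state alone, since $\mathbf{z}_0$ is one-hot). The base case $k=0$ is immediate. For the inductive step, I would assume $\mathrm{supp}((TF)^k \mathbf{z}_0) = \bigcup_{m=0}^{k}\mathcal{R}_m$ (or track the exact-$k$ reachable set, depending on whether $F$ and $T$ are taken to include the identity/self-loop option; I would resolve this convention by noting that ``reachable within $k$ cycles'' is what the theorem's $(TF)^k$ with the existential quantifier over $k$ captures). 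Applying the support lemma twice—once for $F$ and once for $T$—shows that $\mathrm{supp}((TF)^{k+1}\mathbf{z}_0)$ equals the set of states obtained by applying one more intervention followed by one transition to the states in $\mathrm{supp}((TF)^k\mathbf{z}_0)$, which by definition is the reachable set after $k+1$ cycles.

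Given the support lemma, both directions of the theorem follow cleanly. For the ``if'' direction: if $[(TF)^k\mathbf{z}_0][i] > 0$ for some $k$, then $i \in \mathrm{supp}((TF)^k\mathbf{z}_0)$, which by the inductive characterization is a reachable state, exhibiting the explicit intervention-transition sequence. For the ``only if'' direction: if state $i$ is reachable, there is by definition a finite sequence of $k$ intervention-transition cycles leading to it, and unwinding this sequence through the support lemma shows $i$ lies in $\mathrm{supp}((TF)^k\mathbf{z}_0)$, hence $[(TF)^k\mathbf{z}_0][i] > 0$ by nonnegativity.

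The main obstacle I anticipate is purely definitional rather than technical: the theorem asserts a biconditional between an algebraic condition and the informal notion of ``reachability,'' so the proof's rigor hinges entirely on pinning down the formal definition of a reachable state in terms of the one-step relations encoded by $F$ and $T$. Once reachability is defined as the existence of a finite sequence $\mathbf{z}_0 \to \mathbf{z}_1 \to \cdots$ where each step is an $F$-move followed by a $T$-move, the equivalence reduces to the nonnegativity argument above, and the only genuine content is verifying that nonnegative matrix-vector products never produce cancellation that could spuriously hide a reachable state or fabricate an unreachable one. I would emphasize that because $\mathbf{z}_0$ is one-hot and $F, T$ are binary matrices, all intermediate vectors have nonnegative integer entries, so the support faithfully tracks set-level reachability without any sign subtleties.
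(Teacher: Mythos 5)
Your proposal is correct and follows essentially the same route as the paper's proof, which likewise argues both directions by unwinding the definition of matrix--vector multiplication and observing that a reachable state corresponds to a computational path through $k$ iterations of $TF$. Your version is in fact more carefully developed than the paper's (which states the equivalence in two sentences without the explicit induction or the no-cancellation observation), but the underlying idea---that the support of $(TF)^k\mathbf{z}_0$ tracks the set of states reachable in $k$ intervention--transition cycles---is identical.
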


    \begin{proof}
    We prove both directions of the equivalence.
    
    \textbf{($\Rightarrow$):} If state $i$ is reachable from $\mathbf{z}_0$, then by definition there exists a finite sequence of intervention-transition steps that leads from $\mathbf{z}_0$ to a state where the $i$-th component is active. This sequence corresponds to some power $k$ of the operator $TF$, hence $[(TF)^k \mathbf{z}_0][i] > 0$.
    
    \textbf{($\Leftarrow$):} If $[(TF)^k \mathbf{z}_0][i] > 0$ for some $k \geq 0$, then by the definition of matrix-vector multiplication, there exists a computational path through $k$ iterations of intervention-transition operations that activates the $i$-th state component, demonstrating reachability.
\end{proof}

The reachability analysis has direct consequences for causal structure learning:

\begin{corollary}[Intervention Feasibility]\label{cor:intervention_feasibility}
    The state corresponding to index $i$ is feasible from initial state $\mathbf{z}_0$ by intervention if and only if there exists $k \geq 0$ such that $[F(TF)^k\mathbf{z}_0][i] > 0$.
\end{corollary}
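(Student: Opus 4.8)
The plan is to treat this as a direct corollary of Theorem~\ref{theorem:reachable}, changing only the final step: \emph{feasibility by intervention} requires the target state to be the immediate result of an intervention rather than of a natural transition. The key conceptual point is to separate two notions of arrival. Theorem~\ref{theorem:reachable} characterizes states attainable after a complete intervention-transition cycle, encoded by the operator $(TF)^k$; here we instead want states attainable by applying one additional intervention $F$ on top of such a cycle, which is exactly the operator $F(TF)^k$. So the first thing I would do is fix the reading of ``feasible by intervention'': a state indexed by $i$ is feasible from $\mathbf{z}_0$ if some finite sequence of allowable intervention-transition steps reaches an intermediate state from which a single intervention directly sets the system to state $i$.

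For the forward direction, I would suppose state $i$ is feasible and peel off the final intervention. The preceding intervention-transition steps constitute some power $(TF)^k$ applied to $\mathbf{z}_0$, producing an intermediate vector $\mathbf{z}' = (TF)^k\mathbf{z}_0$ whose component at the intermediate index $j$ is positive; the final intervention is captured by $F$ with $F[i,j]=1$, so $[F\mathbf{z}'][i] = [F(TF)^k\mathbf{z}_0][i] > 0$. For the converse, if $[F(TF)^k\mathbf{z}_0][i] > 0$, then since $F$, $T$, and $\mathbf{z}_0$ all have nonnegative (indeed $\{0,1\}$) entries, this positivity forces the existence of an index $j$ with $[(TF)^k\mathbf{z}_0][j] > 0$ and $F[i,j]=1$. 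By Theorem~\ref{theorem:reachable} the intermediate state $j$ is reachable from $\mathbf{z}_0$, and $F[i,j]=1$ certifies that state $i$ is obtained from $j$ by one intervention, so $i$ is feasible.

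The argument mirrors the structure of the proof of Theorem~\ref{theorem:reachable} and leans on two supporting facts: the nonnegativity of all matrix and vector entries, so that positivity of a product component is equivalent to the existence of a concrete intervention-transition path (underwritten by the one-hot preservation noted in the Remark), and the availability of Theorem~\ref{theorem:reachable} itself to certify reachability of the intermediate state before the final intervention. The case $k=0$, giving $F\mathbf{z}_0$, corresponds to reaching $i$ by a single immediate intervention and should be verified to fit the same template.

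The main obstacle I anticipate is not the algebra but pinning down the definition of feasibility precisely enough that $F(TF)^k$ is manifestly the right operator, and in particular confirming that $(TF)^k\mathbf{z}_0$ is a legitimate post-transition state on which a final intervention $F$ may act. Once the bookkeeping of ``intervene-then-transition'' versus ``intervene-last'' is made explicit, both implications reduce to reading off the support of the relevant matrix product, so the remaining work is routine.
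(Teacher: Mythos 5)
Your proposal is correct and follows essentially the same route as the paper, which simply observes that the corollary follows from Theorem~\ref{theorem:reachable} with the leading $F$ capturing the final intervention step; you merely spell out both directions and the nonnegativity bookkeeping that the paper leaves implicit. Your reading of ``feasible by intervention'' (intermediate state reached via $(TF)^k$, then one more intervention $F$) matches the paper's intended interpretation exactly.
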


\begin{proof}
    This follows directly from Theorem~\ref{theorem:reachable}. Here, $F(TF)^k\mathbf{z}_0$ represents states accessible for intervention after $k$ cycles, where the leading $F$ captures the final intervention step.
\end{proof}

This constraint fundamentally limits causal discovery scope in practical settings. The Curious Causality-Seeking Agent must therefore operate within feasible intervention constraints while maximizing causal structure identification. We provide an example to illustrate these results.
\begin{case}
        Consider a system with two binary variables $x_1, x_2$, yielding four possible states as shown in Table~\ref{tab:one-hot}.
        \begin{table}[h]
        \centering
        \caption{One-hot encoding of $x_1$ and $x_2$ and their Kronecker product.}
        \label{tab:one-hot}
        \begin{tabular}{ccccc}
        \toprule
        $x_1$ & $x_2$ & $z_1$         & $z_2$         & $z=z_1\otimes z_2$    \\ \hline
        0     & 0     & $[1, 0]^\top$ & $[1, 0]^\top$ & $[1, 0, 0, 0]^\top$ \\
        1     & 0     & $[0, 1]^\top$ & $[1, 0]^\top$ & $[0, 1, 0, 0]^\top$ \\
        0     & 1     & $[1, 0]^\top$ & $[0, 1]^\top$ & $[0, 0, 1, 0]^\top$ \\
        1     & 1     & $[0, 1]^\top$ & $[0, 1]^\top$ & $[0, 0, 0, 1]^\top$ \\
        \bottomrule\bottomrule
        \end{tabular}
        \end{table}
        As $z_1$ and $z_2$ are one-hot encoded vectors, the Kronecker product of $z_1$ and $z_2$ is a 4-dimensional one-hot encoded vector, which can represent the state of the system. The space of the system can be represented as $\mathcal{Z}=\{[1, 0, 0, 0]^\top,[0, 1, 0, 0]^\top,[0, 0, 1, 0]^\top,[0, 0, 0, 1]^\top\}$.
        For $z_i, z_j\in \mathcal{Z}$, if $z_j$ can be reached from $z_i$ by intervening on $x_i$, we can denote this as $F[i, j]=1$.

\textbf{Intervention Constraints:} Suppose only $x_1$ can be directly intervened. The intervention matrix $F$ allows transitions between states that differ only in $x_1$:
$$F = \begin{bmatrix}
1 & 1 & 0 & 0 \\
1 & 1 & 0 & 0 \\
0 & 0 & 1 & 1 \\
0 & 0 & 1 & 1
\end{bmatrix}$$

\textbf{System Dynamics:} After intervention, assume the system exchanges the values of $x_1$ and $x_2$. The transition matrix becomes:
$$T = \begin{bmatrix}
1 & 0 & 0 & 0 \\
0 & 0 & 1 & 0 \\
0 & 1 & 0 & 0 \\
0 & 0 & 0 & 1
\end{bmatrix}$$

\textbf{Reachability Analysis:} Starting from $\mathbf{z}_0 = [1,0,0,0]^\top$ (state $(0,0)$):

\textit{Step 1:} Direct intervention possibilities:
$$F\mathbf{z}_0 = \begin{bmatrix} 1 \\ 1 \\ 0 \\ 0 \end{bmatrix}$$
States 1 and 2 are accessible for intervention.

\textit{Step 2:} After one intervention-transition cycle:
$$TF\mathbf{z}_0 = \begin{bmatrix} 1 \\ 0 \\ 1 \\ 0 \end{bmatrix}$$
The system can reach states 1 and 3.

\textit{Step 3:} Interventions possible from these new states:
$$F(TF\mathbf{z}_0) = F\begin{bmatrix} 1 \\ 0 \\ 1 \\ 0 \end{bmatrix} = \begin{bmatrix} 1 \\ 1 \\ 1 \\ 1 \end{bmatrix}$$

All states become reachable for intervention within two cycles, demonstrating that despite initial constraints, the system's dynamics enable full state space exploration.
    \end{case}
\subsection{Action as Intervention}
Our method identifies context-dependent causal graphs (meta-graphs) via interventions. Actions are how the curiosity-driven agent realizes those interventions.
When actions can directly intervene each state (e.g., a chemical environment), we deploy targeted interventions to isolate edges and quickly distinguish meta states and their subgraphs.
When not all interventions are feasible, we rely on the reachability assumption.
\paragraph{Verification Results.}
We add a new experiment based on a manipulation task to demonstrate that our method can handle cases where actions cannot intervene on all variables. We also modify this environment to test generalization. The results show that our method maintains strong performance and generalizes well, even when only a subset of variables is directly intervenable.
\begin{table}[htbp]
\caption{Performance on partial-intervention manipulation tasks.}
\begin{tabular}{cccc}
\toprule
Reward & Small Magnetic Force ($\times0.02$) & High Ball Density ($\times10$) & Extra Table Friction \\ \midrule
MLP     & 4.011 $\pm$ 0.030                   & 3.999 $\pm$ 0.038              & 3.886 $\pm$ 0.241    \\ 
FCDL    & 4.482 $\pm$ 0.622                   & 4.451 $\pm$ 0.657              & 4.193 $\pm$ 0.655    \\ 
MCG     & \textbf{6.173 $\pm$ 0.255}                   & \textbf{5.083 $\pm$ 0.245}              & \textbf{5.517 $\pm$ 0.134}    \\ \bottomrule\bottomrule
\end{tabular}
\end{table}
\subsection{Lossy Representation and Misclassification Probability}

We analyze how lossy representation affects the probability of mapping samples to prototype embeddings.  
Table~\ref{tab:symbols} summarizes the key notations used in this derivation.

\begin{table}[t]
\centering
\caption{Notation summary.}
\label{tab:symbols}
\begin{tabular}{cl}
\toprule
\textbf{Symbol} & \textbf{Meaning} \\ 
\midrule
$U = u$ & Ground-truth meta states \\
$\hat{U} = \hat{u}_k$ & Prototype (codebook) embeddings \\
$C(x)$ & True meta state of sample $x$ \\
$\hat{C}(x)$ & Codebook entry selected by encoder \\
$\mu_u$ & Prior probability of state $x$ with true meta state $u$ \\
$p_k = \sum_{u \in U} \mu_u p_k^u$ & Probability of mapping sample $x$ to codebook entry $e_{\hat{u}_k}$ \\
\bottomrule
\end{tabular}
\end{table}

We assume that for each state $x$ with true meta state $C(x)=u$, the probability that it is mapped by the encoder to the codebook entry index $\hat{u}_k$ is 
$p^u_k = P(\hat{C}(x) = \hat{u}_k \mid C(x) = u)$. 
This probability reflects the representation power and discriminability of the encoder, indicating how likely samples of the same true meta state are mapped to different codebook entry.

From Definition~\ref{def:obseq}, $\hat{C}(x)$ and $C(x)$ are \emph{observationally equivalent} if, for all $\hat{u}_k \in \hat{U}$ and all 
$x_1, x_2 \in \{x \mid \hat{C}(x) = \hat{u}_k\}$, it holds that $C(x_1) = C(x_2)$.

The probability that $\hat{C}(x)$ and $C(x)$ are observationally equivalent is given by
\[
P(C(x) = u, \forall x' \in x \mid \hat{C}(x) = \hat{u}_k, C(x') = u)
= (1 - p_k + p^u_k \mu_u)^n,
\]
where $n$ is the number of samples.

Hence, the overall probability of observational equivalence is
\[
P_{\text{equiv}} = \sum_k p_k \sum_{u \in U} \mu_u (1 - p_k + p^u_k \mu_u)^n.
\]

The probability that all samples assigned to the same codebook entry index $\hat{C}(x)$ belong to different true meta states is
\[
P_{\text{diff}} = \sum_k \sum_{u \in U} (p_k - \mu_u p^u_k)(1 - p_k + p^u_k \mu_u)^n.
\]

The probability of misclassification, i.e., that a codebook entry corresponds to mixed true meta states, is
\[
P_{\text{misclass}} = 1 - \sum_k p_k \sum_{u \in U} (1 - p_k + p^u_k \mu_u)^n.
\]

Considering lossy representation, the misclassification probability simplifies to
\[
P(\text{misclassification}) = 1 - \sum_k \sum_{u \in U} \mu_u p^u_k (1 - p_k + p^u_k \mu_u)^n.
\]

If $n(p_k + p^u_k \mu_u) \ll 1$, we approximate
\[
P(\text{misclassification}) \approx 
n \sum_u \sum_k \mu_u p^u_k (p_k - p^u_k \mu_u)
= \sum_k \left[p_k^2 - \sum_u (\mu_u p^u_k)^2 \right].
\]

In a more accurate representation, for each codebook entry index $k$, the distribution of $\mu_u p^u_k$ concentrates on a single true state—one $p^u_k$ increases while others decrease, thereby increasing 
$\sum_u (\mu_u p^u_k)^2$ and reducing each term 
$\left[p_k^2 - \sum_u (\mu_u p^u_k)^2\right]$. 
Consequently, higher representation accuracy monotonically decreases the overall misclassification probability, reaching zero when each prototype corresponds purely to one true meta state.
\section{Experimental Details}\label{appendix:experiment}
\subsection{Environment}
\paragraph{Chemical.}
We evaluate our method's performance on learning context-dependent causal structures using the Chemical environment~\citep{ke2021systematic}. This environment consists of 10 objects, each capable of taking one of 5 color states. An action selects a target object and changes its state, triggering cascading changes to all dependent objects according to the underlying causal graph. The objective is to match each object's color to a specified target configuration.

The Chemical environment features multiple causal structures that switch dynamically based on the system state, making it ideal for evaluating Meta-Causal Graph learning. We consider two experimental settings in this environment:

\textbf{Full-Fork (Fork):} The causal structure alternates between two graphs depending on the root node's color. When the root node is red, a fork structure is active; otherwise, a fully-connected structure governs the system dynamics.

\textbf{Full-Chain (Chain):} The causal structure alternates between full and chain configurations. A red root node activates the chain structure, while other colors trigger the fully-connected structure.

These settings test our agent's ability to: (1) discover the latent meta states (root node colors) that determine causal structure transitions, (2) learn the corresponding causal subgraphs through interventional exploration, and (3) generalize to unseen state configurations during evaluation.
During the test, some nodes are corrupted with noise. The agent needs to learn the causal graph to match the colors of nodes to the target.
The agent starts from a random color configuration and must transform a 10-node graph to match a target color pattern. Actions intervene on nodes, changing their color and that of all their descendants according to the hidden causal graph. The episode reward is the negative Hamming distance to the target (0 for a perfect match).

\paragraph{Magnetic.}
The \textit{Magnetic} environment is a robot arm manipulation task built on the Robosuite framework (Figure~\ref{fig:magnetic_env}). The environment contains two objects: a fixed box and a movable ball, whose colors indicate their magnetic properties. When both objects are red, they exhibit magnetic attraction, causing the ball's trajectory to be influenced by the box's position. The magnetic properties of each object are randomly assigned at the beginning of each episode.


\subsection{Reinforcement Learning Algorithm}
For fair comparison, we use the same model based reinforcement learning algorithm for all the baselines and our method. We use the cross-entropy method (CEM)~\citep{rubinstein2004cross} to sample the action based on the predicted transition dynamics. The detailed hyper-parameters are shown in the Table~\ref{tab:cem}.
\begin{table}[h]
\caption{CEM parameter.}
\label{tab:cem}
\centering
\begin{tabular}{lccc}
\hline
\multirow{2}{*}{CEM parameters} & \multicolumn{2}{c}{Chemical} & \multirow{2}{*}{Magnetic} \\ \cline{2-3}
                                & full-fork    & full-chain    &                           \\ \hline
Planning length                 & 3            & 3             & 1                         \\
Number of candidates            & 64           & 64            & 64                        \\
Number of top candidates        & 32           & 32            & 32                        \\
Number of iterations            & 5            & 5             & 5                         \\
Exploration noise               & N/A          & N/A           & 1e-4                      \\
Exploration probability         & 0.05         & 0.05          & N/A                       \\
Action type                     & Discrete     & Discrete      & Continous                 \\ \hline
\end{tabular}
\end{table}
\subsection{Environment Configurations}
The detailed environment configurations are shown in Table~\ref{tab:env_config}.
\begin{table}[htbp]
\centering
\caption{Environment configurations}
\label{tab:env_config}
\begin{tabular}{lccc}
\hline
\multirow{2}{*}{Paramters} & \multicolumn{2}{c}{Chemical}        & \multirow{2}{*}{Magnetic} \\ \cline{2-3}
                           & Fork             & Chain            &                           \\ \hline
Training step              & $1.5\times 10^5$ & $1.5\times 10^5$ & $2\times 10^5$            \\
Optimizer                  & Adam             & Adam             & Adam                      \\
Learning rate              & 1e-4             & 1e-4             & 1e-4                      \\
Batch size                 & 256              & 256              & 256                       \\
Initial step               & 1000             & 1000             & 1500                      \\
Max episode length         & 25               & 25               & 25                        \\
Action type                & Discrete         & Discrete         & Continous                 \\ \hline
\end{tabular}
\end{table}
\begin{figure}[htbp]
    \centering
    \begin{subfigure}[b]{\linewidth}
        \centering
        \includegraphics[width=\linewidth]{figures/MCG_3_1_full.pdf}
        \caption{Causal subgraph of \textit{Full}.}
        \label{fig:local_best_high_mean_comparison_supp}
    \end{subfigure}
    \vfill
    \begin{subfigure}[b]{\linewidth}
        \centering
        \includegraphics[width=\linewidth]{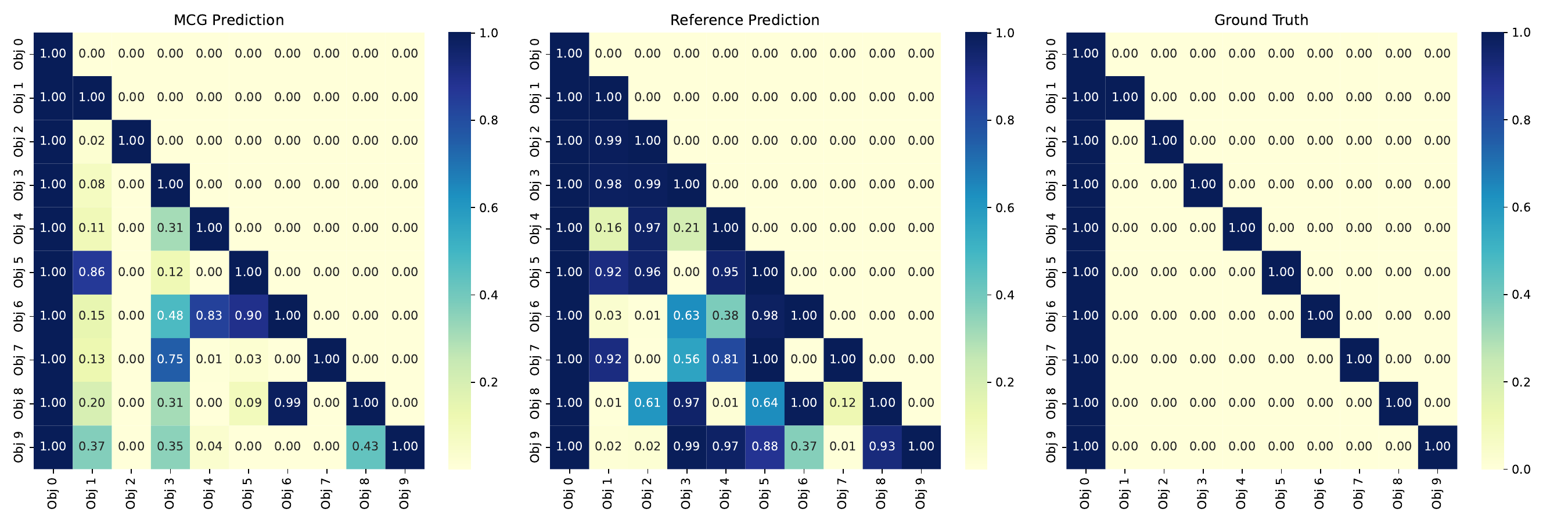}
        \caption{Causal subgraph of \textit{Fork}.}
        \label{fig:local_best_low_mean_comparison_supp}
    \end{subfigure}
    \vfill
    \begin{subfigure}[b]{\linewidth}
        \centering
        \includegraphics[width=\linewidth]{figures/MCG_3_1_chain.pdf}
        \caption{Causal subgraph of \textit{Chain}.}
        \label{fig:local_best_low_mean_comparison_2_supp}
    \end{subfigure}
    \caption{Comparison of learned causal graphs in task Chemical. MCG (ours) is the proposed method and the reference is the causal graph learned by FCDL. }
    \label{fig:causal_graphs_comparison_supp}
\end{figure}
\subsection{Partial Observability}
\begin{table}[h]
\centering
\caption{Prediction accuracy under partial observability in the Chemical environment.}
\label{tab:partial}
\begin{tabular}{cccc}
\toprule
Prediction Accuracy & $n=2$ & $n=4$ & $n=6$ \\
\midrule
MLP & 31.93 $\pm$ 0.16 & 32.47 $\pm$ 1.61 & 33.72 $\pm$ 2.11 \\
FCDL & 63.75 $\pm$ 13.75 & 53.89 $\pm$ 11.95 & 52.11 $\pm$ 16.43 \\
\midrule
\textbf{MCG (ours)} & \textbf{73.52 $\pm$ 9.56} & \textbf{63.71 $\pm$ 5.79} & \textbf{57.05 $\pm$ 14.63} \\
\bottomrule\bottomrule
\end{tabular}
\end{table}
\subsection{Extended Results}
We visualize the learned causal subgraphs in Figure~\ref{fig:causal_graphs_comparison_supp}. The results demonstrate that our method effectively captures the underlying causal structures of the environment.
\subsection{Compute Resources and Environment Details}
Most experiments were conducted on a server equipped with an AMD EPYC 7V13 64-Core Processor (24 physical cores), supporting 32-bit and 64-bit modes, with 96 MiB L3 cache and 12 MiB L2 cache.
The machine was equipped with an NVIDIA A100 PCIe GPU with 80GB memory (driver version 575.51.03, CUDA version 12.9).
\subsection{Code and Demo}
The code is available at \url{https://github.com/zhiyu-zhao-ucas/Meta-Causal-Graph.git}, and demonstrations can be found at \url{https://sites.google.com/view/meta-causal-world}.
\subsection{Prediction accuracy for three intrinsic terms}
\begin{table}[h]
\centering
\setlength{\tabcolsep}{6pt}
\small
\caption{Prediction accuracy for three intrinsic terms. All values are reported in percentage $\%$.}
\resizebox{\textwidth}{!}{\begin{tabular}{lcccccc}
\toprule
\multirow{2}{*}{Intrinsic Term} & \multicolumn{3}{c}{Fork} & \multicolumn{3}{c}{Chain} \\
\cmidrule(lr){2-4}\cmidrule(lr){5-7}
 & $n{=}2$ & $n{=}4$ & $n{=}6$ & $n{=}2$ & $n{=}4$ & $n{=}6$ \\
\midrule
Prediction Uncertainty 
& $52.25 \pm 16.73$ & $59.46 \pm 22.38$ & $52.25 \pm 28.46$ 
& $59.52 \pm 8.82$ & $54.81 \pm 5.47$ & $59.28 \pm 5.03$ \\
Feature Discrepancy 
& $46.11 \pm 8.38$ & $41.17 \pm 5.67$ & $43.67 \pm 2.47$ 
& $56.01 \pm 13.13$ & $51.88 \pm 7.22$ & $58.27 \pm 8.53$ \\
Predictive-Distribution Discrepancy 
& $54.23 \pm 8.72$ & $46.27 \pm 9.74$ & $43.92 \pm 1.85$ 
& $36.83 \pm 3.11$ & $40.33 \pm 4.89$ & $45.40 \pm 2.83$ \\
Edge-Entropy                      & 63.18$\pm$13.94 & 50.47$\pm$9.87 & 50.04$\pm$8.56          & 51.99$\pm$6.58 & 49.78$\pm$4.11 & 49.69$\pm$5.14 \\

\bottomrule
\end{tabular}}
\label{tab:intrinsic_terms}
\end{table}
\subsection{Efficiency}
We compare other methods (GNN and MLP) with longer training rounds and more parameters to show efficiency of our method. Results are reported in Table~\ref{tab:data} and Table~\ref{tab:param}.
\begin{table}[h]
\caption{Comparison under the same model size. We evaluate the efficiency of our method by comparing MCG with GNN and MLP trained for longer rounds and with larger parameter counts. }
\label{tab:data}
\resizebox{\textwidth}{!}{\begin{tabular}{ccccccc}
\toprule
                & Acc n=2 & Acc n=4 & Acc n=6 & Reward n=2 & Reward n=4 & Reward n=6 \\
\midrule
GNN (30k round) & 38.13   & 25.81   & 20.75   & 6.83       & 6.06       & 6.83       \\
GNN (25k round) & 36.37   & 26.55   & 20.73   & 6.84       & 5.98       & 6.19       \\
MLP (30k round) & 30.47   & 28.94   & 27.84   & 6.20       & 6.13       & 6.71       \\
MLP (25k round) & 31.47   & 30.32   & 30.42   & 6.18       & 6.03       & 6.23       \\
\midrule
\textbf{MCG (10k round)} & \textbf{47.01}   & \textbf{48.37}   & \textbf{46.35}   & \textbf{11.96}      & \textbf{10.83}      & \textbf{10.70}      \\
\textbf{MCG (12k round)} & \textbf{48.30}   & \textbf{48.40}   & \textbf{47.28}   & \textbf{12.45}      & \textbf{10.77}      & \textbf{10.84}      \\
\textbf{MCG (15k round)} & \textbf{71.56}   & \textbf{59.17}   & \textbf{49.58}   & \textbf{14.65}      & \textbf{14.06}      & \textbf{13.28}     \\
\bottomrule\bottomrule
\end{tabular}}
\end{table}
\begin{table}[h]
\caption{Comparison under the same number of training rounds. MCG is compared with GNN and MLP models trained for the same number of rounds but with different model sizes (parameter counts shown in parentheses). }
\label{tab:param}

\resizebox{\textwidth}{!}{\begin{tabular}{lllllll}
\toprule
                               & Acc n=2 & Acc n=4 & Acc n=6 & Reward n=2 & Reward n=4 & Reward n=6 \\
\midrule
MLP (parameter  $\times 3.71$) & 24.65   & 25.05   & 23.71   & 6.29       & 6.50       & 7.93       \\
GNN (parameter $\times 1.66$)  & 37.58   & 30.61   & 30.05   & 6.34       & 6.49       & 7.90       \\
\midrule
\textbf{MCG (parameter $\times 1$)}     & \textbf{71.56}   & \textbf{59.17}   & \textbf{49.58}   & \textbf{14.65}      & \textbf{14.06}      & \textbf{13.28}  \\
\bottomrule\bottomrule
\end{tabular}}
\end{table}
\section{Extended Related Work}\label{appendix:extend_related_work}
\subsection{World Models}

Contemporary research on world models can be delineated along two principal trajectories, each characterized by fundamentally divergent objectives. The first research trajectory conceptualizes world models as instrumental components, primarily serving either as predictive mechanisms to facilitate planning processes or as training apparatuses for policy optimization. Conversely, the second research trajectory approaches world models as generative frameworks, with the explicit objective of predicting future environmental states with high fidelity.

Within the first trajectory, numerous approaches leverage world models as predictive mechanisms to facilitate planning processes. Notable exemplars include methodologies employing Monte Carlo Tree Search (MCTS) to identify optimal action sequences \citep{schrittwieser2020mastering,dedieu2025improving,feng2023chessgpt} and techniques utilizing cross-entropy methods to efficiently sample continuous actions \citep{hansen2023td}. Complementary to these, a substantial corpus of research utilizes world models to augment policy learning, wherein the learned dynamics models either provide supplementary supervision signals or function as synthetic environments for policy optimization. This subcategory encompasses seminal works such as Dreamer \citep{hafner2019dream,hafner2020mastering,hafner2023mastering}, SimPLe \citep{kaiser2019model}, IRIS \citep{micheli2022transformers}, $\Delta$-IRIS \citep{micheli2024efficient}, and DART \citep{agarwal2024learning}. These methodologies predominantly employ model-based reinforcement learning frameworks, wherein learned dynamics models generate synthetic data to facilitate policy model training.

The second trajectory is predominantly focused on the development of sophisticated generative models designed to predict future environmental states with high verisimilitude. Representative examples include DIAMOND \citep{alonso2024diffusion}, Navigation world models \citep{bar2024navigation}, Oasis \citep{decart2024oasis}, and The Matrix \citep{feng2024matrix}. These models are engineered to capture the underlying stochastic dynamics of complex environments and generate realistic future states conditioned on current states and selected actions.

Notwithstanding these advancements, it is imperative to acknowledge that methodologies relying predominantly on statistical correlations frequently exhibit performance degradation when confronted with distributional shifts in environmental conditions, thereby compromising their capacity for robust generalization across diverse scenarios.

\subsection{Causal Discovery for World Models}

Causal discovery offers a rigorous analytical framework for addressing the inherent challenges associated with distributional shifts in world models. By elucidating and exploiting causal relationships rather than mere statistical correlations, these methodologies significantly enhance both the interpretability and generalization capabilities of learned models, thereby facilitating more robust decision-making processes in complex, non-stationary environments.

The methodological landscape of causal discovery can be taxonomized into two principal categories: constraint-based approaches and score-based approaches. Constraint-based methodologies, exemplified by the PC algorithm \citep{spirtes2000causation}, employ conditional independence tests to systematically infer causal relationships among variables. Conversely, score-based approaches utilize statistical evaluation metrics to assess the plausibility of alternative causal structures. Prominent instantiations include the Greedy Equivalence Search (GES) \citep{meek1997graphical}, which implements a greedy search algorithm to optimize a predefined scoring function, and methods leveraging the Bayesian Information Criterion (BIC). Both methodological paradigms endeavor to recover causal graphs by identifying the Markov equivalence class of the underlying causal structure. Nevertheless, it is imperative to acknowledge that the causal graph learned through these approaches is not uniquely identifiable, as multiple distinct causal architectures can manifest identical conditional independence relationships.

To mitigate this fundamental identifiability challenge, contemporary advancements in causal discovery have increasingly focused on incorporating interventional data to enhance the discriminability of causal structures \citep{squires2020permutation, meinshausen2016methods, hyttinen2014constraint, triantafillou2015constraint, brouillard2020differentiable, seitzer2021causal,jin2025large}. Despite these methodological innovations, a significant limitation persists: these approaches frequently operate under the restrictive assumption that the underlying causal graph remains temporally invariant throughout the learning process. This stationarity assumption may be violated in dynamic environments, wherein causal structures can evolve temporally due to myriad factors.

\subsection{Comparison with FCDL}
\paragraph{Motivation and Comparison to Context-Dependent Causality.}
The initial motivation for our work came from a fundamental question: Why do causal discovery methods often fail in open-ended environments?

We observed that causal models aim to describe the rules of the world. However, in open-ended settings (e.g., open-ended games, multi-agent systems, or LLM-based dialogue), no single observation window can capture all possible events. This leads to instability: a causal relation valid in one context may fail in another.
For example, Newtonian physics holds at the macroscopic level but breaks down at the quantum scale. FCDL also emphasizes this issue (but it’s not the first one), arguing that a complete causal law must account for variation across different contexts. Therefore, our high-level motivation aligns with them. many previous works have explored this motivation context-dependent causal structures.

However, FCDL is not the first one discusses about context-dependent causality. The idea of context-dependent causality was formally introduced by earlier Huang et al.~\citep{huang2020causal}, and also discussed earliest in Chapter 10 of Pearl’s Causality~\citep{pearl2009causality}.

\paragraph{Exploration and the Limitation of Passive Learning.}
A key challenge in scaling up to open-ended worlds is how to explore effectively. Relying solely on random exploration or limited observed data is insufficient to discover hidden causal mechanisms.
For instance, in scientific discovery, new causal knowledge often emerges only after targeted investigation of paradoxes. Similarly, in dialogue agents, causal relationships may even reverse depending on the context.
Unlike prior works including FCDL and Huang et al.~\citep{huang2020causal}, which rely solely on observation passive data, our method explicitly considers the unobserved causal space, and how newly explored data may lead to shifts in both the causal graph and the context boundary.

\paragraph{Causality-Seeking Agent.}
We aims to go beyond passive modeling to active exploration for causal discovery. Our curiosity and intervention-based strategy address a core limitation of data-driven methods: the inability to discover unknown causal structures.
Our primary theoretical contribution is the construction of a formal intervention-based meta-causal graph, with associated theorems and learning framework. 

\subsection{Connections to Model-Based RL and Probabilistic Graphical Models}
Our framework can be viewed as a special case of model-based RL (MBRL): we learn a world model and use it to plan. 
The difference lies in the causal structure we learn and use. 
Classical factored or relational MBRL methods learn structured predictive models~\citep{guestrin2003efficient,lang2012exploration,ross2008model}, which improve sample-efficiency but typically remain correlational.
By contrast, we explicitly model a Meta-causal graph whose edges carry interventional semantics (actions are treated as interventions), and we pair structure learning with intervention design and identifiability analysis for context-dependent mechanisms (meta states).

From the viewpoint of probabilistic graphical models (PGMs), our method instantiates a structural causal model (SCM) with a discrete context variable $u$ (the meta state): each context selects a graph $\mathcal{G}_u$ and its causal skeleton $M_u$, while the decoder $D$ outputs a probabilistic mask $\hat M_u$ that we discretize via Gumbel reparameterization. 
This preserves the graphical factorization of a PGM but endows it with \emph{interventional} semantics in the sense of Pearl’s do-calculus \citep{pearl2009causality}. 
Practically, this differs from standard PGM-based world models by (i) targeting \emph{causal} edges rather than solely predictive factors, (ii) using curiosity-driven interventions to break equivalence classes, and (iii) handling changing mechanisms via meta states with an adaptive, minimal codebook.

Finally, our curiosity module is compatible with latent-variable exploration used in POMDP-style MBRL: intrinsic objectives (entropy, feature discrepancy, predictive-NLL, accuracy) can be plugged in as information-seeking criteria, but here they are \emph{aimed at} reducing uncertainty over \emph{causal structure} (meta states and edges), not only over hidden state trajectories.

\section{Theoretical Connections between Meta-Causal Graphs, Open-Endness, and Gödel Machines}

Open-endedness is defined in terms of continuous novelty and learnability, highlighting its significance for creating artificial superhuman intelligence (ASI)~\citep{hughes2024open}. Additionally, the Gödel Machine concept proposed encapsulates self-referential improvement mechanisms through formal proof-driven code rewriting. Here, we analyze the theoretical relationships between the MCG framework, open-endedness, and the Gödel Machine~\citep{schmidhuber2007godel}.

\subsection{Open-Endness and Meta-Causal Graphs}
The Meta-Causal Graph framework establishes connections with open-endedness and self-improving systems. Open-endedness is characterized by continuous generation of novelty within comprehensible boundaries, a foundational requirement for advanced artificial intelligence systems. Our MCG framework demonstrates key open-ended properties in several fundamental ways:

First, the curious causality-seeking agent operationalizes open-ended exploration through a curiosity-driven intervention strategy. By maximizing entropy-based reward signals focused on regions of causal uncertainty, the agent persistently generates novel interventions and discoveries. This mechanism directly addresses the novelty requirement of open-endedness, as the agent autonomously uncovers and probes previously uncharted causal relationships and latent meta states.

The MCG framework maintains interpretability by organizing new causal knowledge within an explicit meta-causal graph. As new mechanisms are discovered, they are systematically integrated, ensuring ongoing comprehensibility despite growing complexity. Vector quantization further enables efficient and semantically clear assignment of novel observations to distinct meta states.

The MCG framework thus marks a step toward genuinely open-ended learning systems capable of autonomously exploring, understanding, and adapting to complex, dynamic environments with evolving causal structures.

\subsection{Gödel Machines and Meta-Causal Graphs}
It is important to note that while the MCG framework does not implement Gödel Machine-style code-level self-rewriting, its core mechanism nevertheless embodies a form of self-improvement focused on its own knowledge structure (the Meta-Causal Graph). The curious causality-seeking agent, driven by curiosity-based rewards, actively collects new data and, upon encountering prediction failures or high uncertainty, dynamically refines and expands its set of causal subgraphs and meta state mappings. This process ensures the agent can continually update and improve its world model in open environments, achieving self-correction and knowledge-level self-evolution. 

\section{Implementation Details}\label{appendix:implementation}
    \subsection{Meta-Causal Graph Learning from Experience}
    We learn causal subgraphs under latent meta states from agent experience. Following FCDL’s pioneering use of the VQ-VAE architecture for causal-graph discovery \citep{hwang2024fine}, we adopt the original VQ-VAE framework of van den Oord et al. \citep{van2017neural} to learn the causal subgraph for each meta state. Specifically, the process contains two steps: (1) identifying the latent meta state, which activates a specific causal subgraph, and (2) learning the subgraph from the agent's explored experience data.
    
    \textbf{Identifying Latent Meta State:} We embed the observed state and assign it to a corresponding meta state $u \in U$ by vector quantization as follows. We define the meta state assignment as $C(x) \;=\;\arg\min_{u\in U}\bigl\lVert E(x) - z_{u}\bigr\rVert_{2}^{2},$ where \(E\colon \mathcal{X}\to\mathbb{R}^{d}\) is an encoder mapping the observed state \(x\) to a \(d\)-dimensional embedding, and \(\{z_{u}\}_{u\in U}\subset\mathbb{R}^{d}\) is a learnable codebook of prototype embeddings, each representing a distinct meta state.


    \textbf{Subgraph Learning:} After obtaining the embedding \(z_u\) for a given state, we employ a decoder network 
$D: \mathbb{R}^d \;\to\; [0,1]^{p\times p}$
to predict a probability matrix \(\hat{M}_u\), which estimates the underlying causal skeleton matrix \(M_u\) of \(\mathcal{G}_u\).  Each entry \(\hat{M}_u[i,j]\) denotes the probability that the directed edge \(i \to j\) is present in causal subgraph associated with meta state \(u\).  We then sample each \(M_u[i,j]\) from \(\mathrm{Bernoulli}(\hat{M}_u[i,j])\) using the Gumbel‐Softmax reparameterization~\citep{jang2016categorical,maddison2016concrete}, enabling end‐to‐end gradient‐based learning of the discrete structure. 
To update the codebook, we use the following update rule:
    \begin{equation*}
        \mathcal{L}_{\text{quantization}}=\|\text{sg}(E_\phi(X))-z_u\|_2^2+\beta\|\text{sg}(z_u)-E_\phi(X))\|_2^2,
    \end{equation*}
    where $\text{sg}(\cdot)$ is the stop-gradient operator. The first term is the reconstruction loss and the second term is the commitment loss to avoid the output of the encoder growing arbitrarily~\citep{van2017neural}. 

\end{document}